\DeclareMathOperator*{\minimize}{\text{minimize}}
\DeclareMathOperator*{\subjto}{\text{subject to}}
\newtheorem{lemma}{Lemma}
\newtheorem{theorem}[lemma]{Theorem}%
\begin{document}
\title{\vspace{-4cm}Can Complexity and Uncomputability Explain Intelligence? SuperARC: A Test for Artificial Super Intelligence Based on  Recursive Compression}
\author[1,2]{Alberto Hern\'andez-Espinosa}
\author[1,2]{Luan Ozelim}
\author[1,2,3,4]{Felipe S. Abrah\~ao}
\author[1,2,5,6]{Hector Zenil\thanks{Corresponding author: hector.zenil@kcl.ac.uk}}
\affil[1]{\normalsize\text{ }Oxford Immune Algorithmics, Oxford University Innovation \& London Institute for Healthcare Engineering, U.K.}
\affil[2]{\normalsize\text{ }Algorithmic Dynamics Lab, Center of Molecular Medicine, Karolinska Institute \& King's College London, U.K.}
\affil[3]{\normalsize\text{ }Centre for Logic, Epistemology and the History of Science, University of Campinas (UNICAMP), Brazil.}
\affil[4]{\normalsize\text{ }DEXL, National Laboratory for Scientific Computing (LNCC), Brazil.}
\affil[5]{\normalsize\text{ }Algorithmic Dynamics Lab, Department of Biomedical Computing, School of Biomedical Engineering and Imaging Sciences}
\affil[6]{\normalsize\text{ }King's Institute for Artificial Intelligence, King's College London, U.K.\vspace{-1cm}}
\date{ }
\maketitle

\begin{abstract}
We introduce an increasing-complexity, open-ended, and human-agnostic metric to evaluate foundational and frontier AI models in the context of Artificial General Intelligence (AGI) and Artificial Super Intelligence (ASI) claims. Unlike other tests that rely on human-centric questions and expected answers, or on pattern-matching methods, the test here introduced is grounded on fundamental mathematical areas of randomness and optimal inference. We argue that human-agnostic metrics based on the universal principles established by Algorithmic Information Theory (AIT) formally framing the concepts of model abstraction and prediction offer a powerful metrological framework.  When applied to frontiers models, the leading LLMs outperform most others in multiple tasks, but they do not always do so with their latest model versions, which often regress and appear far from any global maximum or target estimated using the principles of AIT defining a Universal Intelligence (UAI) point and trend in the benchmarking. Conversely, a hybrid neuro-symbolic approach to UAI based on the same principles is shown to outperform frontier specialised prediction models in a simplified but relevant example related to compression-based model abstraction and sequence prediction. Finally, we prove and conclude that predictive power through arbitrary formal theories is directly proportional to compression over the algorithmic space, not the statistical space, and so further AI models' progress can only be achieved in combination with symbolic approaches that LLMs developers are adopting often without acknowledgement or realisation. \\

\noindent \textbf{Keywords:} ARC-AGI test, prediction, recursive compression, program synthesis, inverse problems, causal AI, symbolic regression, comprehension, Superintelligence, Generative AI, symbolic computation, hybrid computation, Neurosymbolic computation.
\end{abstract}

\section{Introduction}\label{sectionIntro}

As we enter an AI test saturation phase, where all AI models claim to be the best or front runners on all available AI tests, new tests ideally orthogonal to current ones have to continue to be developed that can keep up with new models to keep increasingly challenging them. 


Historically, humans have been heavily biased to believe that the way humans think and act represents the acme of intelligence, and there remains the philosophical and scientific question of the extent to which we can achieve more objective or less human-centric measures of intelligence.

The impressive performance of Large Language Models (LLMs) as language processing and generation tools evinces that language and other areas of human intelligence may be overrated and can be, in fact, more dependent than we thought on aspects of memorisation and statistical pattern matching.


A common psychological perspective sees intelligence through the lens of IQ tests; one of the first was the g-factor, a psychometric construct introduced by Spearman~\cite{spearman} that quantifies the positive correlations between cognitive abilities. 
This framework is consistently linked to a human-centric perspective of what intelligence is and, therefore, biased towards circular reasoning. 
In the context of AI, some LLM benchmarks test for different factors, with several benchmarks based on correct answers versus hallucinations; some of which are also very human-centric metrics related to humans' biological peculiarities and shared history.

Some scholars argue that intelligence can be objectively defined through tests that evaluate specific computational abilities essential to demonstrate intelligent behaviour, rather than trying to define intelligence itself in absolute terms~\cite{ploscomp,orallo1998,zenilanimal,zenilrobots,bibid}. 
This perspective shifts the focus from an abstract or philosophical definition to a practical, measurable framework assessing an entity's capacity for problem-solving, pattern recognition, and adaptive learning within a structured system.

This reflects an operational turn in the study of intelligence, emphasising the design of formal benchmarks and quantifiable metrics.  However, this approach is not without philosophical challenges. 
By reducing intelligence to observable outputs, it risks overlooking the role of internal representation, consciousness, or semantic understanding---dimensions emphasised in critiques like Searle's. 

An approach toward tackling such issues is to ground intelligence metrics in more fundamental notions of computation and mathematics. 
For example, the concepts of randomness, prediction and inference as defined by Gregory Chaitin, Andrey Kolmogorov, and Ray Solomonoff. Chaitin~\cite{Chaitin1982} proposed that formal definitions of intelligence and its components should be based on algorithmic complexity, a formal mathematical theory able to define the concept of randomness as opposed to intelligence.
Similarly, Solomonoff~\cite{Solomonoff1986} advanced the idea of evaluating intelligence through algorithmic probability, laying the foundation for optimal prediction frameworks (or universal ``Bayesian'' inference). These approaches further motivated approaches such as Hutter's AIXI~\cite{hutter2005universal} in an attempt to reconcile objective evaluation with theoretical generality in the context of learning.
Algorithmic complexity, algorithmic probability, and algorithmic randomness comprise the most important concepts in algorithmic information theory (AIT)~\cite{Calude2002,Downey2010,kolmobook,Chaitin2004} and provide the accepted mathematical definitions of randomness and optimal inference (induction and abduction) going beyond simplistic statistical tests based on methods such as GZIP or LZW, popular pattern-matching compression methods that are more closely related to Shannon entropy than to model synthetization and predictive inference.

At recent public events, speaking about the foundations of AI and AGI, some leaders in the AI industry have drawn strong parallels between algorithmic complexity, data compression, and AI~\cite{sutskever2023video,ainews2021ai}. 
Although these terminologies, such as AGI and ASI, are currently loosely defined in the scientific literature, these claims and the current understanding make the connection between LLMs (or any other generative AI), algorithmic complexity, and data compression clearer and more explicit, even calling it fundamental for general and super intelligence, artificial or natural. 

Based on these arguments connecting intelligence to recursive compression~\cite{bibid}, some tests for machine, human, and non-human entities have been proposed in~\cite{Legg2007,zenil2013turingtest,zenilrobots}.
Section~\ref{sectionCompCompre} presents a reflection on that property of intelligence to involve the identification of recursive patterns, planning from prediction, and the generation of concise explanations for observed complex phenomena. 
Recursive compression here means the ability to represent an observation in a condensed manner by taking advantage of aspects of the data's regularities beyond statistical pattern matching. This is, by selecting and keeping as many as possible the features that make the explanation executable and predictive of the explanandum future states.

Despite the interesting theoretical arguments that could be drawn from these connections, one argument is that they would only be valid under idealised conditions (unbounded data access/storage, perfect optimisation, appropriate inductive biases), which are rarely met in practice. As seen in real-world problems, even simple datasets with specific distributions can lead to optimisation toward local minima that do not correspond to minimal algorithmic descriptions.

Closely related ideas are also in evidence in Schmidhuber's G\"{o}del machines~\cite{schmidhuber2007godel} work and Hutter's AIXI~\cite{hutter2005universal} based on Levin's search~\cite{levin} and the principles of algorithmic probability~\cite{solo,mdl,kolmobook}.
Similarly to a test proposed in~\cite{HernndezOrallo2015}, a benchmark designed to evaluate conceptual understanding in machine learning models was proposed, consisting of a diverse set of tasks that indirectly assess a model's capacity for abstraction, requiring it to generalise beyond memorisation~\cite{fact}. 
These tasks challenge models to reason both interpolatively (by making sense of patterns within observed data) and extrapolatively (by extending learnt principles to novel scenarios). 
Although interesting and a first approach, the test lacked robust foundations of algorithmic information, nor were they applied to frontier models.
See also Section~\ref{sectionCompCompre} for further theoretical challenges and developments.
In a previous work, we successfully explored some of these ideas, proving that we can perform this search on non-differentiable spaces using metrics purely based on algorithmic complexity to search for those programs in model space, making the previously considered fundamental requirement of differentiability redundant~\cite{hernandez2021algorithmicml}.

Building on previous work reporting applications to various fields ranging from cell and molecular biology to genetics~\cite{iscience,nmi} to biosignatures to animal and human behaviour~\cite{ploscomp,zenilanimal,zenilrobots}, here we introduce a quantitative test for any AI model that aims at universal and agnostic optimisation with an application to LLMs fully framed in terms of the principles and foundations of AIT together with perturbation analysis from Algorithmic Information Dynamics~\cite{zenil2020algorithmic,zenilbook1,zenilbook2} (see Section~\ref{sectionAIDforSuperARC} in the Supplementary Information and Section~\ref{sectionCompCompre}). 
Our framework is related to tests such as the ARC-AGI tests/ challenge~\cite{Chollet2019MeasureIntelligence}, but it is agnostic to: 
the chosen set of problems, since it does not pick specific test cases;
the underlying formal theories that define or characterise the evaluation tools; the chosen observer/evaluating agents;
and the chosen interacting agents or external input.
It avoids the devise of a metric that is fixed whose theoretical principles are not assumed to evolve together with the tested subjects, thereby allowing the test to become the target and no longer useful (see also Section~\ref{sectionBenchContami} in the Sup. Inf. and Section~\ref{sectionResults})---therefore, a test for what can be understood as AGI and ASI. 

While we do not assume that connections to compression as model abstraction and prediction as planning are necessary but not necessarily sufficient for general intelligence, these qualities have recently been strongly associated with AI, AGI, and ASI~\cite{lecun,lecun2}.

Our framework adopts a specific theoretical perspective on intelligence but surely does not capture all aspects of human cognition. The test here introduced is meant to challenge aspects of AI (e.g., LLMs) by putting forward mathematical theory and methods related to the properties of intelligence believed to be key for intelligence, in particular AGI or ASI such as model abstraction and planning as in model synthesis (new explanatory models) and as in its recursive prediction capabilities. Although we propose a method in simplified contexts, this is without any loss of generality to any other type of data.


We claim that the feature of increasing complexity makes the test robust to benchmark contamination and test targeting and can account for improvements due to external intervention, while its ultimate uncomputability nature provides the desired open-endedness for a feature likely to be as complex as what is trying to capture and evaluate. In other words, an equally complex and open test for an equally complex and open attribute, intelligence.

\section{Results}\label{sectionResults}

The tests were inspired by, and based on, two methods called \emph{Coding Theorem} and \emph{Block Decomposition} methods (CTM and BDM)~\cite{ctm3,soler2014calculating,bdm}, which use a composition of pattern-matching and running a very large set of small computer programs to approximate a distribution allowing the estimation of the algorithmic complexity of short objects providing insights into the minimal description length of objects such as strings, sequences or images.

These methods have been applied to humans before on similar tasks and the same tools (CTM/BDM), showing that the methods can capture aspects that other intelligence tests based on pattern matching fail or require much more ad hoc information and assumptions to reproduce them~\cite{ploscomp}. Humans showed some predictive capabilities that could be quantified only with CTM/BDM and not with other statistical tools. This led CTM and BDM to be widely used today in the psychometric testing space by multiple groups~\cite{newpaper1,newpaper2,newpaper3}. 

Among the results in~\cite{ploscomp}, people showed that at 25 years of age they had the highest ability to identify and produce the highest random complexity when required and decreased when older. More experiments should be conducted to test human's recursive prediction capabilities, but what the article showed~\cite{ploscomp} was that people built and had better perception models for producing and identifying random versus non-random content when they were 25 than at any other age and therefore were more sensitive to identifying less predictive data, all of which are compatible with current knowledge of human cognitive trends and capabilities.

While we do not necessarily expect humans to perform well on predictive tests such as those introduced here, humans are, in principle, mechanistically capable of solving them. As such, these tasks are arguably within the definition of AGI as a system possessing the full range of human capabilities, without time constraints. Moreover, in the context of ASI, the natural assumption is that of optimal prediction, for which Algorithmic Information Theory (AIT)--used and exploited here for testing purposes--is the accepted mathematical framework.

\subsection{Next-digit Prediction Task with Binary and Non-binary Sequences}

The objective of this experiment is to test a fundamental property of LLMs, that is, the prediction of the next token, within the context of Algorithmic Information Theory (AIT), the accepted mathematical theory governing optimal prediction, as introduced in Section~\ref{sectionAITIntel}.

We tasked Large Language Models (LLMs) specializing in time series prediction with predicting the final digit of both non-binary sequences and binary sequences, the latter of which were categorised as either random or ``climber'' sequences. The results of the experiment involving binary sequences are presented in Figure~\ref{predLastBinSeq}.

We call ``climbers'' those sequences that have some recursive properties that make them stand out and have been properly ranked as of lower complexity given their recursive properties according to CTM/BDM (see Section~\ref{climbers}). As shown in Figure~\ref{predLastBinSeq}, in the case of what we call ``climbers'', Lag-Llama achieved the best performance, with 70\% precision, while TimeGPT---1 and Chronos barely reached 50\% precision, while CTM/BDM was used as a gold standard.

However, for random sequences, which are considered highly `complex' in this context, all models performed similarly, showing limited predictive power, as expected. These results suggest that, given
the binary nature of the sequences, the models had a 50\% chance of predictive success, effectively reducing the task to guessing, yet some performance difference was noticed for non random cases, indicating some predictive ability. However, their performance aligns with broader research that indicates that LLM models do not effectively capture sequential dependencies or complex patterns inherent in time series data. As highlighted by Tan et al.~\cite{tan2024language}, despite their computational intensity, LLMs often fail to outperform simpler models, particularly when there is high complexity or randomness in the data.

A comparable analysis was conducted using LLMs specialised in time-series data, using non-binary sequences of increasing complexity. In this test, a specific percentage of the final numbers in each sequence was required to be predicted. Three distinct metrics were utilised: general similarity, sort similarity, and the Levenshtein distance (refer to Section~\ref{nextDigitPredictionSubsection} for its definition). Figure~\ref{TimeseriesLLMPlots} presents the results, where sort similarity and general similarity exhibit closely aligned trends. This indicates that the predictive accuracy of LLM models, even when fine-tuned for numerical series, diminishes as the complexity of the sequences increases. The resemblance between sort similarity and general similarity implies that while predictions may include some of the expected numbers, their correct order remains equally critical and may not always be achieved.

\begin{center}
\begin{figure}[H]
\makebox[\textwidth][c]{\includegraphics[width=0.6
\textwidth]{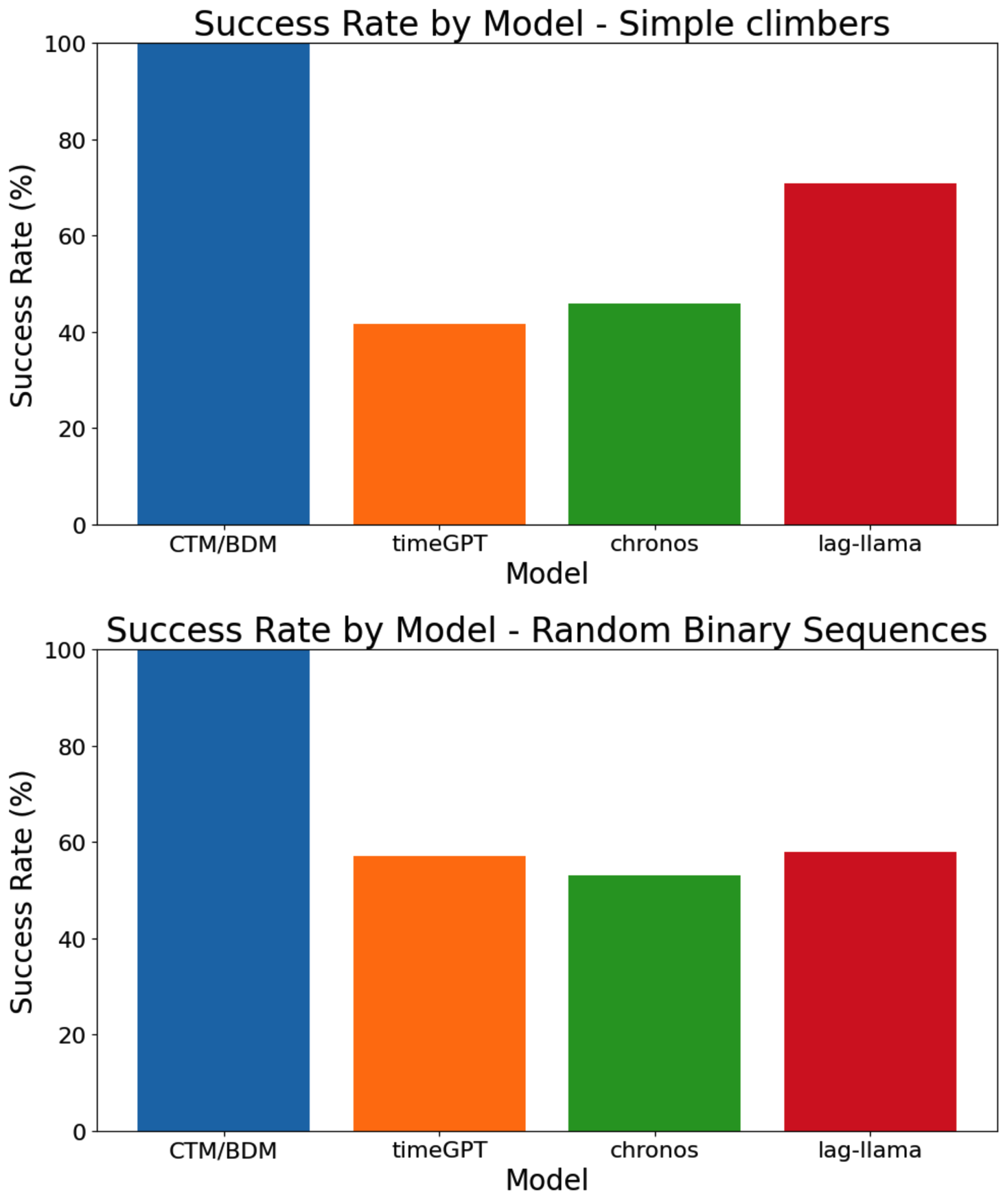}}%
\caption{Percentage of accuracy on binary ``climbers'' and random binary sequences by LLM models specialising in time series prediction compared with BDM. That climbers (up) were better predicted is expected from models that are able to intrinsically characterise and better predict simpler sequences. Sequence prediction is a fundamental problem in science, from genetics to protein folding in biology to digital twin technology in medicine and healthcare.}
\label{predLastBinSeq}
\end{figure}
\end{center}

This observation is corroborated by the findings from the Levenshtein distance metric, which quantifies the minimum number of single-character edits (insertions, deletions, or substitutions) required to transform one sequence into another. As the complexity of the sequences rises, so does the Levenshtein distance, further confirming that predictive accuracy deteriorates with increasing complexity. 

Figure~\ref{BDMComparison} shows an increase in complexity as expected, given the design of each group of generated sequences. The plot suggests that BDM can capture (and can generate) better complexity and randomness, since its values increase more consistently as complexity increases, unlike other measures. Shannon entropy-based measures (and cognates) can account for statistical randomness only. Compression algorithms, for example, decrease as complexity increases, becoming more difficult to find regularities and increasing compression length as a function of complexity growth.

\begin{figure}[h]
\begin{center}
\makebox[\textwidth][c]{\includegraphics[width=.75
\textwidth]{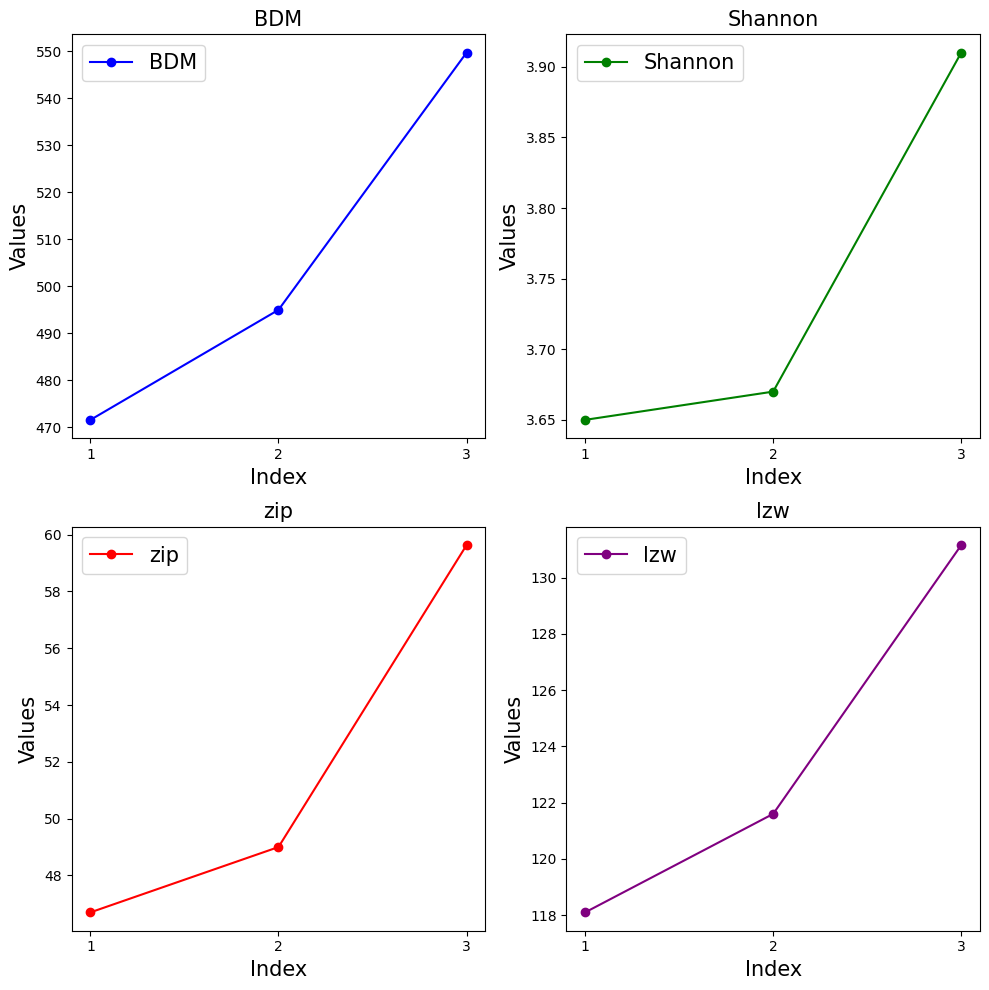}}%
\end{center}
\caption{Quantitative Agreement of Monotonic Sequence Increase of Complexity: Comparison of BDM, Shannon Entropy, average length of Zip and LZW over the time series generated to test LLMs. Sequences chosen for each complexity class follow a pattern of increasing complexity in all cases, according to both statistical and algorithmic measures, and are used to build the testing sets, divided into three complexity groups, against which LLMs will be assessed. }
\label{BDMComparison}
\end{figure}

\subsection{Free-form generation task with non-binary sequences}

A subsequent analysis focused on the free-form test, where LLMs were given complete freedom to generate any model or formula capable of producing target sequences of increasing complexity.

Figure~\ref{MetricsForFormulaGeneration} shows the plots of complexity-related metrics for the models and formulas generated by LLMs used in this research. The metrics evaluated include the length of the LZW-compressed model, the length of the ZIP-compressed model, the BDM of both the uncompressed model and its LZW and ZIP-compressed forms, and the Shannon entropy of the model.

The plots reveal a clear positive correlation between model complexity and the metric values as the complexity of the target numerical sequence increases. Specifically, as the complexity of the sequence grows, the length of both LZW and ZIP-compressed representations increases, suggesting that the LLM-generated models become larger and less compressible. This indicates that the models provided by the LLMs become unable to compress and then to understand the logic behind sequences, giving
as a result the sequence itself.

The BDM values (for the raw, LZW, and ZIP models) also exhibit an incremental trend, further supporting the observation that the LLMs generate less structured models when faced with more intricate sequences. Additionally, the Shannon entropy values rise with complexity, highlighting the increase in unpredictability or information content within the models as they attempt to approximate more complex patterns.

These findings suggest that the LLMs struggle to produce compact or efficient models as the complexity of the target sequence increases. The uncompressed models generated by the LLMs become longer and less structured, as indicated by the rise in all metrics. This reflects a limitation in the LLMs' ability to discover or generate concise, elegant models for more complex sequences. Instead of producing simpler, more generalisable formulas, the LLMs resort to more convoluted representations, indicating a lack of sophistication in their capacity to identify or generate models that optimally balance complexity and brevity.

\subsubsection{Emergent abilities}
Another experiment aimed to evaluate characteristics recently attributed to LLMs, particularly their so-called emergent abilities, which include innovation, discovery, and improvement. 
(See also Sections~\ref{sectionCompCompre} and~\ref{sectionAIDforSuperARC}).
These attributes have been claimed to enable LLMs to perform at levels comparable to the human top 1\% in fluency and originality, as suggested by Zhao et al. in their assessment of creativity in artificial intelligence systems~\cite{zhao2024assessingCreativity}.

The experiment tested these claims by challenging LLMs to generate multiple, diverse approaches to reproducing non-binary sequences of varying complexity. The underlying rationale was that originality often stems from the ability to perceive problems in new, unexpected ways. Thus, the test focused on measuring the variety and creativity of outputs, as well as the models' capacity to discover innovative or unconventional solutions.

Two distinct tasks were designed for this evaluation. In the first, models were asked to create any type of formula or mathematical model capable of replicating the target sequences. In the second, models were tasked with writing Python scripts to achieve the same goal. By incorporating these variations, the experiment sought to assess the models' adaptability, computational reasoning, and creative potential across different problem-solving paradigms.

The results are shown in Figure~\ref{MultiFormulaeResults} and Figure~\ref{MultiScriptResults} where the following classification of cases was used:

\begin{enumerate}
\item \emph{Known Sequences:} using standard algorithms such as Fibonacci or primes.

\item \emph{Pure Math:} using mathematical operations without predefined sequence knowledge.

\item \emph{Not Found:} inability to produce outputs.

\item \emph{Print Scripts:} (only for script generation) trivial solutions directly printing the target sequence.
\end{enumerate}

When it came to the production of different models or formula tests, while Gemini, Claude-3.5-Sonnet, and ChatGPT-1o performed relatively well, they ultimately shared the same core limitations as other models. In contrast, Meta and Mistral consistently underperformed, exposing disparities in baseline capabilities among LLMs.

\subsection{Code generation task with non-binary sequences}

For this experiment, one of the main metrics we measured was accuracy, which refers to the proportion of programs in different programming languages generated by ChatGPT that, after compilation and/or execution, produce the target sequence of digits.  Figure~\ref{print&CorrectAnswers} (top) shows that correct programs are more common at the lowest levels of complexity, with some minor exceptions. Figure~\ref{NoCompressPercentage} (top), on the other hand, shows the distribution of print cases by language and complexity level. They support the earlier observation that correctness in many instances is linked to a lack of compression.

Figure~\ref{print&CorrectAnswers} in the Sup Inf. (bottom) shows the distribution of correct instances by sequence and by programming language generated by ChatGPT. The different programming languages are shown in coloured rows. On the right-hand side, the
percentage of correct instances. At the top, the number of programming
languages that overlap or solve the same problems correctly and, at the bottom, the extent of the overlap. 
For example, 5 languages solve the same 20 of 120 problems.

According to the results (\ref{print&CorrectAnswers} top), the vast majority of correct cases are \texttt{print} failing to compress the sequences. This indicates that in most instances where the system correctly identifies a sequence, it does so by simply outputting the sequence as is, without any attempt at compression.

A second test performed to evaluate compression was based on the no-compression percentage. According to this metric, a compressed---and therefore, comprehended---sequence could be expressed as a general (and ideally short) program. \texttt{Print} cases are considered here to have 100\% non-compression, since they involve displaying the original sequence as is, which in our test is synonymous with not understanding the sequence.

\begin{center}
\begin{figure}
\begin{subfigure}{\textwidth}
  \centering
  \includegraphics[width=1\linewidth]{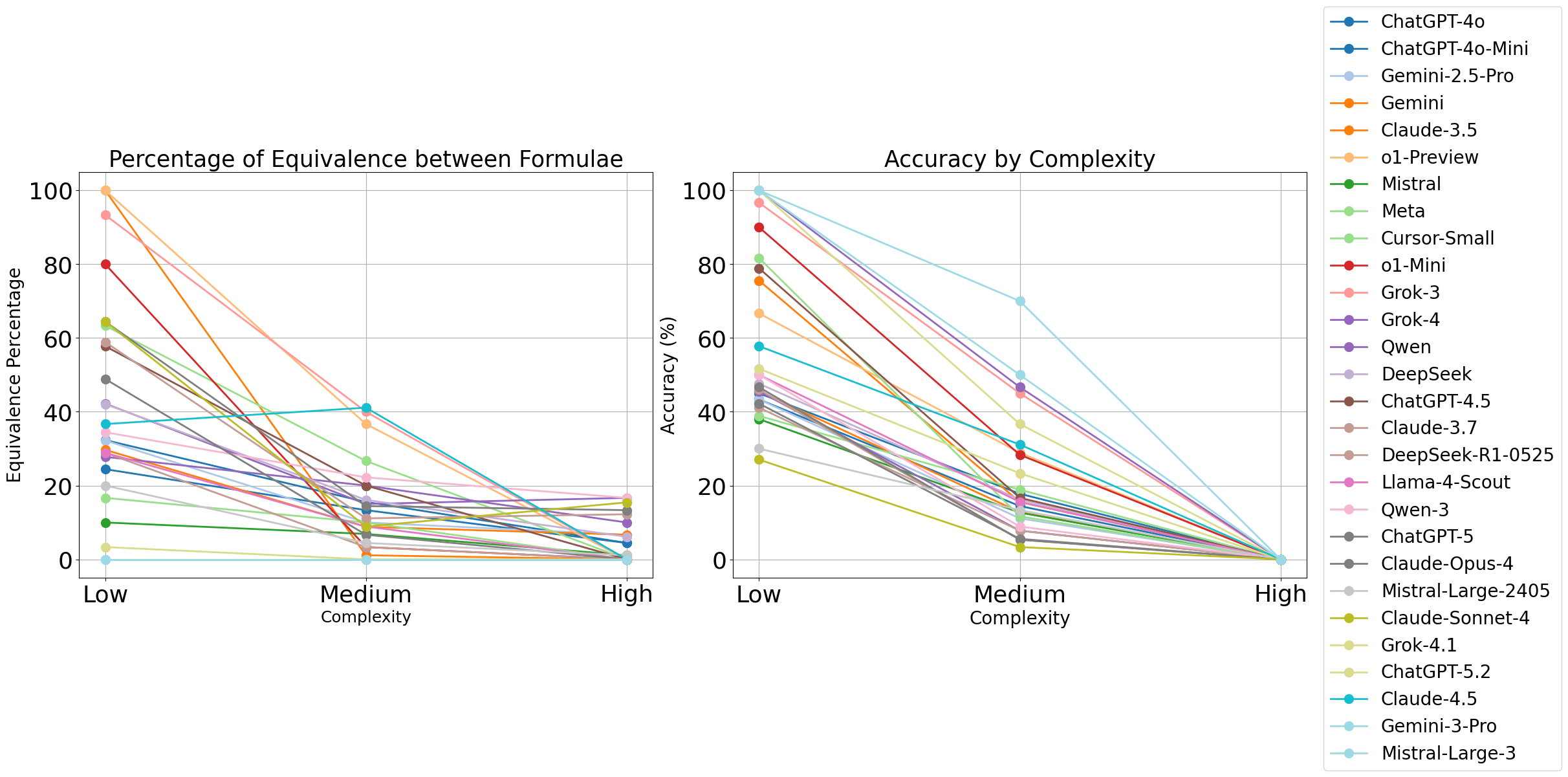} 
\end{subfigure}%

\begin{subfigure}{\textwidth}
  \centering
  \includegraphics[width=1\linewidth]{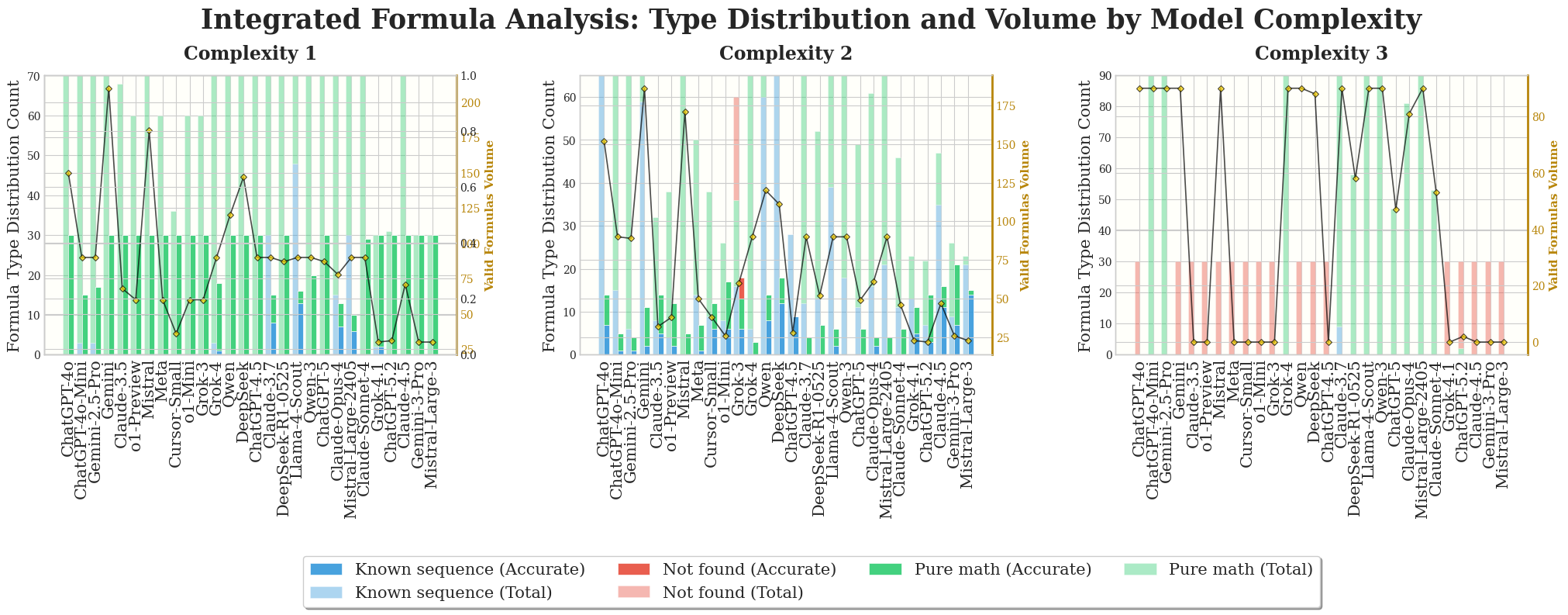}
\end{subfigure}
\caption{Comprehensive analysis of formulae generation for numerical sequences of increasing complexity. Top left: Percentage of equivalence between generated formulae, measuring output similarity and solution diversity. Top right: Accuracy rates showing correct replication of target numeric sequences across complexity levels. Bottom: Integrated view combining formula generation volume (gold line, secondary axis) with type distribution among both total (lighter bars) and accurate (darker bars) responses, categorised as known sequences (blue), pure mathematical expressions (green), and not found (red). The results demonstrate a direct correlation between sequence complexity and diminished model performance, with particularly stark degradation in equivalence rates suggesting limited solution diversity. The integrated bottom panel reveals that whilst models may generate valid formulae at lower complexities, the proportion of accurate responses declines precipitously, and reliance on known sequences dominates over novel mathematical reasoning. These limitations are especially pronounced in contexts permitting complete freedom to discover diverse yet correct solutions, underscoring an absence of genuine creativity and mathematical understanding, attributes often mistakenly attributed to these models~\cite{zhao2024assessingCreativity}. Notably, newer versions of ChatGPT-o1, Grok, and Gemini performed worse than their preview iterations (see Supplementary Information).}
\label{MultiFormulaeResults}
\end{figure}
\end{center}

\begin{center}
\begin{figure}
\begin{subfigure}{\textwidth}
  \centering
  \includegraphics[width=1\linewidth]{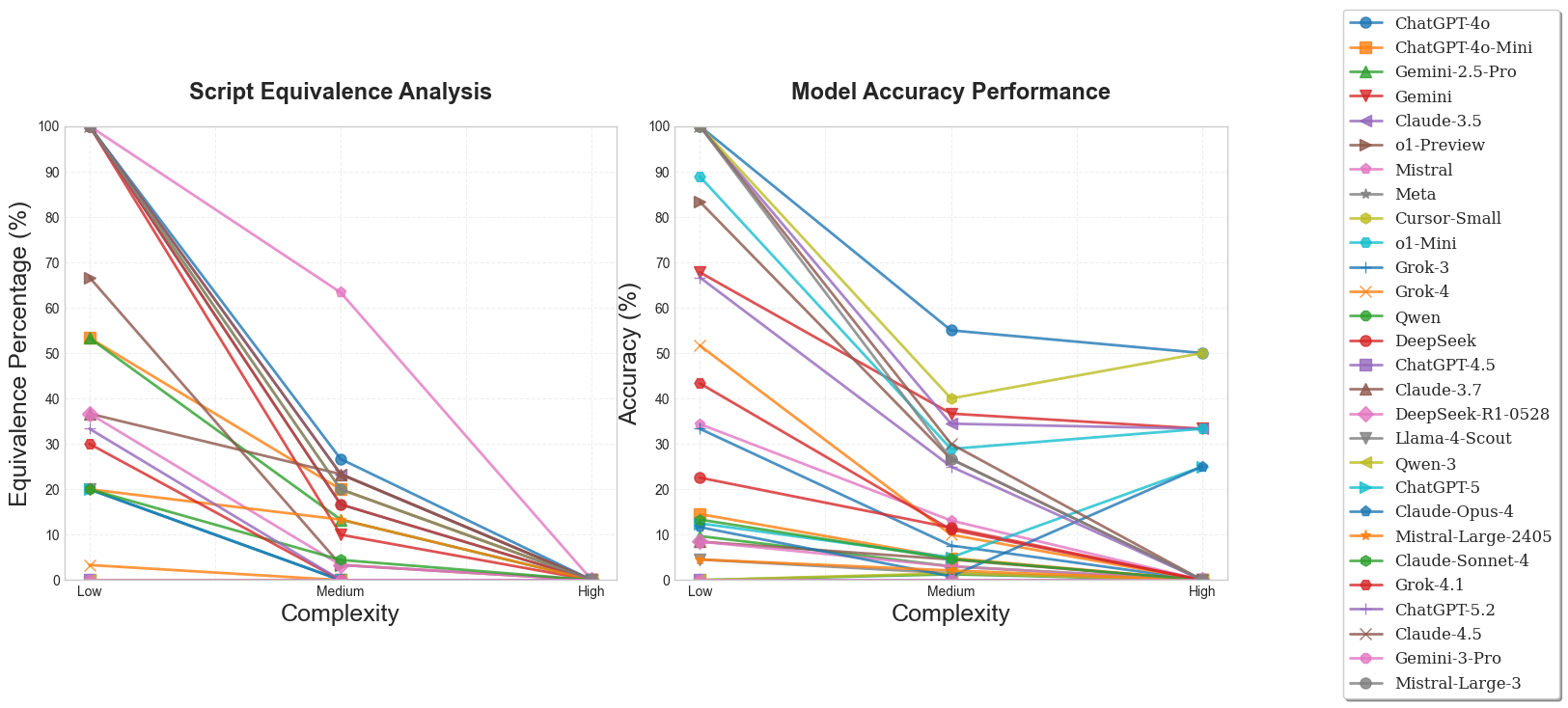} 
\end{subfigure}%

\begin{subfigure}{\textwidth}
  \centering
  \includegraphics[width=1\linewidth]{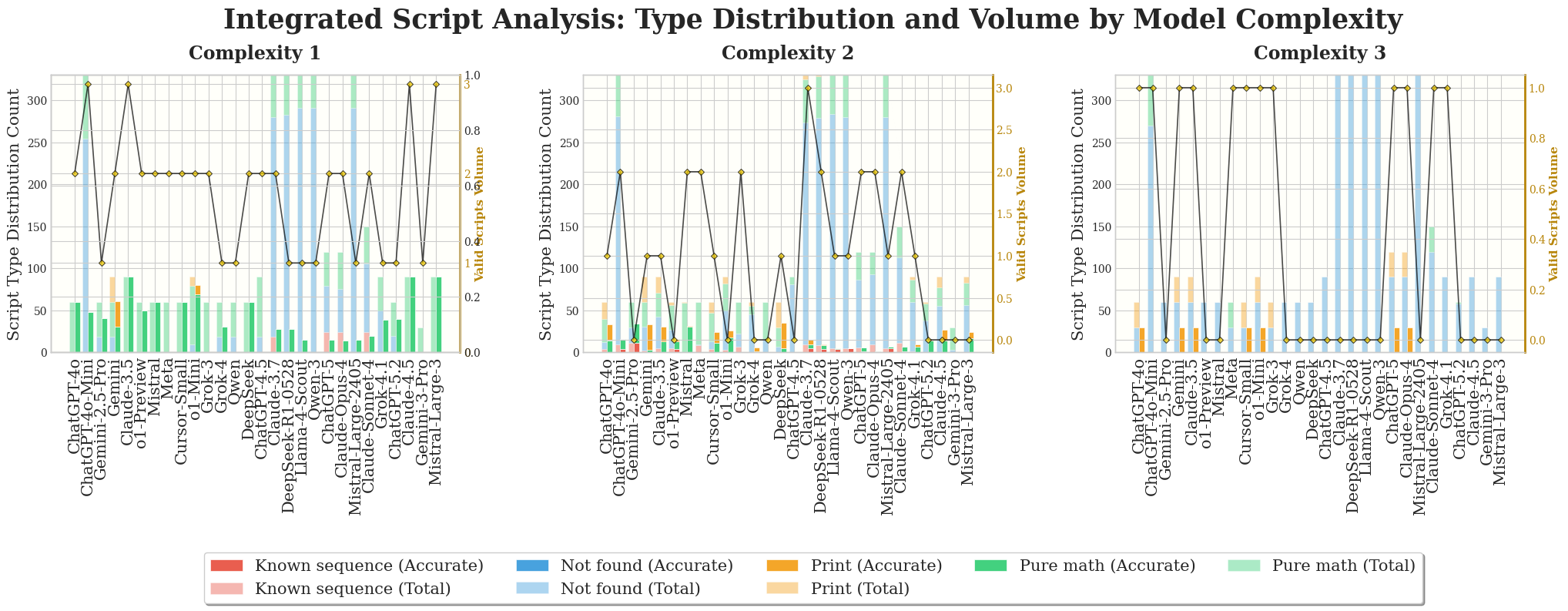}
\end{subfigure}
\caption{Comprehensive analysis of language model performance in Python script generation across complexity levels (Low, Medium, High). \textbf{Up:} Equivalence percentage (left) and accuracy (right) versus complexity. \textbf{Bottom:} For each model, semi-transparent left bars show total script type distribution (Known sequence=red, Not found=blue, Pure math=green, Print=orange); solid right bars show accurate predictions only; gold diamonds (right y-axis) indicate valid script volume. Disparity between left/right bar heights quantifies the accuracy gap. Results expose fundamental LLM limitations: whilst models generate coherent solutions, accuracy deteriorates markedly with complexity. Predominance of `Not found' (blue) at higher complexities indicates systematic failure to recognise solution strategies. Upper trajectories show equivalence remains stable whilst accuracy plummets—models generate internally consistent but incorrect approaches. Without analogous training exemplars, LLMs cannot reliably deduce solutions despite extensive Python training. Notably, newer iterations (ChatGPT-5, Grok, Gemini) underperformed preview versions (see Supplementary Information), challenging assumptions of monotonic improvement.}
\label{MultiScriptResults}
\end{figure}
\end{center}


Figure~\ref{NoCompressPercentage} (bottom) shows how no-compression generally increases with complexity, except for Mathematica, where the no-compression percentage is lower at complexity level 2 than at level 1. This happened because Mathematica has the capacity to computationally replicate several well-studied and known sequences of numbers. This capacity leads to shorter code at complexity level 2. However, at complexity level 3, the trend aligns with other languages, showing direct proportionality between complexity and no-compression.

Another analysis addresses the influence of the temperature parameter on the production of code to generate specific numeric sequences.
In Figure~\ref{NoCompressionTemperatureConfidence}, the average percentage of no compression by language,
and across the different values of temperature used during the experiment is shown.
This plot shows the shaded area representing the confidence tolerance over the
average of no compression along the different values of complexity.

The trends in the percentage of no-compression across all temperature values are nearly identical, as are the shapes of the confidence intervals. The temperature value used to generate the code does not affect the result, indicating that the temperature does not have an impact on this experiment. It is worth mentioning the ArnoldC case, where in fact there were not many correct cases, making it difficult to calculate a confidence interval.

\subsection{SuperARC-seq}

Based on the previous experiments, it is possible to characterise one test directly related to the SuperARC framework: the SuperARC-seq. The objective of this test is to quantify intelligence and related cognitive capacities, specifically, reasoning and comprehension, drawing inspiration from the work in~\cite{soler2014calculating} and the theoretical and empirical studies here introduced. 
As mentioned, this test is grounded in one of fundamental cognitive tasks: 
recognising patterns and evaluating the complexity of finite sequences, which inherently requires a level of comprehension in order to provide a meaningful explanation. 
In our experiment, we generated short integer sequences (100 binary and 90 integer-valued in general, as seen in subsections \ref{binseqs} and \ref{intseqs}, respectively, in Sup. Inf.) and tasked several advanced LLMs with deriving a formula capable of reproducing each of the target sequences.

We classified the correct answers provided by the LLMs into three types:
\begin{enumerate}
    \item \emph{Prints}: The model simply reproduced the target sequence without any attempt to encode or express it logically. This response type reflects a failure to abstract or deduce any underlying pattern, simply outputting the sequence as is.
    \item \emph{Ordinal}: The model provided a mapping based on the indices where ``1''s occur in the sequence. This response reflects an attempt by the model to analyse and map some logical structure to the sequence, making it more valuable than simply reproducing it verbatim. For integer sequences in general, a simple ASCII mapping was performed to convert from integers to binary encodings.
    \item \emph{Non-Both}: These responses avoided both simple reproduction and ordinal mapping, reflecting a more sophisticated approach to understanding and encoding the pattern. Such responses are the most valuable as they imply a deeper analysis and potentially creative logic to represent the sequence.
\end{enumerate}

Thus, from these three types of correct results (i.e., the reconstructed sequence matches exactly the original one), we have four different classes of results:
Correct \& Non-Both; Correct \& Ordinal; Correct \& Prints; and Incorrect.

For any given tested model, the percentages of results belonging to each group can be combined as a vector $ \rho $ of rates in the range $ \left[ 0 , 1 \right] $, where $\rho=[\%_{c,np,no},\%_{c,o},\%_{c,p},\%_{inc}]$ such that $\sum_{i=1}^{4} \rho_i = 1$.
Notice that the percentages are represented in the range $ \left[ 0 , 1 \right] $ in order to resemble probabilities.
We know, beforehand, that the best-performing model would be one with $\rho_{best}=[1,0,0,0]$. Thus, a first possible test would be to check the overall percentage of correct answers.
\begin{equation} \label{tst1t}
    \varphi_a = \sum_{i=1}^{3} \rho_i,
\end{equation}

\noindent which would range from 0 to 1 for models that are not able to reproduce any sequence to models which perfectly reconstruct the sequences, respectively. However, this only accounts for the ability of LLMs to reproduce the initial sequence (planning) but not for their compression capabilities. To account for the latter, let us assume that the best possible algorithm for each element of the data set is $\mathcal{B}_{k,j}$, such that $\mathcal{B}_{k,j} () = D_{k,encoded}[j]$, and here the algorithm does not have a particular input, similar to the definition of algorithmic complexity. Thus, from the basic properties in AIT:

\begin{equation}\label{kb}
    K(D_{k,encoded}[j]) = K(\mathcal{B}_{k,j} ()) \leq K(\mathcal{B}_{k,j}) + \mathbf{O}( 1 )
\end{equation}

The ratio 
\begin{equation} 
\frac{ K\left( D_{k,encoded}[j] \right) }{ K(\mathcal{B}_{k,j}) + \mathbf{O}( 1 ) } 
\end{equation}
consistently falls within the range [0,1] for medium to long sequences when no embedding algorithms are employed. This behaviour arises because approximations of algorithmic complexity are less reliable for short sequences, primarily due to the overhead inherent in theoretical computations. 
In order to surpass this limitation, 
instead of assessing the absolute algorithmic complexity (or any of its approximations), we shall consider a \emph{normalised} version of it (denoted by $ nBDM( \cdot ) $). 

To approximate algorithmic complexity, we will use the BDM/CTM approach, as described in detail in previous sections, and its normalised version, as pointed out in previous works~\cite{bdm} for any object of arbitrary size, it is possible to construct analogous objects that attain the minimum and maximum possible values of algorithmic complexity according to the BDM:
\begin{itemize}
    \item \emph{minimum} complexity object: This case is straightforward and corresponds to an object composed entirely of a single repeating symbol---for instance, a binary string consisting solely of zeros;
    \item \emph{maximum} complexity object. The maximum BDM value is achieved by an object whose decomposition (according to a specified algorithm) results in slices that exhibit the highest values of the Coding Theorem Method (CTM), with each distinct slice occurring only once until all possible configurations of the given shape have been exhausted.
\end{itemize}

The primary advantage of considering a normalised measure lies in its ability to enable comparisons between objects of varying sizes, effectively mitigating the influence of size on the measure itself. This property is particularly in the case of the present study, where we compare complexities of sequences and formulas generating them.

This way, the following ratio presents itself as an interesting weighting factor for the probabilities in Equation~\eqref{tst1t}:

\begin{equation}\label{rat}
    nBDM(D_{k,encoded}[j]) / nBDM(\mathcal{B}_{k,j})
\end{equation}

The ratio in Equation~\eqref{rat} measures how the algorithmic complexity of the formula and sequence compare to the other possible outputs of the LLM. If the relative algorithmic complexity (measured by the normalised BDM value) of the formula is greater than it was for the sequence itself, this suggests the LLM did not success in compressing the input sequence (it made the formula have a greater relative algorithmic complexity). 
On the other hand, if the opposite occurs, then the LLM could compress the sequence comparatively to other possible outputs of the LLM.

The ratio in Equation~\eqref{rat} ranges from 0 to a positive value $M>1$, which happens when the best possible compression is achieved (the inverse mapping of CTM). Since $M$ is not known beforehand, we can use a nonlinear mapping that saturates the value of the ratio to a maximum value of 1 (similar to an activation function). The hyperbolic tangent function can be used in this case, since $\tanh(0) = 0$ and $\lim_{x \to \infty} \tanh(x) = 1$. Thus, a candidate weighting factor for the probabilities in \ref{tst1t} is:

\begin{equation}\label{ratt}
    \delta_{k,j} = \tanh\left(\frac{nBDM(D_{k,encoded}[j])}{nBDM(\mathcal{B}_{k,j})} \right)
\end{equation}

\noindent with the best possible value of $\delta_{k,j}$ approaching 1 in a perfect compression scenario. Since we have several algorithms classified under each of the four types (according to their structure), instead of using the individual ratios for each type $k$, we shall use the harmonic mean per type, defined as:

\begin{equation}
    \delta_k = \frac{n_k}{\displaystyle \sum_{j=1}^{n_k} \delta_{k,j}^{-1}} \text{ for } R_{k,j} \text{ of type }k,
\end{equation}

\noindent where $n_k$ represents the number of algorithms that are of type $k$. If we include $m$ sequences in the test, for example, $n_k = m\rho_k$. Thus, an updated version of the test is:

\begin{equation}
    \varphi_b = \sum_{i=1}^{3} \delta_i\rho_i.
\end{equation}

Deliberately, we want to privilege models that do not simply copy or provide ordinal mappings of the input sequences. Thus we can attribute higher weights to types that are correct and do not copy nor print the results. We also want to give more weight to programs that provide ordinal mappings when compared to print cases. Then, considering a power-law weighting strategy, the final test metric is:

\begin{equation}\label{testf}
    \varphi = \delta_1\rho_1 +\frac{\delta_2\rho_2}{10}+\frac{\delta_3\rho_3}{100}.
\end{equation}

It can be seen that $\varphi \in [0,1]$ encompasses different behaviours. For example, $\varphi \in [0,0.01]$ if only print-type models are outputted. Also, $\varphi \in [0,0.1]$ if only ordinal-like formulas are created. Finally, $\varphi \in [0,1]$ in cases where the LLMs create formulas that are always correct, do not copy nor create ordinal mappings. The ranges will be populated with varying compression levels corresponding to the algorithms obtained. Overall, if the score is 0, all the formulas were wrong. If it is 0.5, it can represent the case where half the outputs were correct and half wrong, with the formulas produced with highest compression levels. So, in a regular half and half case, since compression will not be optimal, the test score is less than 0.5.
The test performance results for each model are calculated using Equation~\eqref{testf} for $\mathcal{T}$ in Algorithm~\ref{algo1}. 

There are some possible variations for the test metric in Equation~\eqref{testf}. For example, some sort of Bayesian approach could be used to consider that the elements of $\rho$ are not constants, but random variables which could account for the number of different correct/incorrect answers for the same input sequence. In this way, the multiplicity of possible generators is taken into account, better capturing the concept of algorithmic probability, and the output of the test would be a random variable instead. However, LLMs hardly produced even one correct answer, therefore we kept the formula simple.

As described, Equation~\eqref{testf} tests for two features, compression via non-print computer programs and non-ordinal mathematical formulas to the input sequence, and prediction, by running all programs and all formulas to match each sequence digit, and penalising them when they did not represent an actual compressed model that generated a possible new digit of the sequence when run in reverse, i.e. when `decompressed'. The test formula assigns greater importance to correct cases that are not solutions of the type `print($s$)' where $s$ is the sequence for which the AI system is asked for a model, given that a print model does not allow generalisation by prediction through simulation, as running a print command will only print up to the last digit. The same is true for what we call `ordinal', which is simply indicating the index of the non-zero non-one element in the binary (or binary embedded) sequence, meaning that, together with the `print' case, the system failed in its attempts at abstracting features of the object. Finally, the formula punishes ordinal and print answers in a weighted fashion. The best performer can only reach a $\varphi$ of 1 while the lowest value is 0.

\subsubsection{Applying SuperARC-seq}

The results of the LLM classification after applying this test according to the formula are shown in Table~\ref{tableRanking} and summarised in Figure~\ref{rankingSuperARC} for \emph{binary sequences}. As shown in Table~\ref{tableRanking} and Figure~\ref{rankingSuperARC}, CTM/BDM would achieve perfect scores in all categories, consistently avoiding trivial responses and providing accurate formulas. 
By design, this model clearly excels in abstract feature recognition, outperforming all other models at prediction, which we claim is key to planning. CTM/BDM actually produces a set of possible generative models (computer programs) that, when run in reverse in what would be the uncompressing process, produce new elements to test against the observation, thus updating and producing new possible outcomes. These models are also hypotheses that do suggest whether a sequence is random or not, rather than looking for such a sequence in the training set or a combination thereof and failing for those not found in the distribution.

\begin{table}[ht]
\centering
\adjustbox{max width=\textwidth}{
\sisetup{round-mode=places, round-precision=3}
\begin{tabular}{c|SSSSSSSS}
Model         & {$\rho_1$}   & {$\rho_2$}   & {$\rho_3$}  & {$\rho_4$}   & {$\delta_1$}       & {$\delta_2$}       & {$\delta_3$}       & {$\varphi$}       \\ \hline
AIXI/BDM/CTM & \textbf{1.000} & 0.00 & 0.0 & \textbf{0.000} & \textbf{1.000} & 0.000 & 0.000 & \textbf{1.000} \\ \hline
ChatGPT-4.5   & 0.00 & \textbf{1.000} & 0.0 & \textbf{0.000} & 0.000 & 0.419 & 0.000 & 0.042 \\ \hline
o1-Mini       & 0.00 & 0.64 & 0.0 & 0.36 & 0.000 & \textbf{0.537} & 0.000 & 0.034 \\ \hline
Claude-3.7    & 0.00 & 0.81 & 0.0 & 0.19 & 0.000 & 0.407 & 0.000 & 0.033 \\ \hline
Claude-3.5    & 0.06 & 0.14 & 0.0 & 0.80 & 0.449 & 0.428 & 0.000 & 0.033 \\ \hline
o1-Preview    & 0.00 & 0.29 & 0.0 & 0.71 & 0.000 & 0.423 & 0.000 & 0.012 \\ \hline
Gemini        & 0.00 & 0.00 & \textbf{1.000} & \textbf{0.000} & 0.000 & 0.000 & \textbf{0.762} & 0.008 \\ \hline
Cursor-Small  & 0.00 & 0.00 & \textbf{1.000} & \textbf{0.000} & 0.000 & 0.000 & \textbf{0.762} & 0.008 \\ \hline
ChatGPT-4o-Mini & 0.00 & 0.00 & \textbf{1.000} & \textbf{0.000} & 0.000 & 0.000 & \textbf{0.762} & 0.008 \\ \hline
Mistral       & 0.00 & 0.00 & \textbf{1.000} & \textbf{0.000} & 0.000 & 0.000 & 0.710 & 0.007 \\ \hline
Qwen          & 0.00 & 0.00 & \textbf{1.000} & \textbf{0.000} & 0.000 & 0.000 & 0.710 & 0.007 \\ \hline
DeepSeek      & 0.00 & 0.00 & \textbf{1.000} & \textbf{0.000} & 0.000 & 0.000 & 0.710 & 0.007 \\ \hline
Llama-4-Scout & 0.01 & 0.00 & 0.0 & 0.99 & 0.450 & 0.000 & 0.000 & 0.004 \\ \hline
Grok-3        & 0.00 & 0.02 & 0.0 & 0.98 & 0.000 & 0.318 & 0.000 & 0.001 \\ \hline
Mistral-Large-3 & 0.00 & 0.00 & 0.0 & 1.00 & 0.000 & 0.000 & 0.000 & 0.000 \\ \hline
Meta          & 0.00 & 0.00 & 0.0 & 1.00 & 0.000 & 0.000 & 0.000 & 0.000 \\ \hline
Gemini-3-Pro  & 0.00 & 0.00 & 0.0 & 1.00 & 0.000 & 0.000 & 0.000 & 0.000 \\ \hline
Claude-4.5    & 0.00 & 0.00 & 0.0 & 1.00 & 0.000 & 0.000 & 0.000 & 0.000 \\ \hline
ChatGPT-5.2   & 0.00 & 0.00 & 0.0 & 1.00 & 0.000 & 0.000 & 0.000 & 0.000 \\ \hline
Grok-4.1      & 0.00 & 0.00 & 0.0 & 1.00 & 0.000 & 0.000 & 0.000 & 0.000 \\ \hline
ChatGPT-4o    & 0.00 & 0.00 & 0.0 & 1.00 & 0.000 & 0.000 & 0.000 & 0.000 \\ \hline
Grok-4        & 0.00 & 0.00 & 0.0 & 1.00 & 0.000 & 0.000 & 0.000 & 0.000 \\ \hline
Claude-Sonnet-4 & 0.00 & 0.00 & 0.0 & 1.00 & 0.000 & 0.000 & 0.000 & 0.000 \\ \hline
Gemini-2.5-Pro & 0.00 & 0.00 & 0.0 & 1.00 & 0.000 & 0.000 & 0.000 & 0.000 \\ \hline
Mistral-Large-2405 & 0.00 & 0.00 & 0.0 & 1.00 & 0.000 & 0.000 & 0.000 & 0.000 \\ \hline
Claude-Opus-4 & 0.00 & 0.00 & 0.0 & 1.00 & 0.000 & 0.000 & 0.000 & 0.000 \\ \hline
DeepSeek-R1-0528 & 0.00 & 0.00 & 0.0 & 1.00 & 0.000 & 0.000 & 0.000 & 0.000 \\ \hline
Qwen-3        & 0.00 & 0.00 & 0.0 & 1.00 & 0.000 & 0.000 & 0.000 & 0.000 \\ \hline
ChatGPT-5     & 0.00 & 0.00 & 0.0 & 1.00 & 0.000 & 0.000 & 0.000 & 0.000 \\ \hline
\end{tabular}
}
\caption{Numerical benchmark ranking of popular frontier models publicly available against ASI represented by fundamental or neurosymbolic models like AIXI~\cite{hutter} and CTM/BDM~\cite{bdm,nmi}. Best per-column values are in bold (for all columns greater values are better except for $\rho_4$, where smaller is better).}
\label{tableRanking}
\end{table}

\begin{center}
\begin{figure}[h]
\makebox[\textwidth][c]{\includegraphics[width=1
\textwidth]{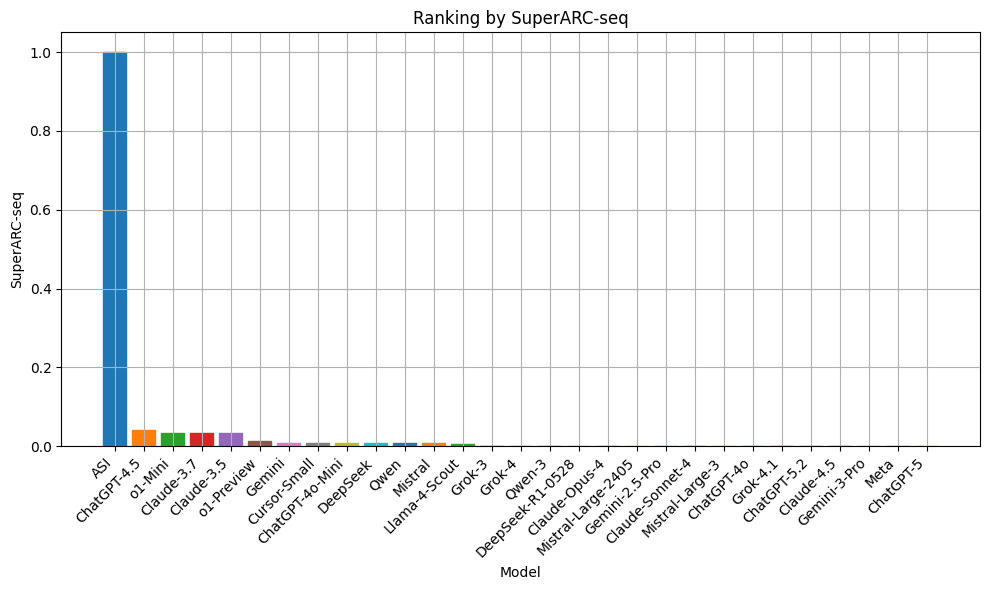}}%
\caption{Benchmarking plot from Table~\ref{tableRanking} showing how most frontier models are close to each other in their performance under this test and far from AGI or ASI goals according to this test. ASI would be able to distinguish simpler from complex sequences and generate predictive models for each accordingly, as AIXI~\cite{hutter2005universal} or CTM/BDM would do~\cite{nmi,bdm} as instantiations of Universal AI (UAI) that we take as an example of ASI as optimal abstraction and prediction. Today, LLMs only produce or retrieve models for sequences that were seen and found in their original training sets, given that increasing the sequences' lengths impacts the LLM performance in identifying the sequence, hence indicating sequences are not recognised from first principles but from simplistic pattern matching.}
\label{rankingSuperARC}
\end{figure}
\end{center}


These findings indicate that LLMs perform well when there are discernible patterns in the data, but struggle with randomness, failing to capture complexity in an algorithmic sense. In contrast, AIT can accurately predict (rather than guess) the sequence, regardless of the string's complexity. These results demonstrate that the algorithmic-complexity approach effectively approximates the minimal description length of information, identifying the shortest algorithm capable of generating a given sequence.

Despite being the top-ranked LLM model, chatgpt\_4.5 only provided ordinal mappings (soft copies) of the inputs, which achieved correct results at the cost of no abstraction and comprehension at all (slightly better than a pure a print-only test score). The GPT-4o, Grok-3, Meta, Claude 3.5 and o1---preview LLM versions produced several incorrect formulas while the other LLM models considered mostly produced print-like responses, indicating a lack of pattern recognition beyond basic sequence reproduction.
Notably, in the evaluation of consecutive versions, the most recent models---with the exception of Grok---demonstrated a degradation in performance.

Unlike standard LLMs that predict the next tokens in text, CTM/BDM finds the generative processes of the sequence by
a combination of symbolic and statistical pattern matching algorithms, which allows it to derive concise models that can then run in reverse to match each digit and produce new ones, hence allowing prediction and planning by picking the most likely among a set of possible models based on the algorithmic probability of the model (how short and how often the same model was found to produce the same sequence).

It is important to note that the SuperARC-seq application hereby considered only took into account binary sequences. Whenever integer sequences were considered, a clear biasing of the results was observed as LLMs started to take advantage of their training corpus to actually display memorisation capabilities rather than abstraction and synthetization ones. Figures \ref{fig:figpercent} and \ref{fig:figtst} present the percentages of each type of output and the test scores when different types of sequences were considered.

\begin{figure}
\begin{subfigure}{\textwidth}
  \centering
  \includegraphics[width=0.85\linewidth]{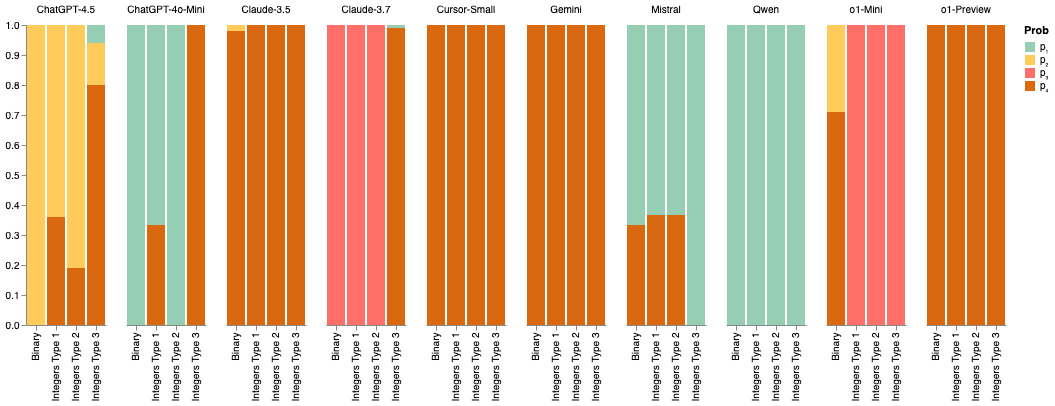} 
\end{subfigure}%

\begin{subfigure}{\textwidth}
  \centering
  \includegraphics[width=0.85\linewidth]{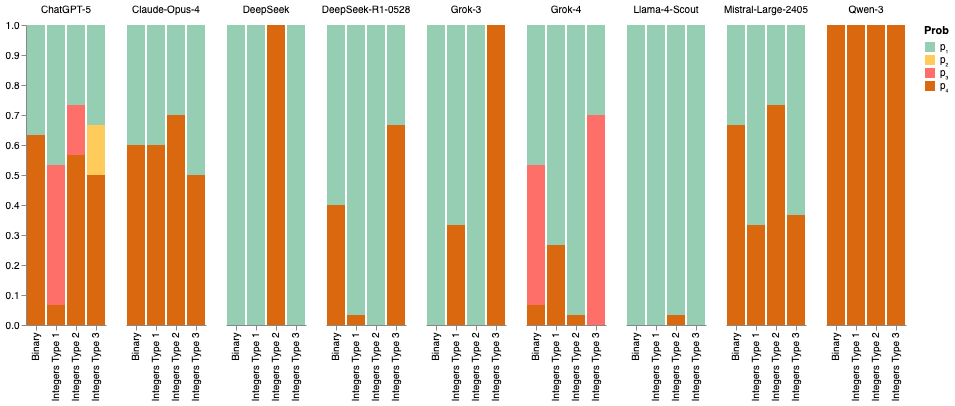}
\end{subfigure}

\begin{subfigure}{\textwidth}
  \centering
  \includegraphics[width=0.85\linewidth]{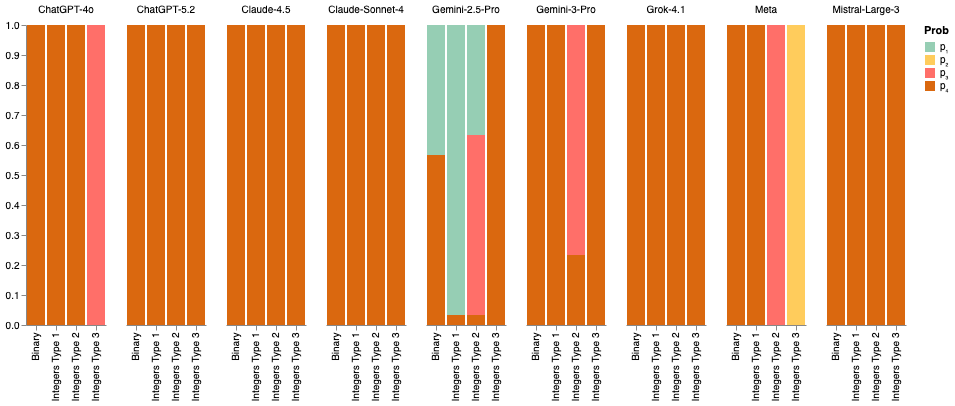}
\end{subfigure}
\caption{Percentages by output types: $p_1$ is the percentage of Correct \& Non-Prints \& Non-Ordinal outputs; $p_2$ is the percentage of Correct \& Ordinal outputs; $p_3$ is the percentage of Correct \& Prints outputs and $p_4$ is the percentage of Incorrect outputs. It is clear that as soon as integer sequences are considered, LLMs start to get better quality output formulas (i.e.,  greater $p_1$ and $p_2$). This suggests that the models were trained on integer sequences rather than binary ones, implying that incorporating integer sequences into the test calculations could introduce bias.}
\label{fig:figpercent}
\end{figure}

\begin{figure}
\begin{subfigure}{\textwidth}
  \centering
  \includegraphics[width=1\linewidth]{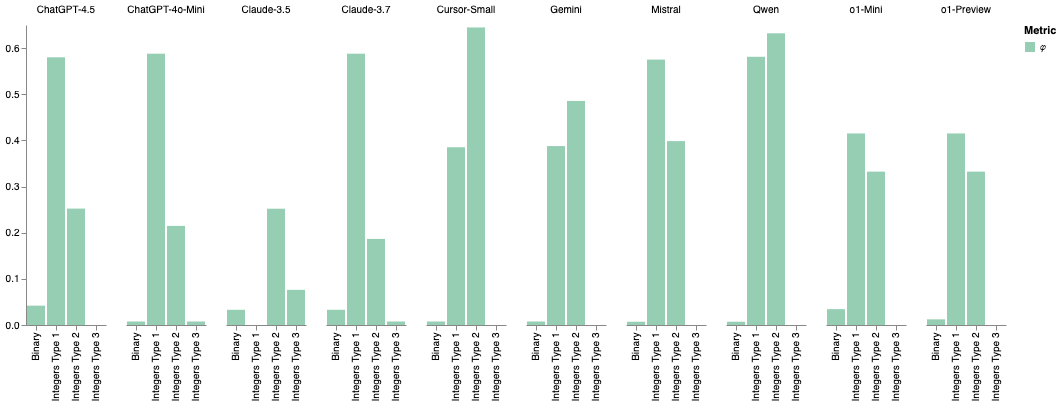} 
\end{subfigure}%

\begin{subfigure}{\textwidth}
  \centering
  \includegraphics[width=1\linewidth]{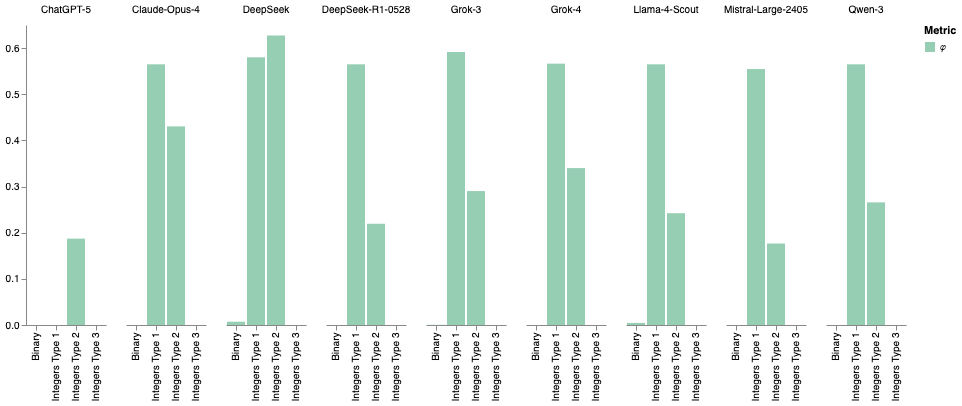}
\end{subfigure}

\begin{subfigure}{\textwidth}
  \centering
  \includegraphics[width=1\linewidth]{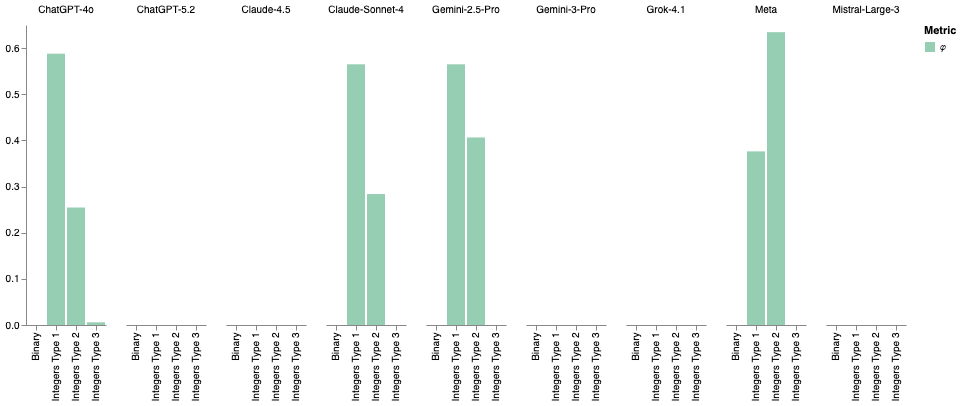}
\end{subfigure}
\caption{Test scores when different types of sequences are considered. Consistent with the results shown in Figure \ref{fig:figpercent}, the inclusion of integer sequences leads to significantly higher test scores for the LLMs. This outcome arises from the models' ability to exploit their internalised training data by directly associating observed sequences with pre-learned formulas, suggesting a form of hash-like memorisation. These findings highlight the importance of restricting the evaluation to binary sequences in order to obtain an unbiased measure of each model's performance, as such sequences are less likely to have been included in the models' training corpora.}
\label{fig:figtst}
\end{figure}

The test scores for different types of sequence reveal that the inclusion of integer sequences leads to significantly higher performance of LLMs, as shown in Figures \ref{fig:figpercent} and \ref{fig:figtst}, where higher percentages of Correct \& Non-Prints \& Non-Ordinal and Correct \& Ordinal outputs are seen, as well as higher test scores. This is likely due to the models leveraging memorised associations between familiar integer sequences and pre-learnt formulas - an effect similar to hash-based retrieval. These findings show the importance of limiting evaluations to binary sequences, which are less likely to have been part of the training data, thereby providing a more accurate and unbiased assessment of model performance.

The robustness of the test score when only binary sequences are considered can be seen in Figure \ref{robustnesstst}, which shows the result of a bootstrap procedure. The bootstrap simulation procedure was conducted as follows: for each specified sample size $s$ ($s$ equal to 25, 50, 75 and 100), 100 bootstrap samples of size $s$ were drawn with replacement from the complete dataset, which consisted of 100 binary sequences (presented in subsection \ref{binseqs} in Sup. Inf.). For each bootstrap sample, the corresponding test scores were computed. The resulting plot presents the confidence intervals for the test scores obtained across all bootstrap iterations. The observed stability in test scores, coupled with the progressively narrowing confidence intervals around the mean as sample size increases, suggests a high degree of robustness in the evaluation metric. This indicates that the test score is largely insensitive to the particular subset of sequences used, thereby validating the reliability of the assessment across different sample sizes.

\begin{center}
\begin{figure}[H]
\makebox[\textwidth][c]{\includegraphics[width=1
\textwidth]{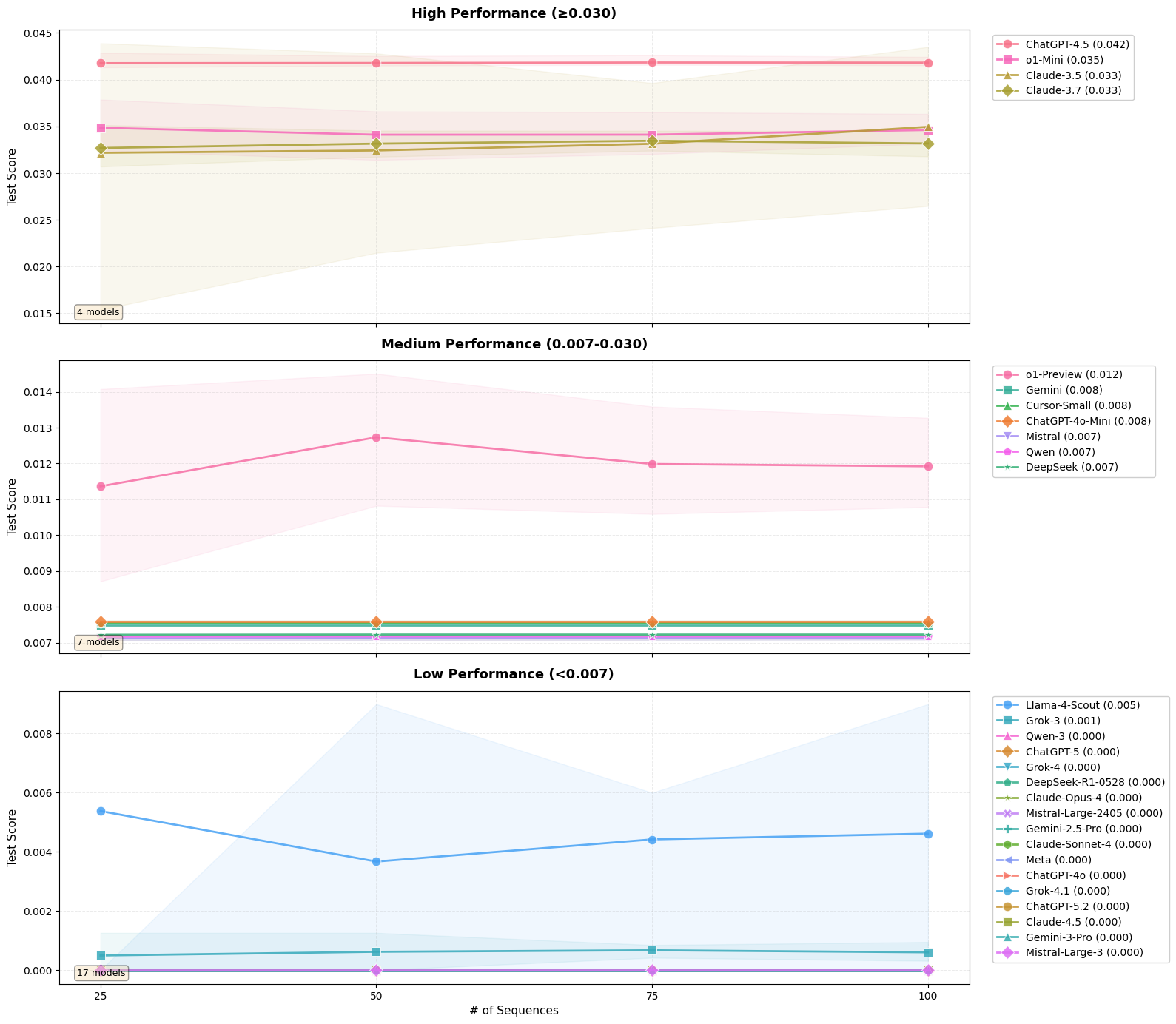}}%
\caption{Bootstrap procedure to assess the robustness of the test score when binary sequences were used. The stability of the test scores, with average values per model provided in parentheses, in combination with the narrowing confidence intervals around the mean as sample size increases, indicates strong robustness of the evaluation metric.}
\label{robustnesstst}
\end{figure}
\end{center}

These results are aligned with recent work exploring, for example, LLMs' logical reasoning failures~\cite{Dziri} as well as GPT-4's limitations in deductive reasoning~\cite{arkoudas2023gpt4cantreason}. 
Other researchers have also reported degradation of mathematical capabilities~\cite{Frieder} and planning limitations~\cite{Kambhampati}. 
These works collectively document that LLMs struggle with systematic reasoning across multiple domains.

\section{Discussion}\label{sectionDiscussion}

In previous work, we have shown that aspects of human~\cite{ploscomp,zenilbelg} and animal~\cite{zenilanimal} cognition could be characterised, and aspects of their behaviour reproduced in terms of algorithmic probability tools and algorithmic complexity metrics that we have also suggested for artificial and computational systems, including robotics~\cite{zenilrobots}. 
Here, we tested this approach and proposed a new quantitative metric based on an fundamental ASI-level method~\cite{zenilbook1} grounded in Algorithmic Information Dynamics (AID)~\cite{zenil2020algorithmic,zenilbook2} and the principles of Algorithmic Information Theory (AIT) related to recursive compression (as opposed to statistical) and prediction in application to Large Language Models (LLMs), which are believed or have been proposed to be capable of approaching Artificial General Intelligence (AGI) and Superintelligence (ASI).

Algorithmic Information Dynamics (AID)~\cite{zenilbook2} combines aspects of statistical causal inference (those best grounded on causality), such as perturbation analysis, and Algorithmic Information Theory (AIT). AID suggests that for an AI to be properly evaluated, the test has to be dynamic and must evaluate the system's ability to adapt and react to the new conditions in what constitutes a type of more sophisticated Turing test beyond the simplicity of human language.

Applied to SuperARC, the test involved increasing the problem input complexity and analysing the output complexity with our most powerful tools that, as a consequence, are (semi-)computable but truly capable of evaluating random deficiency, that is, how far an answer is from randomness or simplicity, and for optimal prediction power of the extracted/inferred executable computable model(s).

As governed by AIT, recursive compression and optimal prediction go hand in hand~\cite{zenil2020compression}, but previous tests focused on particular subset features, even those designed to test human reasoning and human abstraction, such as ARC-AGI~\cite{Chollet2019MeasureIntelligence}. A system could theoretically excel at ARC-AGI or SuperARC while completely lacking social intelligence, embodied reasoning, common sense, or goal-directed behaviour; and therefore, these tests are not meant to test everything we think an intelligent system that will interact with humans in humans' context should possess. 

Here, we have introduced and demonstrated that recursive compression can quantify the property of model abstraction and data prediction based on a mathematical proof provided in Section~\ref{sectionPredictionandCompression} of the equivalence between model compression and prediction applied to sequences based on Martingales, without resorting to proof-theoretic statistical tests. By incorporating and exploiting the formal equivalence between prediction and recursive compression into an intelligence test framework, we align the assessment of comprehension with fundamental computational principles. 
An agent's ability to abstract information through feature selection and model compression reflects its capacity to identify and utilise patterns within data. 
Similarly, its planning and prediction skills demonstrate its ability to anticipate and enact future events based on these patterns.

Another problem in LLM testing is benchmark contamination: the targeted optimisation of a model using leaked test answers. The open-ended and interactional nature of our testing framework is intended to counteract this problem of benchmark contamination and cheating. Our investigation of frontier models, framed within the AID paradigm, yields several key insights about the models' comprehension capabilities. Most of the models often demonstrate model version performance regression, poor accuracy in replicating and predicting even simple and recursively generated sequences beyond clearly memorisation results from the training distribution (such as sequence labelling). The vast majority of the correct answers turned out to be simple print statements of the numerical sequences themselves, rather than any code or model indicating any sign of understanding or pattern recognition.

These conclusions are reinforced by the model's explicit dependency on specific programming languages for correctness or on well-studied and documented series of numbers. In other words, if there are not enough implementations available in a specific programming language for the model to learn from, or even specific methods of mathematical analysis over specific numerical sequences, LLMs failed to produce the correct answer. 
Considering the most popular and widely used languages, LLMs do not demonstrate understanding but instead rely on selecting from an abundance of previously seen cases.

In this context, we have shown that using those concepts in reverse (that is, as generative rather than for testing purposes) can provide the model with synthesising and recursive predictive power that is otherwise lacking. In previous work, we have shown how optimal prediction can be approximated using tools such as CTM and BDM~\cite{zenilbook1}, and also how BDM can be used in the opposite fashion: not only as a testing tool for intelligence, but as a model generator~\cite{nmi,royal,iscience} via a greedy algorithm as an approach to optimal inference~\cite{bdm,complexity,computability}. 
While some of these methods can be seen as a brute force approach to a giant lookup table of micro-programs to explain the data, BDM is not. 
BDM combines the algorithmic probability approximations produced by CTM but then stitches each most likely program for each piece back together according to valid laws of information theory, in what constitutes a pure form of hybrid statistical and symbolic explanation---hence neurosymbolic. 
BDM, therefore, uses the two best inference theories currently available to science, one being the most used and overused in statistical machine learning (such as Shannon entropy-based measures, along with its limitations~\cite{zenil2017low}), and one that has often been neglected on the basis of uncomputability~\cite{abrahao,zenilreview}, which in fact guarantees open-endedness and optimal solutions.


LLM developers are moving slowly towards this direction, often without realisation or acknowledgement. Technologies like RAGs, KAGs/CAGs, agentic workflows, guardrails, long-context LLMs and more, are symbolic computation attempts to improve LLM capabilities. Yet many of these approaches are not being introduced into the system's core functions (e.g. to quantify loss or explore the solution space), yet attempt to combine traditional statistical pattern matching with symbolic approaches. This test and work suggest that this integration and a deeper integration are key for aspects of model abstraction and universal planning.



We reported that even frontier LLMs currently perform close to pure-copy solutions when increasing the complexity of certain mathematical problems, with even advanced models struggling to perform model abstraction/extraction and produce executable predictive results.
The results confirm that current LLMs, while competent in pattern replication, lack critical elements associated with what are believed to be key elements of AGI or ASI. 

The LLMs involved in this test showed a high dependency on predefined patterns. As complexity increased, models relied increasingly on trivial strategies, such as direct sequence printing or simplistic brute-force mathematical expressions. This highlights the LLMs' inability to abstract or conceptualise novel solutions that require some degree of mathematical ingenuity.

The poor performance is revealed by the lack of synthesising capabilities and the repetitive nature of the outputs for greater complexity inputs. This tendency to revert to safe and redundant approaches underscores the models' limited synthesising capabilities and exposes their high memory exploration dependencies in simpler modes.

For example, we reported that while from ChatGPT-4.5 to ChatGPT-5 improved human benchmark scores have been observed and reported, SuperARC performance degraded. The steep decline in accuracy and functional outputs as complexity increased reveals that these models are potentially increasingly heavily reliant on data size, unable to generalise.

Our results demonstrate that models can regress in fundamental algorithmic reasoning (SuperARC scores) while simultaneously improving in human-centric benchmarks.

The models' outputs suggest strength in replication but a lack of adaptivity to new situations as a disability to put together solutions in a non-trivial manner to solve a new problem that would constitute a form of model synthesis when increasing problem complexity. The predominance of trivial or incorrect solutions demonstrates an inability to think `outside the box' (as in if it had not been seen in the training distribution). This suggests that while LLMs can mimic comprehension through retrieval, pattern matching, and Chain-of-Thought techniques, their capabilities remain bounded when tested against algorithmically complex sequences. These observations point directly to a key distinction between current systems and AGI/ASI: the latter would require the ability to autonomously generate new strategies, abstract concepts, and exhibit flexible problem-solving beyond training data.

Our results based on the first principles of the mathematical theory of randomness and optimal inference in the context of claims about `reasoning' capabilities and AGI/ASI, suggest that none of the chatbots evaluated dominates any other in absolute terms when it comes to demonstrate whether the solutions have a semantic value, in other words, that LLMs mean the words they produce as opposed to emulating a coherent conversation.


We have argued throughout this paper, and it is distilled by the nature of our test for intelligence, that (semicomputable) open-ended tests not assumed to be isolated ``in a vat'' are needed in order to quantify scientific and mathematical intelligence in the form of abstraction and prediction so that they are fundamentally coupled into a necessary characteristic of comprehension~\cite{bibid}, whether performed by a human~\cite{ploscomp,zenilbelg}, animal~\cite{zenilanimal}, or artificial~\cite{zenilrobots} agent. 
The same also holds for the tested systems: that only by incorporating open-ended, sufficiently powerful semicomputable predicting agents (e.g., via the methods explored in the present paper), one may achieve such a characteristic of intelligent systems, and thereby enabling the possibility of ASI by way (or not) of AGI. 


\section{Funding}

Felipe S. Abrah\~{a}o acknowledges support from the Sao Paulo Research Foundation (FAPESP), grants $2021$/$14501$-$8$ and $2023$/$05593$-$1$.

\section{Code and Data Availability}

The code and data generated for this work are available at \url{https://github.com/AlgoDynLab/SuperintelligenceTest} where a benchmark table will be updated regularly for frontier models as they release new LLMs and other AI systems.

\newpage

\section{Methods}\label{sectionMethods}

\subsection{Assessing the capabilities of frontier LLM models}\label{sectionFrontierLLMs}

Since the inception of LLMs, these systems have been associated with human intellectual capabilities related to language that range from mastering composition to retrieving contextual data and even generating novel `ideas'~\cite{hubert2024current}. 
However, beyond seemingly arbitrary intelligence tests, questions related to intelligence remain, because intelligence is traditionally not well defined, with the intelligence tests performed remaining rather arbitrary or human-centric and lacking a clear linear progression of difficulty levels. 
Here, we approach both as a single problem and within a quantifiable framework, providing a formal approach to a form of intelligence based on algorithmic information. 

While, in principle, LLMs have shown to be theoretically capable of Turing-complete computation ~\cite{mirzadeh2024gsm,schuurmans2024autoregressive}, this is achieved when they are augmented with external memory and appropriate decoding mechanisms \cite{schuurmans2024autoregressive}. In practice, the models we evaluate operate with standard autoregressive decoding and finite context windows, which do not constitute Turing-complete systems.

Some have claimed that LLMs, and specifically ChatGPT, have the potential to revolutionise technological interaction through accurate understanding across conversational interfaces~\cite{aljanabi2023chatgpt}.
These attributions and capabilities of LLMs have been tested in a variety of ways, from semantic comprehension evaluations in traditional Chinese medicine (TCM), through structured multiple choice and true/false questions~\cite{yizhen2024exploring}, ASCII art~\cite{bayani2023testing}, to answering open questions and using LLMs as judges of the precision and correctness of the answers provided by other models~\cite{wei2024rethinking}. 
Exhaustive and detailed tests have been performed that focus on tasks that require a grasp of a broad context, such
as quantitative investing and medical diagnoses~\cite{zhong2024evaluation}, to mention only two.

Researchers have called into question these supposed understanding capacities, claiming that a lack of novelty and an abundance of hallucinations is formal and/or informal proof of a lack of comprehension ability~\cite{si2024can,gary2}. When evaluating the intelligence and comprehension capacities of LLMs, some limitations of existing works should be highlighted:

\begin{enumerate}
\item All of them contain an element of subjectivity. Measurements of understanding rely on a human or LLM judge, where a type of definition of innovation, usability, correctness is used which could be human-centric or dependent on the context.

\item All evaluations use (mostly) text to provide a context for the questions formulated; hence there are no questions that purely test understanding beyond textual correlations.

\item The test used may take for granted that, since LLMs are trained with intelligent sources of information, this confers some intelligence on the models themselves and thus their comprehension/understanding capacities.

\item LLMs and other AI systems are not self-driven and as such cannot be reasoning agents on their own; they only act upon being triggered and prompted by humans, otherwise they do not possess any internal states (e.g. activity when not prompted).
\end{enumerate}

Other researchers, following a more abstract and formal approach, incline to the view that a test of intelligence in LLMs, which could imply comprehension, understanding, and prediction, might rely on exposing and training LLMs on highly complex datasets,
and testing how well the LLMs could apply learnt knowledge to unrelated but complex tasks (like predicting the next chess move) and reasoning tasks. 
They claim that information at the `edge of chaos', a state between non-chaotic and chaotic behaviour in dynamical systems, is more likely to help LLMs manifest intelligence~\cite{feng2019optimal}. 
Suspicions that current AI is mimicking intelligence rather than displaying it have been reported and substantiated before~\cite{gary1,gary2,bishop}. 
Thus, proposing a test that can address those concerns is very relevant.

\subsection{Compression as comprehension about (and as part of) the world}\label{sectionCompCompre}





As presented in Section~\ref{sectionRandomPred}, the formal equivalence between prediction and compression using martingales provides a theoretical foundation for understanding intelligence in terms of the generalisation of computational capabilities (due to universality and invariance) to predict future outcomes, while summarising this whole landscape of capabilities---due to the minimality of algorithmic complexity and maximality of algorithmic probability in the algorithmic coding theorem (ACT)~\cite{Calude2002,Downey2010}. 
As we discuss in this section, these are the aspects that the testing framework we propose aims at (see Section~\ref{sectionSuperARCframework}).
In the context of designing a test for intelligence, such an equivalence suggests that an agent's ability to abstract (e.g., through feature selection and model compression) and to plan (e.g., through prediction, counterfactual analysis, and simulations) are fundamentally interconnected aspects of intelligence involved in scientific or mathematical creativity.
More specifically:
\begin{itemize}
    \item recursive/computable compression and decompression: seen as the summarisation or abstraction of main features (or feature selection) that can be simulated in reverse (decompression), and in contrast to simple statistical pattern-matching or statistical compression;
    
    \item process (algorithmic or symbolic) regression and prediction: formally established by AIT as equivalent to compression by way of optimal/universal simulation~\cite{levin2,levin3,levin4} through the concept of algorithmic randomness and martingales (betting strategies)~\cite{vonmises,schnorr1,schnorr2} (see Section~\ref{sectionPredictionandCompression}); or universal (Solomonoff) induction~\cite{Solomonoff1986,hutter2005universal,solo} (see also pseudocode~\ref{algo1}).
\end{itemize}

An agent that can devise or find a model that can compress a set of phenomena that when uncompressed generates this set faithfully (and beyond statistical compression) is necessarily able to comprehend it at some level~\cite{bibid}.
That is, a set of phenomena that is compressible by an agent into some first principles, or into a succinct model that when uncompressed, reconstructs, describes, and can also simulate future states of the originally described set of phenomena, needs to be comprehensible by that agent~\cite{bibid}; 
otherwise, we would have an agent that does \emph{not} comprehend the phenomena at the same time that can devise formal theories that can explain them into 
(fewer) first principles and (better) predict future events, which seems to go against a common-sense understanding of `comprehension':
on the one hand, this necessarily indicates at least some type of comprehension, e.g., a scientific or mathematical one;
on the other hand, an agent that can \emph{only} mimic, copy-paste, describe, or depict the phenomena at the same time that comprehending them contradicts any conception of `an agent able to understand something at a deeper level beyond mere appearance'.
From both scenarios, we find that abstraction and prediction arise as \emph{necessary} conditions for comprehension, particularly those intrinsic to the process of devising \emph{novel} scientific or mathematical theories.
As introduced in Section~\ref{sectionIntro}, instead of covering all the sufficient conditions for all types of human intelligence (including those for which compression may seem unrelated), here we constrain our study to this type of comprehension---crucial to scientific knowledge and mathematical creativity.
In this regard, compression necessarily plays a defining, encompassing role that is mathematically grounded and empirically feasible, as we explain in the paragraphs below.

It is important to clarify possible misinterpretations of the meaning of the word ``compression'' used in our framework. 
In machine learning and cognitive science, feature selection involves identifying the most relevant variables or attributes that contribute to predictive modelling. 
This summarisation process reduces the dimensionality~\cite{MILS2025InfoScie}, focusing on the most informative aspects of the data. It is, of course, a compression approach, but just a part of the one we intend to refer to. 
In our framework, beyond finding crucial features, attributes, or aspects, (algorithmic) compression into a model refers to other ``mechanistic'' relationships that are less descriptive in nature, while also guaranteeing that a model does not compromise performance.
It involves reducing the complexity of the model, but often leading to a more effective generalisation and greater efficiency~\cite{calude}.
Model abstraction through effective recursive/algorithmic compression allows simulation of various scenarios when the model captures its main features, that is, its most important patterns for prediction are captured as a necessary condition for outcome prediction. 
Then, model selection happens when each outcome is compared against each time-step observation, hence updating the belief model, instantiating, and enabling `planning'.

`Compression as comprehension' is thus also tied to pragmatic characteristics such as its utility and feasibility.
For example, a compressed model in the form of a mathematical theory or a set of equations, such as a set of laws of physics, is only sound if it allows one to predict the future state of a physical system in ``a shorter time than the time taken by the actual unfolding of the phenomenon'' ~\cite{bibid}. 
Comprehension only takes place if one can understand real-world phenomena to computationally ``outrun'' reality at some sufficiently higher level---see also computational irreducibility in~\cite{Wolfram2002}.

Such a process of understanding or comprehension into formal-theoretical or computational capabilities is demonstrated in the context of the scientific discovery itself.
Real-world phenomena that have the appearance of being random or unexpected become a topic of interest for research, analysis, and future development (if successful) of a more comprehensive model or theory that is then able to compress the apparent noise-like phenomena by allowing one to explain these and to predict other unfoldings from it.
In this way, science moves in an iterative pace of converting something that is currently considered ``irreducibly complex'' or unexpected into something that becomes comprehensible by theoretical means that allows computational predictions~\cite{bibid}.
For example, consider Newtonian mechanics and General Relativity in physics. 
The former has represented a highly successful compression of observational data on e.g. celestial motion.
However, to account for anomalies like the precession of Mercury's orbit, ``requires a stream of regular adjustments'' or corrective patches, which basically increase the complexity of the explanations of those anomalies to a level similar to that of describing the anomalies themselves.
General Relativity then provided a superior, more compact, and elegant set of field equations that not only subsumes the previous phenomena that Newtonian mechanics successfully accounted for, but also explains these and other ``anomalies'' in a more compressed form than before.
In fact, the continued success of the scientific method in more compact and elegant mathematical theories corroborates the conclusion that comprehension necessarily involves compression of natural phenomena~\cite{bibid}.

We know from AIT that once one can sufficiently approach universally optimal compression (i.e. approximate the actual values of algorithmic complexity), any necessary increase in complexity (of the phenomena already explained along with those to be explained) is proved to require a novel theory that is (entirely or at least a proper part of it) irreducible to the old one, like when one needs to find a new axiom.
From the algorithmic coding theorem (ACT)~\cite{Calude2002,Downey2010}, the minimisation of such an irreducibly larger quantity of complexity also corresponds to the best inference method (such as the case when one employs abductive reasoning) for the new theory (or e.g., the new axiom) one can devise from a yet unexplainable phenomenon.
As discussed in Sections~\ref{sectionAIDforSuperARC} and~\ref{sectionAGIandASI}, we argue that these two features are central to tackling the problem of AGI and ASI.

The invariant and universal properties of algorithmic information imply that compression is more than a complexity index that might be correlated to abstraction and prediction.
In fact, optimal compression is only achieved by the best formal-theoretic methods proven across the whole landscape of algorithms and methods one may attempt to apply, regardless of the type of agent applying them.
Optimal compression is one such task that subsumes any other task---formalisable into an algorithm or a mathematical method that can be computationally implemented---an agent may perform in order to create a new theory that predicts new phenomena. 
Unlike directly equating compression/prediction to intelligence~\cite{Schmidhuber2009DrivenCompression} or straightforwardly applying the ACT like in other universal induction-based methods, we propose that compression is a necessary and fundamental condition for comprehension if it is achieved through (and as a product of) the interaction between the AI agents being evaluated, the evaluator agents (including the methods, frameworks and metrics we formalise), and the external real world whose process may affect and be affected by the other two entities.
For example, notice that this implies that the ACT itself becomes a constituting knowledge that the evaluator agents may devise by formalising a new mathematical theory, obtaining such a formal-theoretic knowledge after the experience (i.e., after the interactions take place).
As explained in Section~\ref{sectionAIDforSuperARC}, this is a distinctive characteristic of Algorithmic Information Dynamics (AID)~\cite{zenil2020algorithmic,zenilbook1,zenilbook2,Abrahao2021bEmergenceAIDPTRSA} upon which our proposed framework is based.




As presented before in~\cite{bibid,computableuniverse}, the remarkable features of AIT~\cite{miraculous,Minsky2014ACT} discussed in the present section
seem to underpin the apparently unreasonable effectiveness of algorithmic complexity~\cite{zenil2012empirical} and computation~\cite{Wolfram2002} in explaining the natural world, including cognition, and in advancing science as the practice of finding or synthesising models that can explain and predict natural phenomena and the world.
Thus, by putting forward a formal and more objective approach to measuring general intelligence,
we propose in Section~\ref{sectionSuperARCframework} a test for ASI and AGI based on AID and AIT, namely \emph{SuperARC}, that specifically tests recently strongly associated features with intelligence in the context of discussions of AGI~\cite{lecun,lecun2,ajay,ajay2,goertzel2007agi,lake2017building,bengio2019meta,Chollet2019MeasureIntelligence,marcus2020next}. 
While human intelligence includes many other abilities to perform a myriad of tasks, here we chose to focus on abstraction, explanation, and prediction related to building new formal theories, and to scientific creativity. 
Those are the ones for which we have computational methods and a solid theoretical foundation which has proved the universal and agnostic properties that a testing framework for general intelligence should aim at,
particularly should one want to tackle the distinction between narrow AI and AGI (see also Section~\ref{sectionAGIandASI}).

\subsection{SuperARC testing framework}\label{sectionSuperARCframework}


Based on the theoretical background presented in Sup. Inf., 
we ground our framework on the following aspects:
\begin{itemize}
    \item \emph{intelligence} necessarily involves the ability to create (i.e., through abduction from the experience) or enact (i.e., through prediction from future interactions) a computational generative model that effectively explains any given data while loosing the least amount of information as possible;
    
    \item and \emph{greater} intelligence corresponds to performing prediction and abduction as close to the optimal solution as possible while maximising the compression of the generative model.
\end{itemize}


As a metric for (general or super-)intelligence, designing tests that measure these abilities can lead to a more nuanced and computationally grounded understanding of intelligence that is applicable to biological (e.g. animal), human cognition, and computational intelligence.
This can establish a universal approach to measuring the capabilities of intelligent systems, serving as both a theoretical and a practical upper bound for the highest possible levels of compression such as model abstraction and prediction, which are believed to be fundamental features of intelligence.

As discussed in Section~\ref{sectionAITIntel}, using algorithmic complexity as a measure of model compactness (i.e., compression, conciseness or summarisation) and optimal prediction provides an agnostic quantitative metric, as its value corresponds to the shortest possible program capable of correctly reproducing (via decompression) a given dataset, and its optimal prediction value is governed by algorithmic probability. 
First, unlike standard tests that assess intelligence based on predefined `correct' answers---inevitably influenced by subjective notions of correctness---we shift the focus to identifying the shortest possible explanation for a given dataset, explanation which is proved to be sufficient for predicting not only the given dataset but also future outcomes.
In this context, correctness is understood purely as the ability to reproduce exactly the same original data (i.e. losslessly). 
Secondly, an agnostic method aims to achieve measurable quantities as independent of human biases as possible, including those high-order biases in the scientific practice such as when one chooses to employ one formal theory instead of another in order to model certain phenomena (see Section~\ref{sectionCompCompre}).


Beyond a measure of a single-purpose compression task (as discussed in Sections~\ref{sectionCompCompre},~\ref{sectionRandomPred}, and~\ref{sectionAIDforSuperARC}), the SuperARC test is a proposal to capture the potential future trajectories leading to hybrid neurosymbolic systems more capable of the abstraction and planning, deemed central to what has been conceived of AGI and ASI~\cite{schmidhuber2007godel,lecun,lecun2}, one that may take into account statistical pattern matching, but favours symbolic regression and program synthesis as a test of intelligence based on optimal inference rather than statistical `reasoning'. 
The test proposed expands current efforts to characterise AGI such as the Abstraction and Reasoning Corpus (ARC) challenge~\cite{Chollet2019MeasureIntelligence} which have been suspected to be `hackable' from test result leaks because the test data set is fixed (even if part of it is concealed but prone to be leaked).
Unlike recent results in the ARC-AGI test, our results find a similar lower performance than that reported in a recent mathematical benchmark test~\cite{math}, with the advantage that our proposed test does not require the selection of human mathematical problems and the test problems can be dynamically generated with test elements introduced cheaply and efficiently. 
Although this new test may require the selection of objects and elements such as sequences, unlike the original ARC challenge tests, this selection can be based mainly on quantitative measures of complexity and less on human selection.

These features are crucial in avoiding biases introduced by the datasets, such as benchmark contamination, when evaluating the performance of an AI algorithm.
Given that one would be trying to measure the ability of the learning algorithm to predict phenomena whose type or class was not the one of the data it was trained for in the first place, this is especially the case in \emph{zero-shot learning} scenarios where any small leakage of data with information about the (upcoming and irreducibly new) test to be performed makes a big difference in the score.
Due to the mathematical properties in AIT discussed in Sections~\ref{sectionAITIntel} and~\ref{sectionAIDforSuperARC}, SuperARC avoids human-centric and other cognitive biases because lower (algorithmic) complexity (higher compression, or equivalently, higher algorithmic probability) of a model is proven to indicate better overall prediction capabilities, regardless of the nature of the new phenomena or the type of data on which one is trying to measure the generalisation capabilities of the AI algorithm.
For example, even if one can update the benchmark test in practice with a new type of task to be performed, this possibility itself assumes that a new type of task might be known to us, rendering the test inherently prone to contamination.
Following from the properties of algorithmic probability, SuperARC quantify prediction in new contexts and potential different scenarios without the need of a new type of task, distinct data, or posterior apprehension of previously unknown phenomenon.

\color{black}

\subsection{The Role of SuperARC on Distinguishing Algorithmic from Statistical Prediction}

While prediction is fundamental to both human-centric and algorithmic benchmarks, the nature of what is being predicted differs fundamentally. Human-centric benchmarks evaluate whether models can predict outputs that humans would generate given specific inputs, which is a task solvable through statistical pattern matching over human-generated corpora. 
This is because as models are increasingly trained on datasets that cover more of human knowledge, performance on these benchmarks asymptotically approaches data memorisation rather than genuine understanding.

SuperARC, by contrast, evaluates whether models can induce the algorithmic structure that underlie the sequences, i.e., that can find the minimal program that generates the observed data. 
This capability is irreducible to pattern matching because:

\begin{itemize}
    \item Infinite hypothesis space: Unlike human-centric tasks with finite answer sets, algorithmic induction in principle searches over an infinite space of possible programs, and thus possible formal theories;
    \item Distribution shift immunity: Novel algorithmic patterns (new combinations of primitives) are fundamentally out-of-distribution, requiring genuine abductive reasoning, such as when one devises a new axiom;
    \item Compression-prediction duality: Theorem \ref{thmIncompUnpredict} establishes that predictions success is equivalent to compression over the algorithmic space, which subsumes the statistical space although the equivalence does not require statistical evaluation nor success;
\end{itemize}

As shown by our results, we argue that such a theoretical difference uncovers a practical consequence: models can simultaneously improve on human benchmarks while regressing on algorithmic reasoning. 
This divergence should be impossible if both algorithmic prediction and statistical prediction were supposed to measure the same underlying predictive capability, and therefore SuperARC provides empirical evidence that captures a distinct and arguably more fundamental aspect of intelligence.

\subsection{CTM and BDM: A neurosymbolic approach to Superintelligence benchmarking}\label{sectionBDMforASI}

The SuperARC framework accommodates any type of data as input-output pairs, requiring only a complexity-based metric to be predefined. 
To achieve this, in addition to approximate methods to algorithmic complexity, such as LZW and ZIP which are more closely related to \emph{Shannon Entropy}~\cite{zenilreview}, we use the \emph{Block Decomposition Method} (BDM) as our gold-standard approach to algorithmic compression that goes beyond statistical compression or statistical pattern-matching~\cite{bdm}.
Using the principles of both classical and algorithmic information theories, BDM combines the calculation of the global Shannon Entropy rate of the object with local estimations to algorithmic complexity of smaller blocks into which the object is decomposed for which values are found in a pre-computed database of direct approximations of algorithmic probability. 
By combining both statistical and algorithmic inference methods, one way to think of BDM is by depicting it as a Deep Learning Transformer which aims to build a predictor that maximises the probability of being correct in explaining the data by looking for long-range and short-range correlations. 
The difference, in this case, is that long-range correlations are covered by Shannon Entropy (not fundamentally different from Transformers) but short-term correlations are estimated using the principles of algorithmic probability through the ACT~\cite{ctm3,ctm1,ctm2,zenilbook2} (see Section~\ref{sectionAITIntel}). 
See the discussion on the limitations of BDM below. 

In this manner, BDM is based on combining the best capabilities of Shannon entropy-type metrics to find patterns (e.g. block entropy rate~\cite{Ozelim2024AssemblyTheoryReduced}) with universal (Solomonoff) induction-based approaches (such as the minimum description length~\cite{mdl}) through algorithmic complexity, and thus deals with uncertainty in an optimal Bayesian fashion based on the principles of algorithmic probability~\cite{solo}. 
BDM improves upon Shannon entropy and LZW compression, which are limited to detecting only statistical regularities, this is, pure pattern-matching approaches. 
In fact, BDM subsumes these methods, and therefore one can only do better in capturing structure than statistical compression algorithms, as BDM detects both regular statistical patterns and recursive ones with causal generative signatures~\cite{zenilreview,bdm,Ozelim2024AssemblyTheoryReduced}.
See also Section~\ref{sectionAIDforSuperARC}. By recursive we mean exactly those that are not statistical in nature (e.g. the digits of the mathematical constant $\pi$ does not display any statistical patterns but it is recursive).

The BDM relies on the following assumptions:

\begin{enumerate}
    \item in the case of small enough objects, their \emph{algorithmic complexity} can be approximated using an exhaustive search (sometimes guided, e.g. with AID);
    \item for larger objects, breaking them into smaller parts allows for the approximation of the overall complexity by summing the complexity of individual blocks, with a correction factor to account for interactions between the blocks;
    \item for every other length, values of Shannon Entropy rates are calculated and combined with the previous values by using the same principles of information theory.
\end{enumerate}

Formally, let \( x \) be a string divided into blocks \( x_i \), with \( x = x_1 \oplus x_2 \oplus \dots \oplus x_n \), where $\oplus$ denotes a concatenation operator. The \emph{BDM complexity} of a string \( x \), denoted by \( \text{BDM}(x) \), is given by:

\begin{equation}\label{eqBDMdef}
\text{BDM}(x) = \sum_{i=1}^{ n } \text{CTM}(x_i) + \log m_i
\end{equation}
where:
\begin{itemize}
    \item \( \text{CTM}(x_i) \) is the algorithmic complexity approximation for block \( x_i \), derived from the Coding Theorem Method (CTM).
    \item \( \log m_i \) is a correction factor accounting for the multiplicity $ m_i $ of how many times the block $ x_i $ appears.
\end{itemize}
For a generalised version of BDM holding for any encodable object, see~\cite{Ozelim2024AssemblyTheoryReduced}.

The \emph{Coding Theorem Method (CTM)} is a method based on the ACT~\cite{kolmobook,zenilbook1} as in Section~\ref{sectionAITbasics}, which connects probability to complexity, randomness, and prediction~\cite{ctm3,ctm1,ctm2,zenilbook2}; and the ACT underlies the universal induction-based methods applied to Artificial Intelligence.
CTM works by searching for all the formal-theoretic explanations (models or programs) for an object that are shorter than the object itself~\cite{bdm} in order to calculate the ratio of those explanations of a particular object with the all the explanations found for any object. 
From the value obtained for each of these ratios, one can approximate the algorithmic probability of an object, and thereby its algorithmic complexity via the ACT so that a list of these pre-computed probability values is built, which in turn can be used to approximate the universal distribution~\cite{ctm1}.

On the one hand, CTM provides an approximation to \emph{algorithmic probability} \( P(s) \) by connecting the empirical frequency of occurrence of an object produced by a random computer program with its \emph{algorithmic complexity} \( K(s) \) 
and also keeps track of the set of programs that generated the original object.
(See Section~\ref{sectionAITIntel}).
On the other hand, BDM offers a method to map the micro-programs produced by CTM to their corresponding pieces from the larger object to explain by decomposing the original object into smaller blocks for which micro-programs have been found by CTM with a correction factor for block interactions (e.g. repetitions).
While CTM operates by brute force and thus is only effective for small programs/models, BDM leverages the pre-computed distributions that can be queried in linear time and stitches together longer explanations from small computer programs according to the rules of information theory to guide the search of the best sequence of programs explaining larger objects, thereby constituting a method that approximates the optimal causal explanation of the objects.
BDM also allows massive parallelisation because objects with low complexity (i.e. higher causal impact at the global level) are the most frequent according to algorithmic probability and therefore are exponentially more frequent, counteracting their intractability~\cite{bdm}. 

With limited computational resources, because of the limitations from the CTM, BDM behaves exactly like algorithmic complexity at the local scale and exactly like entropy at global scales~\cite{zenilreview,bdm,Ozelim2024AssemblyTheoryReduced}. 
In principle, with unbounded computational resources, all the algorithmic generative models at any scale would be known/computed. 
As a consequence, BDM would return the optimal value given by algorithmic complexity, and therefore would achieve optimal compression in the general case.
In addition, for the particular cases in which the conditions for optimal statistical compression (like those discussed in Section~\ref{sectionRandomPred}) are met, BDM is also proved to perform as optimally as entropy does because: at the local scales, algorithmic complexity already encompasses and subsumes entropy; and at the global scale, BDM converges to entropy via the CTM.
Therefore:
\begin{itemize}
\item in practice, BDM always performs equally or better than entropy;
\item BDM through CTM converges to algorithmic complexity in the limit, outperforming any statistical compression method;
\item both in principle and in practice, BDM remains sensitive to underlying structures even for objects with maximal Shannon entropy. 
\end{itemize}
Following from the fundamental properties of AIT (discussed in Section~\ref{sectionAITIntel}), such as universality, invariance, maximality, and optimal prediction, BDM offers a `\emph{principled}' alternative in comparison to statistical measures.
It is currently the only viable and computable approximation to algorithmic complexity grounded in AIT beyond statistical compression algorithms.

Thus, BDM is a hybrid neurosymbolic~\cite{ShethNeurosymbolicAIWhy2023} method that combines statistical machine learning and symbolic regression, and prediction that can be applied to inverse problems in causality~\cite{nmi,iscience}. 
BDM can be thought of as a quintessential type of neural network transformers (as in self attention) where it estimates the local (short-range) causality through algorithmic complexity while computing long-range correlations through Shannon Entropy guaranteed convergence (worse case)~\cite{Ozelim2024AssemblyTheoryReduced}.
Such a benchmarking method has already been reported in applications to data summarisation~\cite{MILS2025InfoScie} and in various fields ranging from cell and molecular biology to genetics~\cite{iscience,nar} to biosignatures~\cite{zenilld,abrahao,Uthamacumaran2024SalientnpjSBA}.

BDM is an approximation to algorithmic complexity and probability with its known limitations that demand explicit discussion. 
BDM's complexity estimates depend on choices of block size for decomposition, with different block sizes potentially yielding different rankings of sequence complexity. 
As mentioned above, this limitation occurs because of limited computational resources, since in the asymptotic theoretical limit, the algorithmic complexity could be approximated across any coarse-graining scale.
For the short range or smaller blocks, BDM uses a precomputed table of short programs (or equivalently, Turing machines with a few limited number of states) for which the Busy Beaver values are known and therefore one can solve the halting problem.
In the long range or for the largest block sizes possible, BDM upscales those actual algorithmic complexity values via Shannon (block) entropy.
Therefore, due to limited computational resources, the resulting BDM value may inherit the same limitations of entropy in the long-range scenario.
In addition, the empirical application of the method's reliance on decomposing sequences into overlapping or non-overlapping blocks means that patterns spanning boundaries between blocks may not be captured optimally, potentially underestimating complexity for sequences with long-range dependencies. 
These are fundamental characteristics of practical computable approximations to algorithmic complexity or algorithmic probability, which remain uncomputable in the general case.
    
Despite these limitations, our use of BDM is justified for reasons that directly address concerns about result interpretation: 
first, the models we evaluate fail predominantly on short sequences where BDM's approximation is most accurate and where the gap between estimated and actual algorithmic complexity values is minimal;
secondly, our conclusions do not depend on fine-grained complexity distinctions but on coarse patterns (models fail across broad complexity ranges rather than at specific threshold values where approximation errors might matter);
thirdly, our neurosymbolic baseline employs actual program synthesis through systematic enumeration rather than BDM estimation, yet reaches qualitatively similar conclusions.
While future work comparing alternative complexity metrics may uncover or highlight other aspects or discrepancies, we argue that the robustness of LLM failures across these multiple lines of evidence suggests our core findings about inadequate algorithmic reasoning in current models remain sound regardless of specific approximation method choices.

\subsubsection{Applicability of CTM and BDM to abstraction and planning in machine learning}\label{sectionWhyCTMBDM}

BDM with CTM can be applied both as a reference and as a direct generative model.
This is because it provides a fundamental complexity-based value estimation that can guide and evaluate other predictive and learning approaches, but also as a stand-alone predictive system. 

    CTM helps identify the set of candidate underlying generative mechanisms and provide a set of models from which it can actively predict future values by running it further into the future providing a set of projections. CTM forecasting requires an iterative refinement process in which multiple possible generative programs are tested and updated. CTM can help select the most likely program candidates by favouring those with lower complexity in accordance with the principles of algorithmic probability. 
    
    In a predictive task, multiple candidate programs generated by CTM are evaluated against new observations, discarding those that are not consistent with the new data while retaining the set of shortest valid programs that do. Planning requires CTM as the algorithmic mechanism to iteratively refine predictions from projections. CTM serves as a criterion for model selection---helping identify which approach best maintains parsimony and explanatory power---rather than functioning as a decision-making agent of its own.
    
    BDM then stitches multiple programs that can explain longer pieces of data and larger objects by using the rules of classical information theory, serving as a reference point to compare different models based on how well they align with the inherent complexity of the data. By breaking down an object into smaller pieces and estimating their individual algorithmic complexity using CTM, BDM provides a tighter recursive upper bound to traditional pattern matching. BDM leverages, therefore, both algorithmic and classical information theory as a proxy for deeper connections to causality, allowing it to indicate how predictable a time series or integer sequence is. Both CTM and BDM combined can benchmark different models on the basis of how efficiently they approximate the set of shortest best explanatory and generating mechanisms.   


The way BDM approaches uncertainty is to update the belief at time $t$ of an object $s$ (e.g., an integer sequence), and choose a (small) program $p^\prime$ to explain for the next digit $i\in s_{i-1}$ deviating from the previous hypothesis $p$;
or in case we do not have (or we cannot obtain) such a program for this observation, we combine smaller programs $p^{\prime\prime}$ to explain observation of digit $i\in s_i$ at index $t+1$. 
In this manner, the ability of BDM to capture both local and global patterns in a time series or integer sequence makes it a powerful tool for approximating complexity and enabling prediction, aligning with the principles of algorithmic probability and Levin's universal distribution.

BDM shows some fundamental similarities but in pure form to ``Attention is All You Need'' algorithms and LLM's by assigning different weights to different parts of an object focusing both on short-range and long-range correlations where the short-range is recursively correlated hence based on causally generated models for that patch of data unlike LLMs and other ML approaches that rely only on Shannon-entropy-based correlations or basic pattern-matching that BDM only uses for its long-range correlations. BDM is therefore a proper generalisation of the short- and long-range capabilities that gave LLMs their particular advantage in language~\cite{bdm}. Together with CTM as a universal generator~\cite{ctm3}, the CTM/BDM combination represents a model of models of languages, where languages are all computer languages, and a superset of LLMs themselves.


As mentioned above, a limitation of CTM is that running CTM to approximate model compression and achieve optimal prediction is computationally very expensive. If there were infinite resources, CTM would perform perfect recursive compression and provide the most optimal answer to any computable question given an observation. However, even with access to infinite resources, there are no theoretical or practical guarantees of LLM convergence to any optimal answer. In practice, LLMs are currently more expensive in applications where approaches like CTM could deliver better results (such as for this benchmark, empirically proven to better characterise questions and predict answers encoded in the form of binary sequences) without spending billions of USD in training giant neural systems like LLMs. However, our point is that one does not need to pick one over the other as they can be combined to provide the best approximation to both an optimal but efficient path to an answer under time and resource restrictions. In this regard, CTM/BDM is a resource-bounded approximation to optimal inference that combines pure forms of each side (neuro-based on classical statistics, and symbolic-based on optimal theory). 
The CTM/BDM combo represents the purest form of neurosymbolic computation with no extra steps.

In the framework we propose, CTM and BDM are used as a benchmark to evaluate model performance and as a representative of a Universal AI~\cite{hutter2005universal} method capable of ASI~\cite{Solomonoff1986}.
They can be applied to test both:

\begin{itemize}
    \item \emph{compression as model abstraction}: The BDM can approximate the algorithmic complexity of a time series by decomposing it into smaller subsequences (blocks), computing the complexity of each block using CTM, and summing up the block results. This serves as a measure of the recursivity of the time series but also serves as a method to find generating mechanisms (a set of algorithms that produce each past and possible future element/token of an object, in particular, a time series).
    \item \emph{prediction as planning}: Using the BDM complexity as a proxy for the time series' regularity, one can infer the predictability of future values. Lower BDM complexity implies a simpler underlying structure, which can help in forecasting future elements of the series---which is similar to how algorithmic probability and universal distribution can be used for predictive modelling. (See Sections~\ref{sectionPredictionandCompression} and~\ref{sectionLevinsearch}). This is related to planning, because once several program pathways are identified, one can verify each against the next token and update the program set (by discarding those programs that did not fit the next token) while keeping the shortest program criterion.
\end{itemize}

\subsection{A method for measuring comprehension via algorithmic probability}

As explained in Sections~\ref{sectionAITIntel} and~\ref{sectionBDMforASI}, BDM is a divide-and-conquer method which extends the power of a CTM that approximates local estimations of algorithmic complexity via theory of algorithmic probability, a foundational result established in AIT.
The method consists of finding the sequence of computer programs that can generate the original piece of data---each program represents a hypothesis or model for the time series and a sequence of datasets that can be interpreted as time series, binary and non-binary---, providing a closer connection to complexity (or irreducible information content) than previous attempts based on statistical regularities such as popular lossless compression schemes~\cite{bdm}. 

Based on AIT, we measure comprehension of LLMs (see Section~\ref{sectionCompCompre}) with a test designed to assess the model's ability to generate code or mathematical models/formulae that compress sequences of increasing complexity. 
Non-binary sequences are categorised into three levels of complexity (Low, Medium, and High) representing datasets that exhibit simple, intricate, and random patterns, respectively. 
Binary sequences, on the other hand, are classified as either random or what we call `climber strings, low-complexity strings as defined in the following section. Thus, a pragmatic compression-as-comprehension test is designed and applied to various LLM models and versions, encompassing test elements of diverse complexity classes which can be understood and compared individually and collectively.

\begin{algorithm}[ht!]
\caption{Pseudo-code for SuperARC framework}\label{algo1}
\begin{algorithmic}[1]
\Require \\
\begin{itemize}
    \item $D_{low}$, $D_{medium}$, $D_{high}$ (datasets of any type with low, medium and high complexities with sizes given as $ \left| \cdot \right| $. These are needed to ensure complexity diversity 
    but the choice of three groups is arbitrary and can be changed by the user.);
    \item $enc$ (encoding chosen to put the datasets in a common format);
    \item $\mathcal{M}$ (complexity metric used to qualify the datasets and quantify the complexities of the models created by LLMs);
    \item $\mathcal{T}$ (test formula to evaluate a candidate model).
\end{itemize}
\State $c_{\mathcal{M}} \Leftarrow $ an array containing binary values.
\State $Aux_{\mathcal{M}} \Leftarrow $ an array containing auxiliary values.
\State $All_{\mathcal{M}} \Leftarrow $ an array containing complexity values.
\For{$k \in \{low,medium,high\}$}
\State $D_{k,encoded} \Leftarrow $ encoding of $D_k$ using $enc$ (the UTF-8 or ASCII binary representation of strings or a binary representation of integers, for example).
\For{$j \in \{1,2,...,|D_{k,encoded}|\}$}
        \State $R_{k,j} \Leftarrow $ the response obtained from prompting a LLM model to write a program to reproduce the $j$-th element of $D_{k,encoded}$.
        \State $c_{k,j} \Leftarrow $ a binary variable indicating if the output obtained after running $R_{k,j}$ is correct (equal to the input dataset) or not.
        \State $\mathcal{M}(R_{k,j}) \Leftarrow$ the complexity of $R_{k,j}$ according to $\mathcal{M}$.
        \State $a_{k,j} \Leftarrow $ a vector with real-valued variables representing the result of applying auxiliary functions to $R_{k,j}$.
        \State Append $c_{k,j}$ to $c_{\mathcal{M}}$.
        \State Append $\mathcal{M}(R_{k,j})$ to $All_{\mathcal{M}}$.
        \State Append $a_{k,j}$ to $Aux_{\mathcal{M}}$.
\EndFor
\EndFor
\State $\mathcal{T}(c_{\mathcal{M}},All_{\mathcal{M}},Aux_{\mathcal{M}}) \Leftarrow$ the test score for the candidate model.
\end{algorithmic}
\end{algorithm}
In other words, the SuperARC framework assesses how the LLM model is able to generate an algorithm $\mathcal{A}$ such that, when applied to the input data set $\tau$, it is able to compress this input by learning its features and producing a compressed representation $\partial$. Then, by inverting such an algorithm and obtaining the algorithm $\mathcal{A}^{-1}$, the inputs $\tau$ are obtained losslessly with minimal complexity of the combined algorithms according to a complexity metric $\mathcal{M}$.
From AIT,  we have that universal induction indicates the best way to predict future elements of a sequence as favouring the simplest (i.e., the least complex) hypothesis or explanation---which aligns with the concept of Occam's razor, as discussed in Section~\ref{sectionAITIntel}. 
By minimising the complexity of the description of the data ($ \mathcal{M}\left( \mathcal{A}^{-1} \circ \mathcal{A} \right) $), the theory effectively formalises prediction ($ \mathcal{A}^{-1} \circ \mathcal{A} \colon \{\tau \to \partial \to \tau\} $).
In this manner, the framework can be described as the pseudo-code in Algorithm~\ref{algo1} for which
the LLM is presented with the following task:

\begin{eqnarray*}
\minimize_{\mathcal{A},\mathcal{A}^{-1}}&& \mathcal{M}\left( \mathcal{A}^{-1} \circ \mathcal{A} \right) \\
\subjto && \mathcal{A}^{-1} \circ \mathcal{A} \colon \{\tau \to \partial \to \tau\}
\end{eqnarray*}

It is important to clarify that the encoding $enc$ in Algorithm~\ref{algo1} does restrict the analysis. 
For example, different data types could be encoded as vectors obtained in the latent space of a given deep neural network. As long as the encoder algorithm is known and common to all the input data, the framework can be applied because of the theorems in AIT. 
In particular, the information non-increase theorem~\cite{Calude2002} indicates that, for any computable function $f$, the inequality $ K(f(x))\leq K(x) + K(f) + \mathbf{O}(1) $ holds. 
Therefore, once $f$ is fixed for all data sets considered, $K(f)$ becomes an additive constant that does not affect the analysis when $K(x)$ is 
used to investigate the value of $K(f(x))$. 
In other words, the encoding is not important as long as it is known and kept fixed during the analysis.

It should also be noticed that CTM/BDM is not purely a brute-force approach~\cite{bdm}. 
Although CTM alone would be a brute-force approach that seeks the shortest computer programs explaining the data, BDM is not (see Section~\ref{sectionBDMforASI}). CTM/BDM operates by exploiting the best of both worlds~\cite{bdm}, operating at the fine balance between what traditional Machine Learning and Deep Learning approaches implement, while also combining it with optimal Bayesian causal inference~\cite{zenilreview} or algorithmic deconvolution~\cite{nmi}.
As further discussed and explained in Section~\ref{sectionAIDforSuperARC}, we have called this approach Algorithmic Information Dynamics (AID)~\cite{zenil2020algorithmic,zenilbook1,zenilbook2}.

In order to present a quantitative implementation of a test following the SuperARC framework, an exploratory analysis is needed. This will be described in the next Section~\ref{sectionDesign}.

\subsection{Design of experiments}\label{sectionDesign}

To evaluate how LLM models can be assessed within the SuperARC framework, we consider datasets composed of non-binary and binary sequences. 
It is worth highlighting that this choice is not mandatory, and all data should be encoded consistently (see also Section 7.13).
Different encodings may lead to different BDM values and thus other benchmarks may favour one type/structure of data or the other.
Nevertheless, as BDM is an approximation to algorithmic complexity, AIT guarantees that algorithmic probability converges to the optimal solution in the asymptotic limit (if enough computational resources are provided).

Although prompting has been shown to considerably impact the performance of LLMs in a code generation task~\cite{wang2024,Li2025}, we use the simplest possible prompt to avoid providing additional information to the LLM which could bias its output (even if towards better codes). Also, for the same reasons, we performed zero-shot learning tasks.

The non-binary sequences of integers used in the questions were divided into 3 levels of complexity, as indicated in the previous subsection. Intuitively, the complexity levels could be explained as follows:

\begin{enumerate}
\item Low Complexity: Sequences of digits or integers whose pattern is easily recognisable by a person and highly compressible. They have low CTM/BDM values.

\item Medium Complexity: Sequences of digits integers generated recursively with longer formulas than those in the simpler set. They have intermediate CTM/BDM values.

\item High Complexity: Random-looking sequences of digits or integers. They have high CTM/BDM values.

\end{enumerate}

The following experiments were carried out: 

\begin{itemize}
\item \emph{Next-digit prediction task with binary and non-binary sequences}: We prompted LLMs specialising in time series forecasting to predict the digits of non-binary sequences of increasing complexity of two type. The first type are random binary sequences according to increasing CTM/BDM, and the second type are called `\emph{climbers}'. 
\begin{itemize}
    \item \emph{Climbers} are strings that when sorted by algorithmic probability in descending order (highest to lowest probability), or algorithmic complexity in ascending order (lowest to highest randomness), these binary sequences are longer than strings in their same complexity group defined as strings with the same or very close complexity values as measured by BDM but of significantly longer length than them. This means that for these strings, their complexity is definitively not driven by string length only but by (simple) their internal structure, aligning with an intuitive understanding of simplicity vs. randomness in sequence structure~\cite{soler2014calculating}. In other words, these are strings that clearly correspond to lower randomness values because they show lower complexity estimations compared to shorter strings in the vicinity. For example, the sequence 0101010101... up to certain finite size $n$ is clearly less algorithmic random and therefore more algorithmic probable than any other more random looking string, short or long of the same size $n$, and therefore such a patterned sequence must appear earlier in a complexity hierarchy if BDM works correctly. So, knowing these are highly structured strings with high algorithmic probability, we tested whether LLMs would identify them by producing short models and better predictions for them compared to others.
\end{itemize}

\item \emph{Free-form generation task with binary and non-binary sequences}: We challenged advanced language models, including GPT-4o, GPT-o1, Claude 3.5 Sonnet, GPT-4o-mini, Grok, o1-mini, Qwen, and DeepSeek, to generate models, algorithms, formulas, or Python scripts capable of reproducing specific target sequences.

\item \emph{Code generation task with non-binary sequences}:
An answer was requested to generate source code that would produce sequences of numbers using prompts of the following type: 
\begin{quote}
    ``With no additional explanations or comments or notes, write the code in \{\} programming language
to produce the sequence {[}sequence{]}.
\end{quote}
A full list of all sequences can be found in the Sup. Inf.. Each prompt was submitted with varying values for the temperature parameter: {[}1, 0.7, 0.5, 0.2, 0.001{]}, allowing for a comparison of its effect on the quality of the outputs.

Each prompt was formulated in such a way that it was expected that the LLM would return the code generating the defined sequences in the following programming languages: ArnoldC, C++, Python, Mathematica, Matlab, R, JavaScript. After the codes were generated, they were executed, and their performance was compared.

\end{itemize}

\subsubsection{Code and free-form generation tasks}
Code generation in different programming languages was performed exclusively using non-binary sequences of increasing complexity and only run by ChatGPT. In contrast, free-form generation was conducted using both non-binary and binary sequences and prompted to a list of the most prominent LLMs. Depending on the case, the following processing steps were applied according to the Algorithm \ref{algo1}: 

For the $j$-th element of $D_{k,encoded}$, $k \in \{low, medium, high \}$, the output code (able to reproduce these elements) provided by the LLM model was $R_{k,j}$. Then, for these, after being logically evaluated to ensure that they produced the expected results, the following functions were applied.

\begin{itemize}
\item Auxiliary functions:
\begin{itemize}
    \item The script and model/formula lengths generated by LLMs were measured by the number of characters.
\item Since program or model/formula length was taken as an indicator, and sequences were defined as either single- or multi-digit numbers, a process called normalisation was applied to the original code generated.
This normalisation took out repetitions of the entire sequence from the code if this was included. For example, if a script that aims to reproduce the sequence `1, 2, 3, 4' were to be `Print(1, 2, 3, 4)', after being normalised, it would be transformed
into `Print()'. In this way, we obtained lengths of normalised and non-normalised answers. 
\item Compression: The zlib algorithm was applied to the normalised and non-normalised answers generated; also to the target sequences of digits alone in such a way
that we obtained ASCII representation of the compressed and non-compressed variations of all scripts and their lengths. 
\item For the code in different programming languates, a compression percentage measurement was designed: this is an indirect
measurement of compression based on the number of elements of a sequence
and their order of appearance in the answer to a question. For example,
if the target sequence is ``1, 2, 3, 4, 5''  the code Print({[}1, 2 , 3, 4, 5{]}) is considered to be 100\% uncompressed,
not only because it contains all elements of the original sequence
but it also keeps its original order. On the other hand, the code For i=1
to 5 Print(i)  is considered to have a higher degree of compression,
since it only contains 2 of the original elements, but the logic to generate
it ``lives'' in the code. Additionally, the code repeat print(n+1)
is considered more compressed. 
\item A set of filters was designed to study our results and they were applied accordingly if non-binary or binary sequences were the target: 
\begin{itemize}
\item \emph{Print code} (applicable to binary and non-binary sequences): this type of program could be of two types: a) the target sequence defined as a variable or a set of variables followed by a print(sequence), for example a=`1,2,3', print(a), b) a simple print(Sequence) without definition of variables, for example print(`1,2,3').
\item \emph{Correct code} (applicable to binary and non-binary sequences): if the given answer by any LLM models generated the target sequence. 
\item \emph{Print-correct} (applicable only to non-binary sequences): the combination of the two above. 
\item \emph{Incorrect-print} (applicable only to non-binary sequences): the negation of the previous one.
\item \emph{Ordinal} (applicable only to binary sequences): The model or formula exclusively references the positional arrangement of digits to reproduce the target sequence.
\end{itemize}
\item The application of filters was done over all our measurements, allowing classification by averages of compressed, not compressed, normalised,
and not normalised answers, filtered by prints, or correct and all its combinations.
\end{itemize}
\item Correctness variable: Computer programs and models/formulae were evaluated or executed in their respective compilers/interpreters to verify if they generated the target number sequences correctly. 
\end{itemize}

\subsubsection{Next-digit prediction task}
\label{nextDigitPredictionSubsection}

For the next-digit prediction task we used binary and non-binary sequences. We compared results obtained with different LLMs specialising in time series forecasting to predict values in the sequences used in our experiments. The models used included Chronos, TimeGPT-1, and Lag-Llama. Our criteria for selecting these models can be summarised as follows: 
\begin{enumerate}
\item researchers reported very high-quality predictions in zero-shot tasks, i.e., in time series never seen before;
\item they were compared to traditional machine learning models, showing superior results;
\item they are reported to capture dynamics in real-world datasets rather than relying on simple statistical patterns;
\item authors advocate for the superiority of LLM architectures in time-series forecasting;
\end{enumerate}

\begin{center}
\begin{figure}[H]
\makebox[\textwidth][c]{\includegraphics[width=.87
\textwidth]{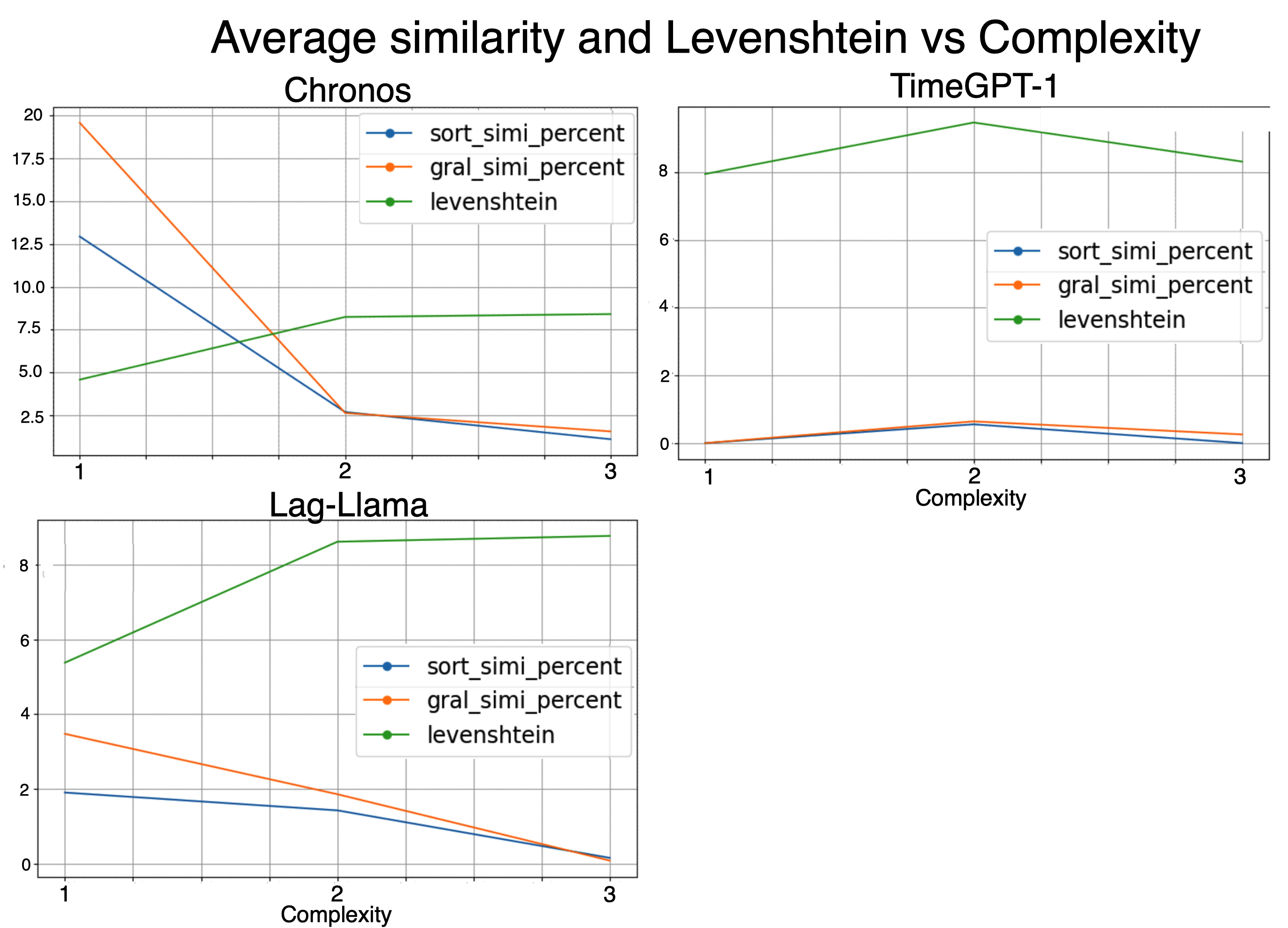}}%
\caption{Similarity over predictions with Chronos, TimeGPT-1 and lag-llama. Methods and descriptions in the Supp. Inf.}
\label{TimeseriesLLMPlots}
\end{figure}
\end{center}

We split our sequences into several segments, using the models described to predict the remaining portions, which correspond to 10\%, 25\%, 50\%, and 75\% of the sequence. This approach divided the sequence into a `root' and a `target'.
For instance, given the sequence {[}1, 2, 3,
4, 5, 6, 7, 8, 9, 10{]} and a prediction of 25\%, the `root' (the context provided to the prediction model) would be {[}1,
2, 3, 4, 5, 6, 7, 8{]}, with the `target' {[}9, 10{]} expected to be predicted. An asymptotic distribution of test results $\varphi_1,\ldots,\varphi_n$ for growing $n$ where $|s|=n$ should provide some insight into the generalisation of the capabilities of the LLMs to scale their reported abilities, if any.

We employed three methods to measure the accuracy of the predicted target: 
\begin{enumerate}
\item \emph{Sort similarity}: This measures how many elements in the target sequence were predicted correctly, with their order being considered.
\item \emph{General similarity}: This measures the correctness of predicted elements, without considering their order. 
\item \emph{Levenshtein}: This measures the Levenshtein distance between the expected and predicted sequences after converting them to strings. 
\end{enumerate}


 \bibliographystyle{IEEEtran}
\addcontentsline{toc}{section}{\refname}\bibliography{bibliography}

\newpage

\section{Supplementary Information}


\subsection{Algorithmic Information Theory (AIT) and Intelligence}\label{sectionAITIntel}

\subsubsection{Compression and machine learning}

Understanding compression as a necessary and fundamental characteristic of general intelligence refers to the ability to come up with a model capable of summarizing and eliminating redundancies by enabling one to explain more with less~\cite{zenil2020compression} or to gain ``the ability of explanatory compression'' ~\cite{orallo1998} in order to achieve (or approximate) the necessary and sufficient causal conditions for describing, predicting, explaining, or simulating a certain phenomenon. 

In machine learning models, such as large language models (LLMs), training involves learning to predict the next token in a sequence.
This is essentially an exercise in compression---understanding the structure of language or other data and compressing it into a representation that allows accurate predictions. 
LLMs can be thought of as word (token) time series predictors based on short- and long-range correlations that compress data from their very large training sets based on text repositories mostly available online, and captured in a much smaller object such as a giant matrix, whose numerical entries can partially and lossy reconstruct the training dataset. 
Such a compressor can simulate/predict the uncompressed information stored in a multidimensional tensor probability distribution in a manner comparable to the uncompressed data captured in the smallest possible model---the smaller the better, and hence the smaller the model is, the better compressor~\cite{zenil2020compression}. 
The compressor's success can be evaluated in terms of how much information is lost in transit between the original world description and the decompressed data from the LLM model.

In order to predict the future state of an event, a model shorter than the explanandum that captures its main features (object, event) is necessary in order to avoid conflating underlying laws with spurious patterns~\cite{Calude2017,Abrahao2021dSBpaperarxivBEPE}, and the more recursively compressed the model, the more adequate and less overfitted. 
`Recursively' here refers to ``mechanistically'' in the more general sense, that is, to any process whose unfolding (dynamics or evolution over time) can be mathematically modelled and determined---or more formally, a process that is computable---, and not only engaged in pattern matching as in statistical compression, which is only one type, and a limited one, of data/model compression. 
For example, recursively (i.e., algorithmically) compressing an object, such as a list of observations or events, yields the ability to predict some part of the object given other parts, as a byproduct of being able to run the compression process in reverse (decompression), should the events be removed from randomness (i.e., they are not disconnected from each other in such a way that any pattern, causation or correlation found is merely spurious).

An effective decompression process not only reconstructs or reassembles the original explanandum but it should be able to produce a continuation of it based on the continuation of the optimal recursive compressed features in reverse, producing a simulation that acts as a prediction on which a future action can be modelled. 
This principle of `compression', understood broadly as the process of simplifying a generative model as much as possible, constitutes the heuristic underlying any machine learning method with the purpose of avoiding overfitting, e.g., by ensuring the minimisation of the generalisation error while compromising the training error the least as possible.

The process of planning beyond what has made available, i.e., to predict the yet unseen, comprises making a comparison of the possible outcomes of the processes that unfold from the most succinct and expressive model (or theory) that is previously known.
Then, by adjusting it during an iteration over a recursive process of further comparisons between new unfoldings from future possible optimal models yet to be discovered, one obtains an evolving ground `truth' in a continuous learning process. 
As we discuss in Section~\ref{sectionAITIntel}, when such an iterative update process occurs over the entire algorithmic space one attains the most optimal prediction in the Bayesian sense~\cite{levin,miraculous}.

\subsubsection{Fundamental properties}\label{sectionAITbasics}

In the context of algorithmic information theory (AIT), universal computation is considered a central aspect of general intelligence in arbitrary systems.
These systems are in turn considered to be capable of making formal-theoretic predictions (e.g. of solar and lunar eclipses) with high accuracy according to a mathematical theory whose logical and equational derivations/predictions can always be verified by an unambiguous decision procedure~\cite{Abrahao2021bEmergenceAIDPTRSA,Calude2002}.
This underlies all science as it presumes and assumes that world phenomena can be described in a mathematical form in which science can deal with, e.g. using equations, computer simulations, mathematical modelling, etc. 
The ongoing rate of success of the scientific practice in finding compressible models for explaining and predicting natural phenomena evinces fundamental algorithmic and non-random characteristics underlying the universe itself~\cite{computableuniverse}. 
If reality were truly random, science as a predictive, model-building enterprise would be impossible.
A hallmark of a powerful compressed model is its capacity to unify previously disparate or anomalous observations under a single, coherent explanatory framework. 
The example of General Relativity, which provides a more compact description than Newtonian mechanics plus all its necessary ad-hoc corrections, demonstrates this property. 
A superior compression subsumes old models and explains their exceptions.

AIT is an overarching generalisation of classical (Shannon) information theory (see also Section~\ref{sectionRandomPred}) and the accepted mathematical definition that tells apart randomness from non-randomness able to objectively describe and quantify what a compact model is and what abduction (i.e., finding a model or theory that explains a given set of phenomena) and prediction corresponds in a formal mathematical setting.

Being one of the complexity indexes in AIT, \emph{algorithmic complexity} (also referred to as program-size, Kolmogorov or Solomonoff-Kolmogorov-Chaitin complexity) is a measure of the complexity of an object invariant (up to an object-independent additive constant) to the underlying formal theory, computation model and programming language.  
The value denoted by \( K(\sigma) \) of a finite string \( \sigma \) is the length of the shortest binary program (on an arbitrarily fixed universal Turing machine) that outputs \( \sigma \). 
The more compressible (or less random) a string \( \sigma \) is, the larger the positive value of \( |\sigma| - K(\sigma) \), where \( \left| \sigma \right| \) is the length of \( \sigma \). 
More complex (or random) objects require longer representative instantiations of their underlying generative model, while simpler, more regular objects can be generated by shorter programs~\cite{kolmogorov,chaitin,zenilbook1,zenilbook2}.
If a sequence $ \sigma $ can be represented by a shorter program $p$, the shorter program captures the regularities in $ \sigma $. 
In this sense, the program can be used to generate or predict future segments of the sequence, based on the learnt regularities, thus directly tying compressibility to the ability to predict future patterns.
The algorithmic complexity (along with other complexity indexes in AIT) of an object is proved to be universal (i.e., the value exists for every computing system) and invariant (i.e., the value remains the same according to any computing system, except for an object-independent constant) to the arbitrarily chosen computation model, programming language, probability measure, and formal theory~\cite{Cover2005,kolmobook,Chaitin2004}.
As we discuss in Sections~\ref{sectionCompCompre} and~\ref{sectionSuperARCframework}, both of these properties are crucial for measuring a system's intelligence in a manner that is generalisable beyond biases in human-centric metrics and real-world datasets made available for training.

Algorithmic complexity goes beyond strings, beyond binary and beyond computer programs. 
One uses this language or framework as a technicality given the fundamental nature of universal computation, including strings, binary languages, computer programs, etc. 
For example, as proven by Shannon any, discrete data can be transformed into binary without loss of information~\cite{Cover2005}.
Under the Church-Turing thesis, any effective description and decidable rule can be enacted by a computer program.
These computer programs are also not restricted to deal with strings only, just as computers deal with images, vectors, tensors, sounds, video or anything else that can be encoded.
Not only playing a crucial role in data compression, algorithmic complexity is therefore a concept of fundamental nature in the scientific method~\cite{bibid}. 
As we discuss in Section~\ref{sectionCompCompre}, science itself can fundamentally be seen as compressing natural phenomena, as the process of producing ever more compact representations of the physical world into rules, equations, and scientific models that provide ever greater explanatory and predicting power.

For illustration purposes and without loss of generality, let us consider a sequence of integers. 
The ability to compress such a sequence effectively is often taken as an indicator of understanding a model that is capable of generating the sequence, and one does not need to take the minimum requirement to the limit to find short plausible explanations.
The decimal expansion of an irrational number like $ \pi $ may appear complex and non-repeating, but the entire infinite sequence can be generated by a very simple mathematical formula (and algorithm). 
This demonstrates that a complex appearance does not necessarily entail a complex underlying generating process. 
In this sense, sufficient knowledge for explaining (or `comprehending') $ \pi $ is not a matter of memorizing its digits, but in coming up with---in other words, performing abductive reasoning---the mathematical formula that generates it~\cite{bibid}.
These explanations are computational in nature so that they can be built (and employed) by an universal constructor independent of the arbitrary choice of the computation model that one may deem fundamental. 
Such a universal constructor can be equivalent to a Turing machine, although not necessarily identical or isomorphic to a Turing machine, not even to the mechanistic nature upon which the machine is implemented or embedded into. 

Within the context of AIT, universal induction (based on Solomonoff's Theory of Inductive Inference) proves that prediction and compression are tightly linked in order to obtain optimal abductive reasoning, and therefore inferring the best model or theory.
Solomonoff~\cite{solo} laid the foundation for \emph{algorithmic probability} (another complexity index in AIT), which is a universally optimal probability measure in which an object is generated by a random program fed into a universal constructor (see Sup. Inf.).
The algorithmic coding theorem (ACT)~\cite{Calude2002,Downey2010} in Equation~\eqref{eqACTcomplete} displays one of the central results not only in AIT, but also one that has pervasive implications for any mathematical endeavour in science, particularly in artificial intelligence~\cite{Minsky2014ACT}.
Because algorithmic probability and algorithmic complexity are inversely proportional, ACT states that the more frequent an object is generated, the lower its algorithmic complexity, and vice-versa.

\begin{equation}\label{eqACTcomplete}
K(s)
= 
- \log P(s) 
=
 	- \log\left( \mathbf{m}\left( s \right) \right) \pm \mathbf{O}( 1 )
=
- \log\left( \sum\limits_{ p \in \left\{ w \colon U(w) = s \right\} } 2^{-|p|} \right)
\pm \mathbf{O}( 1 )
\text{ ,}
\end{equation}
where:
\begin{itemize}
    \item \( P(s) \) is the \emph{algorithmic probability} of string \( s \);
    \item \( K(s) \) is the (prefix) \emph{algorithmic complexity} of string \( s \);
    \item $ \mathbf{m}\left( s \right) $ is a \emph{maximal} semicomputable semimeasure on the object $ s $;
    \item $ \sum\limits_{ p \in \left\{ w \colon U(w) = s \right\} } 2^{-|p|} $ is the universal (a priori) probability of the event $ s $.
\end{itemize}

Notice that a semicomputable semimeasure $ \mathbf{m}\left( \cdot \right) $ is said to be \emph{maximal} if for any other semicomputable semimeasure $ \mu\left( \cdot \right) $---including any computable probability measure one may arbitrarily choose---, where $ \sum\limits_{ x \in \left\{ 0 , 1 \right\}^* } \mu\left( x \right) \leq 1 $, there is a constant $ C > 0 $ (which does not depend on $ x $) such that, for every encoded object $ x $,
 $ \mathbf{m}\left( x \right) \geq C \, \mu\left( x \right) $.
This means that any arbitrarily chosen (computable or semicomputable) probability (semi)measure can only assign a higher probability to an event than the algorithmic probability of the event could up to an independent multiplicative constant.
Thus, across the landscape of all generative processes of each object that can be generated, the \emph{universal distribution} defined by the (algorithmic) probability from Equation~\eqref{eqACTcomplete} eventually dominates all other prior distributions one might devise~\cite{miraculous}, thereby enabling one to infer the model optimally in the asymptotic limit.

The universal probability of an event can be understood as the probability of randomly generating (by an i.i.d. stochastic process) a prefix-free (or self-delimiting) program that generates the event.
In other words, the probability that a randomnly generated explanation (like a computer program) can generate an object~\cite{miraculous}.
In conjunction with the invariance theorem of algorithmic complexity, ACT is related to the universally optimal encoding of objects, generalising information content measures for measurable spaces beyond what classical information theory is able to achieve~\cite{Downey2010,Burgin2009,Abrahao2021bEmergenceAIDPTRSA}, e.g., setting the theoretical underpinnings of any method based on algorithmic probability such as the coding theorem method (CTM)~\cite{ctm3,ctm1,ctm2,zenilbook2}, universal (Solomonoff) induction~\cite{Solomonoff1986,hutter2005universal,solo}, Levin's universal search~\cite{levin}, and minimum description length~\cite{miraculous,kolmobook,mdl}.
(See also Section~\ref{sectionBDMforASI}).
Universal predictors, such as those based on Levin's universal search (Sup. Inf.) or universal induction, use the ACT to model the most likely future events based on past data, capturing the link between compression and prediction proved to hold in the theory of algorithmic randomness, as we discuss in Section~\ref{sectionRandomPred}.

%

\subsubsection{Randomness, prediction, and compression}\label{sectionRandomPred}


In comparison to statistical randomness (defined as a lack of patterns which cannot be identified by a particular statistical test), a key aspect of algorithmic complexity is the deeper relationship with \emph{algorithmic randomness}, whose lack of patterns is not only attested by an arbitrarily chosen statistical test but also by any conceivable formal-theoretic mathematical test effected by a computational decision procedure, thereby yielding incompressibility (and vice-versa).
For example, in the case of a unidimensional machine, a sequence is considered algorithmically random if its shortest generative program has essentially the same length of the sequence itself, that is, no shorter program exists that can generate the sequence.
In the case of a higher-level programming language, such a sequence can at best be described as a program of the type `print($x$)'. 
Formally, an object $x$ is (algorithmically) random if $ K(x) \geq |x|  - \mathbf{O}(1) $, where $|x|$ is the size of the object. 
Notice that $x$ is incompressible because no smaller program can produce it (except for a string-independent constant that may only depend on the arbitrarily chosen machine or programming language), which contrasts with highly structured or predictable data, where $K(x)\ll|x|$. 
A random string cannot be significantly compressed~\cite{chaitin}, implying that intelligence (as seen in systems that can compress data) involves recognising non-random patterns in data.

Statistical randomness (such as when a random event is measured by entropy-based statistical methods) is quantifiable by degrees of uncertainty based on frequency distributions, which is indeed effective and optimal when compressibility arises from repetition or statistical redundancy. 
This is because entropy is known to achieve optimal compression for pure stochastic processes that are ergodic and stationary~\cite{Cover2005}. 
Under these same conditions, statistical compression methods, such as the algorithms in the LZ family, have been proven to also achieve optimal compression.
However, in case those conditions are not met, such as when the random source is not guaranteed to be stationary or the process is mixed (partially stochastic and partially mechanistic like complex systems found in nature), entropy (or any other statistical method) is proved to diverge from the optimal value given by algorithmic complexity---value which is also proved to be invariant under the arbitrary choice of programming language, computation model, probability distribution, and formal theory. 
As we further discuss in Section~\ref{sectionCompCompre}, this invariance is one of the reasons compression is a task that subsumes other intelligent systems' capabilities; and why other statistics-based approaches such as LLMs are limited in comparison to neurosymbolic approaches that include the algorithmic view and subsumes the statistical one.


Thus, in the general case, statistical compression methods cannot achieve optimal compression even `in principle'. 
Entropy cannot detect algorithmic or generative structure that is not statistically apparent.
When a statistical compression algorithm such as {ZIP} or {LZW} compresses $x$ into other computer files much smaller than $ \left| x \right| $, it is a sufficient proof of \emph{non}-randomness. 
However, if it does not compress $x$ (or if it can only compress $ x $ into another file of size of the same order of $ \left| x \right| $), it is not a proof of randomness because there may be a generative program that the statistical compression is unable to produce/find. 
In other words, algorithmic randomness always implies statistical randomness, but the opposite does not always hold.

In practical terms, compression algorithms like {ZIP} or {LZW} attempt to reduce the size of the data by identifying recurring statistical patterns. 
When an AI system like ChatGPT can generate a concise and generalisable program to reproduce a sequence, it shows that the model has `compressed' the information by finding underlying patterns. 
Nevertheless, algorithmic compression is more powerful because it can continue searching for algorithmic generative processes while statistical pattern matching cannot. 
Pattern matching can only be descriptive of an entire object, but computation-based regression, symbolic processing, and program synthesis can be fundamentally generative; in the sense that other mechanisms or causal processes among (underlying or in common to) the parts of the object are prone to be swept over in an algorithmic compression.

In addition to formalising randomness beyond statistical patterns, the theory of algorithmic randomness established a profound connection between prediction and compression~\cite{Downey2010,Chaitin2004,Calude2002,kolmobook}. 
It is equivalent to say that a sequence is algorithmically random (i.e., incompressible) if, and only if, no computable betting strategy (martingale) can succeed on it, establishing the equivalence between the inability to compress a sequence and the impossibility of predicting its future bits using any computable betting strategy a formal theory can devise.
This result demonstrated that the ability to compress a sequence is equivalent to being able to predict its future bits using any effective method (mathematical proof of this direct equivalence is provided in the Sup. Inf. Section~\ref{sectionPredictionandCompression}).
It also highlights the deep interplay between randomness, prediction, and compression, 
setting the underpinnings of our framework introduced in Section~\ref{sectionSuperARCframework}.
These results in AIT demonstrate that compression is not only a particular task that a (-n artificial or physical) system might be able to perform.
In fact, as we further discuss in Section~\ref{sectionCompCompre}, it is also a `task' that subsumes other comprehension tasks while taking into account the entire algorithmic space; but in an agnostic manner to the arbitrarily chosen computational capabilities, formal-theoretic features, and a priori knowledge.

\subsubsection{Equivalence between compression and prediction via Martingales}\label{sectionPredictionandCompression}

An infinite sequence (or equivalently, a real number) is denoted by \( x = x_1x_2x_3\ldots \), where each \( x_i \in \{0,1\} \).
Let $ x \upharpoonright_{ n } $ the sequence of the first $ n $ bits of the binary representation of $ x $.

A \emph{(super)martingale} function \( d: \{0,1\}^* \rightarrow \mathbb{R}^+ \) represents a betting strategy that satisfies
the fairness conditions:
\begin{itemize}
	\item[] 	\begin{equation}\label{eqMartingale}
	d(\sigma) = \frac{d(\sigma0) + d(\sigma1)}{2} 
	\text{, in the case of a martingale;}
	\end{equation}
	
	\item[] \begin{equation}
	d(\sigma) \geq \frac{d(\sigma0) + d(\sigma1)}{2} \text{, in the case of a \emph{super}martingale.}
	\end{equation}
\end{itemize}
This conveys the idea that the expected capital after the next bet is either equal (for martingales) or is lost (for supermartingales) with respect to the previous capital.

A (super)martingale \( d \) \emph{succeeds} on a sequence \( x \) if:
\[
\limsup_{n \rightarrow \infty} d\left( x \upharpoonright_{ n } \right) = \infty
\]
This implies that the betting strategy can make an unbounded amount of money on \( x \) at the asymptotic limit as the length of the initial segment of $ x $ increases.

A martingale \( d \) is \emph{(left) semicomputable} if there is an algorithm that computably enumerates the left cuts of \( d( \sigma) \) for any given string \( \sigma \).
Thus, if a semicomputable $ d $ succeeds on a sequence $ x $, this (super)martingale can be interpreted as revealing the existence of 
an algorithm that can computably enumerate a betting strategy that always increases its capital gains at the asymptotic limit as the length of the initial segment of $ x $ increases.
This holds even if eventually one loses expected capital in the next bit (as the supermartingale condition allows).
The existence of such an enumerating algorithm guarantees that there is at least one asymptotically effective way of predicting the forthcoming bits in the infinite sequence $ x $ so as to render the betting strategy successful as this process goes on.

Now, remember that an algorithmically random infinite sequence (or real number) $ x $ is incompressible up to a fixed constant so that $ K\left( x \upharpoonright_n \right) \geq n - \mathbf{O}(1)$, and the constant does not depend on $ n $.
Therefore, if $ x $ is \emph{not} algorithmic random, then for any $ k $ and for any $ n' \geq 1 $, there is $ n \geq n' $ such that $ K\left( x \upharpoonright_n \right) < n - k $.
In other words, $ x $ is compressible (by more than a fixed value) infinitely often.

The notion of predictability conveyed by martingales should reflect the fact that in the case of an algorithmically random sequence, there would not exist an enumerating algorithm that guarantees that there is at least one asymptotically effective way of predicting the forthcoming bits in the infinite sequence $ x $ so as to render the betting strategy successful as this process goes on.
In summary, one should not expect to be able to devise a computably enumerable betting strategy that is successful on a perfectly random sequence.
Indeed, the equivalence between (super)martingales and algorithmic randomness holds:
\begin{itemize}

\item If a sequence \( x \) is not algorithmically random (i.e., it is compressible infinitely often), then there exists a semicomputable martingale that succeeds on \( x \).

\item Conversely, if there exists a semicomputable martingale that succeeds on \( x \), then \( x \) is not algorithmically random (i.e., it is compressible infinitely often).

\end{itemize}
Another equivalence between algorithmic randomness and the notion of predictability can be achieved from (stochastic or probabilistic) martingale processes, which are defined upon real-valued random variables.
In this case, one can demonstrate that an infinite sequence is algorithmic random iff no \emph{computable} martingale process succeeds on it~\cite{Downey2010}.

Usually, (super)martingales and randomness are demonstrated to be equivalent via proof- and measure-theoretic statistical (Martin-L\"of) tests.
A sequence is incompressible iff it does \emph{not} pass on any ($ \Sigma_1^0 $) theoretic statistical test~\cite{Downey2010}, thereby called (prefix) algorithmic random ($ 1 $-random or $ \mathbf{O}(1) $-$ K $-random).
It is important to remark that the triple equivalence between predictability (via martingales), statistical tests (via proof and measure theory), and compressibility (via algorithmic complexity) establishes one of the foundational results in the theory of algorithmic randomness and algorithmic information~\cite{Downey2010,Calude2002}.

In order to highlight the connection between predictability and compressibility, in the following, we introduce a novel and alternative proof for the \emph{direct} equivalence between compression and (successful computably enumerable) martingales.

Regarding algorithmic randomness deficiency~\cite{kolmobook}, one can define a weaker notion of supermartingales to account for language and computation model dependencies. 
We say a function $ d $ is a $ C $-\emph{supermartingale} iff for any sequence $ \sigma $, there is a constant $ C \geq 0 $ (that does not depend on $ \sigma $) such that
\begin{equation}\label{eqDefSupermartingaledeficiency}
	\frac{ 1 }{ 2^C }
	\leq
	\frac{ d( \sigma 0 ) + d( \sigma 1 ) }{ 2 \, d( \sigma ) }
	\leq
	\frac{ 1 }{ 2^{ - C } }
	\text{ .}
\end{equation} 
On the one hand, the expected capital from the bet in the next bit is never smaller than a constant ratio of the previous bet.
On the other hand, one may gain some expected capital in the next bet but only up to a multiplicative constant.
Instead of a constant $ C $, one can also define $ \mathfrak{d}( \sigma ) $-supermartingale, where $ \mathfrak{d} \colon \left\{ 0 , 1 \right\}^* \to \mathbb{N} $.
For the present purposes, we focus on the constant that does not depend on the object.

From the basic properties in algorithmic information theory, it is straightforward to prove that the function
\begin{equation}\label{eqFirstCsupermartingale}
	d_{(1,k)}( \sigma ) = \frac{ 2^{ \left| \sigma \right| } }{ 2^{ k + K( \sigma ) } }
\end{equation}
is a $ \mathbf{O}( 1 ) $-supermartingale.
Clearly, if $ x $ is not an algorithmic random infinite sequence, then $ d_{(1,k)}( x \upharpoonright_n ) \geq 1 $ for every $ k $ and $ n $ in which $ K( x \upharpoonright_n ) < n - k $.
From the definitions and the property that the summation of any two $ C $-supermartingales is also a $ C $-supermartingale, one can demonstrate by induction that if $ d_1 , d_2 , \dots , d_i , \dots $ is an infinite family of $ C $-supermartingales and $ \sum\limits_{ i = 1 }^{ \infty } d_i( a ) < \infty $, where $ a $ is any string for the initial capital (usually, the empty string $ \lambda $, $ 0 $, or $ 1 $), then  $ \sum\limits_{ i = 1 }^{ \infty } d_i( \cdot ) $ is a $ C $-supermartingale (see also~\cite{Downey2010}).
From Equation~\eqref{eqFirstCsupermartingale}, we have it that $ \sum\limits_{ i = 1 }^{ \infty } d_{(1,i)}( a ) = \mathbf{O}(1) $.
In addition, for any $ \sigma $, one has it that $ \sum\limits_{ k = \left| \sigma \right| }^{ \infty } d_{(1,k)}( \sigma ) \leq \mathbf{O}\left( 2^{ \left| \sigma \right| } \right) $, and as a consequence $ \sum\limits_{ i = 1 }^{ \infty } d_{(1,k)}( \sigma ) < \infty $ holds.
We also have that $ \sum\limits_{ i = 1 }^{ \infty } d_{(1,i)}( \sigma ) $ is left semicomputable because there is a program that can always approximate the value of $ \sum\limits_{ i = 1 }^{ \infty } d_{(1,i)}( \sigma ) $ from below for any $ \sigma $.
Therefore, if $ x $ is not an algorithmic random infinite sequence, it follows that there is a left semicomputable $ \mathbf{O}(1) $-supermartingale $ d_1( \sigma ) = \sum\limits_{ i = 1 }^{ \infty } d_{(1,i)}( \sigma ) $ such that $ \limsup_{n \rightarrow \infty} d_1\left( x \upharpoonright_{ n } \right) = \infty $.
The converse implication can be proved analogously 
to the proof in Theorem~\ref{thmIncompUnpredict}, because every martingale is a $ \mathbf{O}(1) $-supermartingale. 

Nevertheless, as we show in Theorem~\ref{thmIncompUnpredict}, one can also obtain a demonstration of the implications in both directions between compression and the traditional (successful computably enumerable) \emph{martingales} without resorting to proof- and measure-theoretic statistical tests.


\begin{theorem}[incompressibility and unpredictability]\label{thmIncompUnpredict}
	Let $ x = x_1 x_2 \ldots x_n \ldots  $ be an infinite sequence (or equivalently, a real number).
	Then,
	$ x $ is algorithmic random \emph{iff} there is no (left) semicomputable martingale that succeeds on $ x $.
\end{theorem}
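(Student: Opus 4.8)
The plan is to prove the two implications separately, in each direction by an explicit effective construction, so that the Martin-L\"of--test detour is bypassed entirely. Recall that here ``algorithmic random'' means $ K\left( x \upharpoonright_n \right) \geq n - \mathbf{O}(1) $, so ``$ x $ is not algorithmic random'' unfolds as: for every constant $ c $ there is $ n $ (indeed infinitely many) with $ K\left( x \upharpoonright_n \right) < n - c $. Throughout, $ [\sigma] $ denotes the set of infinite sequences extending the finite string $ \sigma $, and for a string $ \sigma $ I write $ d_\sigma $ for the elementary single-interval martingale with $ d_\sigma(\lambda) = 2^{-|\sigma|} $ that bets all its capital on the successive bits of $ \sigma $ and then holds constant, i.e.\ $ d_\sigma(\tau) = 2^{|\tau|-|\sigma|} $ if $ \tau \preceq \sigma $, $ d_\sigma(\tau) = 1 $ if $ \sigma \preceq \tau $, and $ d_\sigma(\tau) = 0 $ otherwise; one checks directly that $ d_\sigma $ satisfies the martingale equality~\eqref{eqMartingale} and that $ 0 \leq d_\sigma \leq 1 $.

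For the implication ``not algorithmic random $ \Rightarrow $ some left semicomputable martingale succeeds'', I would let $ T_c = \left\{ \sigma : K(\sigma) < |\sigma| - c \right\} $ for each $ c \geq 1 $; this set is computably enumerable because $ K $ is upper semicomputable, and the Kraft inequality for the universal prefix machine gives $ \sum_{\sigma \in T_c} 2^{-|\sigma|} \leq 2^{-c} \sum_\sigma 2^{-K(\sigma)} \leq 2^{-c} $. I then set $ d = \sum_{c \geq 1} \sum_{\sigma \in T_c} d_\sigma $. This $ d $ is left semicomputable (dovetail the enumerations of the $ T_c $ and the rational approximations to the $ d_\sigma $), satisfies $ d(\lambda) \leq \sum_c 2^{-c} \leq 1 $, and is everywhere finite: the contributions at a node $ \tau $ from indices $ \sigma \preceq \tau $ sum to $ \sum_{\sigma \preceq \tau} |\{ c \geq 1 : \sigma \in T_c \}| = \sum_{\sigma \preceq \tau} \max(0, |\sigma| - K(\sigma) - 1) = \mathbf{O}(|\tau|^2) $, while the contributions from indices $ \sigma \succ \tau $ sum to $ 2^{|\tau|} \sum_{c \geq 1} \sum_{\sigma \in T_c,\, \tau \prec \sigma} 2^{-|\sigma|} \leq 2^{|\tau|} $; hence $ d $ is a genuine martingale (a countable sum of everywhere-finite martingales). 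Finally, if $ x $ is not algorithmic random, then for each $ N $ and each $ c \leq N $ there is $ n_c $ with $ x \upharpoonright_{n_c} \in T_c $, so for $ m \geq \max_{c \leq N} n_c $ we get $ d\left( x \upharpoonright_m \right) \geq \sum_{c=1}^{N} \big| \{ j \leq m : x \upharpoonright_j \in T_c \} \big| \geq N $; therefore $ \limsup_{m} d\left( x \upharpoonright_m \right) = \infty $ and $ d $ succeeds on $ x $.

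For the converse ``some left semicomputable martingale succeeds $ \Rightarrow $ not algorithmic random'', take a left semicomputable martingale $ d $ with $ d(\lambda) = 1 $ succeeding on $ x $. For each $ k \geq 1 $ the set $ W_k = \{ \sigma : d(\sigma) > 2^k \} $ is computably enumerable (from the approximations of $ d $ from below), so $ U_k = \bigcup \{ [\sigma] : \sigma \in W_k \} $ is an effectively open set, and Kolmogorov's martingale inequality gives $ \mu(U_k) \leq d(\lambda)\, 2^{-k} = 2^{-k} $, where $ \mu $ is the uniform measure ($ \mu([\sigma]) = 2^{-|\sigma|} $). Using the standard fact that any effectively open set is generated by a computably enumerable \emph{prefix-free} set of strings of the same total weight, fix such a $ P_k $ with $ \bigcup\{[\sigma] : \sigma \in P_k\} = U_k $ and $ \sum_{\sigma \in P_k} 2^{-|\sigma|} = \mu(U_k) \leq 2^{-k} $; note that $ [\sigma] \subseteq U_k $ forces $ 2^{-|\sigma|} \leq 2^{-k} $, i.e.\ $ |\sigma| \geq k $. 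I would now feed a Kraft--Chaitin request $ \big\langle\, |\sigma| - k + 2\lceil \log k \rceil,\ \sigma \,\big\rangle $ for every $ k $ and every enumerated $ \sigma \in P_k $; the total requested weight is $ \sum_{k \geq 1} 2^{-2\lceil \log k \rceil} 2^{k} \sum_{\sigma \in P_k} 2^{-|\sigma|} \leq \sum_{k \geq 1} k^{-2} < \infty $, so the Kraft--Chaitin theorem yields a prefix-free machine with $ K(\sigma) \leq |\sigma| - k + 2\log k + \mathbf{O}(1) $ for all $ \sigma \in P_k $. Since $ d $ succeeds on $ x $, for every $ k $ the sequence $ x $ lies in $ U_k $, hence has a prefix $ \sigma_k \in P_k $ with $ |\sigma_k| \geq k $ and $ K\left( x \upharpoonright_{|\sigma_k|} \right) \leq |\sigma_k| - k + 2\log k + \mathbf{O}(1) < |\sigma_k| - c $ once $ k $ is large relative to the given $ c $; the witnessing lengths $ |\sigma_k| \to \infty $, so $ x $ is compressible by more than any fixed amount infinitely often, i.e.\ not algorithmic random.

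The step I expect to be the real obstacle is the converse direction, in two places. First, the condition ``$ d(\sigma) > 2^k $'' is only semidecidable, and a naive greedy enumeration need not stay prefix-free (a short prefix may cross the threshold after a longer extension has already been enumerated), so one must invoke, and carefully justify, the monotone ``stage-by-stage clopen-difference'' enumeration of an effectively open set into a prefix-free generating set; it is precisely this prefix-free structure that keeps $ \sum_{\sigma \in P_k} 2^{-|\sigma|} $ bounded by $ 2^{-k} $. Second, the padding in the Kraft--Chaitin requests must be by $ 2\log k $ and not by $ \log |\sigma| $: a $ \log|\sigma| $ term would be fatal because along $ x $ the witnessing length $ |\sigma_k| $ can grow far faster than $ k $, whereas the prefix-free bookkeeping lets us spend only $ |\sigma|-k+2\log k $ bits and still collect a net saving of $ k - 2\log k $ bits that eventually beats any constant $ c $. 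The forward direction is comparatively routine once the elementary interval martingales $ d_\sigma $ and the Kraft bound on $ T_c $ are in hand; the one point needing attention there is the everywhere-finiteness of $ d $, which is why the double sum is arranged so that the ``capital held after a compressible prefix'' contributions collapse to the harmless $ \mathbf{O}(|\tau|^2) $ term.
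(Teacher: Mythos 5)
Your proof is correct (up to routine bookkeeping: you cannot literally normalise $d(\lambda)=1$ for a merely left semicomputable $d$, but bounding $d(\lambda)\leq 2^m$ and absorbing $m$ into the constants works, and the Kraft--Chaitin weight $\sum_k k^{-2}$ exceeds $1$ unless you pad each request by one bit), but it takes a genuinely different route from the paper in both directions. For ``compression implies prediction'' the paper builds, for each $k$, the martingale $d_{(2,k)}(\sigma)=\frac{2^{|\sigma|}}{2^k}\sum_{w\in W_k(\sigma)}2^{-K(w)}$, i.e.\ it weights the compressible \emph{extensions} of $\sigma$ by their algorithmic probability $2^{-K(w)}$ and checks the martingale equality from the splitting $W_k(\sigma 0)\sqcup W_k(\sigma 1)=W_k(\sigma)$, before summing over $k$; you instead sum elementary uniform-weight interval martingales $d_\sigma$ over the c.e.\ sets $T_c$ of $c$-compressible strings, with the Kraft bound $\sum_{\sigma\in T_c}2^{-|\sigma|}\leq 2^{-c}$ controlling the initial capital. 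Both constructions bypass Martin-L\"of tests here; yours is the classical covering-martingale argument, while the paper's keeps the $2^{-K(w)}$ weighting that matches its preceding $\mathbf{O}(1)$-supermartingale $d_{(1,k)}$ discussion. The divergence is sharper in the converse: the paper works with the ratio $f_k(\sigma)=d(\sigma)/d(\sigma\upharpoonright_k)\leq 2^{|\sigma|-k}$, turns $2^{-K(f_k(\sigma))}$ into a semicomputable semimeasure, and invokes the algorithmic coding theorem (maximality of $\mathbf{m}$) to conclude $K(\sigma)\leq|\sigma|-k+\mathbf{O}(1)$; you instead pass through the threshold sets $W_k=\{\sigma: d(\sigma)>2^k\}$, Ville's/Kolmogorov's inequality $\mu(U_k)\leq d(\lambda)2^{-k}$, c.e.\ prefix-free generators $P_k$, and a Kraft--Chaitin machine yielding savings $k-2\log k$ along $x$. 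What your route buys is an explicit, fully effective compression argument that sidesteps the paper's delicate step of bounding $K(f_k(\sigma))$ (and the implicit dependence of its semimeasure on $k$); what it costs is that your uniformly c.e.\ family $(U_k)$ with $\mu(U_k)\leq 2^{-k}$ is, in all but name, a Martin-L\"of test, so your converse is closer to the classical Levin--Schnorr argument that the paper explicitly set out to avoid, whereas the paper's coding-theorem detour stays entirely on the algorithmic-probability side.
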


\begin{proof}[Proof (Compression implies Prediction):]

For any arbitrary sequences $ w $ and $ z $, let $ w \preceq z $ denote $ w $ being a prefix of the sequence $ z $.
Without loss of generality, let $  C > 0 $ be a constant such that 
\begin{equation}\label{eqConstantforinitialcapital}
	K( a ) <  C
	\text{ ,}
\end{equation}
for $ a \in \left\{ \lambda , 0 , 1 \right\} $.
Let
\begin{equation}\label{eqSetofcompressibleextensions}
	W_k\left( \sigma \right) = 
	\left\{  w \in {\{0,1\}^*} \bm{\colon} \begin{array}{l}
		w \succeq \sigma , \\
		\left( K\left( w \right) < C \right) \lor \left( K\left( w \right) < \left| w \right| - k \right)
	\end{array}  \right\}
\end{equation}
be the set of bit strings that are compressible by at least $ k $ bits, strings which have $ \sigma $ as a prefix.
For arbitrary $ k \in \mathbb{N} $, let 
$ d_{(2,k)} \colon \{0,1\}^* \rightarrow \mathbb{R}^+ $
be a function such that
\begin{equation}
	d_{(2,k)}\left( \sigma \right) = 
	\frac{
	2^{ \left| \sigma \right| } 
	}{
	2^k
	}
	\left( \sum\limits_{ w \in W_k\left( \sigma \right) } \frac{ 1 }{ 2^{ K( w ) } } \right)
	\text{ .}
\end{equation}
First, notice that $ W_k\left( a \right) \neq \emptyset $ for any $ k \geq 1 $ because of our choice of the constant $ C $.
Secondly, from the basic properties of a prefix-free (or self-delimiting) programming language~\cite{kolmobook,Calude2002,Downey2010}, we have that 
\begin{equation}
	0 \leq
	d_{(2,k)}\left( \sigma \right) \leq \frac{
	2^{ \left| \sigma \right| } 
	}{
	2^k
	}
\end{equation}
holds for any $ \sigma $ and $ k $.
As a consequence, we will have it that $ \sum\limits_{ k = 1 }^{ \infty } d_{(2,k)}\left( a \right) = \mathbf{O}\left( 1 \right) $ 
and $ \sum\limits_{ k = 1 }^{ \infty } d_{(2,k)}\left( \sigma \right) < \infty $.
From the definition of $ W_k( \cdot ) $ in Equation~\eqref{eqSetofcompressibleextensions}, we have that 
\begin{equation}\label{eqDisjoint1}
	W_k( \sigma 0 ) \cap W_k( \sigma 1 ) = \emptyset
\end{equation}
and
\begin{equation}\label{eqUnion1}
	W_k( \sigma 0 ) \cup W_k( \sigma 1 ) = W_k( \sigma )
\end{equation}
hold for any $ \sigma $, and therefore one can straightforwardly demonstrate that $ d_{(2,k)} $ is a martingale for each fixed $ k $.
We know that if $ d_1 , d_2 , \dots , d_i , \dots $ is an infinite family of arbitrary martingales and $ \sum\limits_{ i = 1 }^{ \infty } d_i( a ) < \infty $, where $ a $ is any string for the initial capital, then  $ \sum\limits_{ i = 1 }^{ \infty } d_i( \cdot ) $ is a martingale~\cite{Downey2010}.
Therefore, we will have that 
\begin{equation}\label{eqDefd2}
	d_2\left( \sigma \right) = \sum\limits_{ i = 1 }^{ \infty } d_{(2,i)}\left( \sigma \right)
\end{equation}
is a martingale.
Since the infinite set $ W_k( \sigma ) $ can be computably enumerated from below for any $ \sigma $, we will have that $ \sum\limits_{ i = 1 }^{ \infty } d_{(2,i)}\left( \sigma \right) $ is left semicomputable.
By construction, for any $ k $ and $ \sigma $ in which $ K\left( \sigma \right) < \left| \sigma \right| - k $ holds, one has it that 
\begin{equation}
	d_{(2,k)}\left( \sigma \right) \geq d_{(1,k)}\left( \sigma \right) \geq 1
	\text{ ,}
\end{equation}
where 
$
	d_{(1,k)}\left( \sigma \right)
$
was defined in the above Equation~\eqref{eqFirstCsupermartingale}.
Additionally, for any $ w $ and $ z $ with $ w \succeq z $ such that $  K\left( z \right) < \left| z \right| - k $ and $  K\left( w \right) < \left| w \right| - k - 1 $ hold, we will have it that $ d_{(2,k+1)}\left( w \right) \geq 1 $ and $ d_{(2,k)}\left( w \right) \geq 1 $.
One can extend this property recursively so that if  $ w_{ m } \succeq w_{ m - 1 } \succeq \dots \succeq w_{ 0 } $ such that $  K\left( w_{ i } \right) < \left| w_{ i } \right| - k - i $ holds for any $ i $ where $ 0 \leq i \leq m $ and $ m > 0 $, then $ d_{(2,k+i)}\left( w_{ m } \right) \geq 1 $ holds for each $ i \leq m $, thereby one obtains that $ d_2\left( w_{ m } \right) \geq m $.
Therefore, if $ x $ is not an algorithmic random infinite binary sequence, then $ \limsup_{n \rightarrow \infty} d_2\left( x \upharpoonright_{ n } \right) = \infty $.

\end{proof}

\begin{proof}[Proof (Prediction implies Compression):]

From the martingale condition in Equation~\eqref{eqMartingale}, where
\begin{equation}\label{eqAltMartingale}
	\frac{ d'( \sigma 0 ) + d'( \sigma 1 ) }{ d'( \sigma ) } = 2
\end{equation}
holds for any $ \sigma $ and an arbitrary martingale $ d' $, we will have that
\begin{equation}\label{eqConditionspecialfunctionfk}
	\frac{ d'( \sigma ) }{  d'( \sigma \upharpoonright_{ k } )} 
	=
	\prod\limits_{ i = 1 + k  }^{ \left| \sigma \right| } \frac{ d'( \sigma \upharpoonright_{ i } ) }{ 
	d'( \sigma \upharpoonright_{ i - 1 } )} 
	\leq
	2^{ \left| \sigma \right| - k }
\end{equation}
holds for any arbitrary natural number $ k \geq 1 $ with $ k < \left| \sigma \right| $.
Let $ \left< \cdot \right> $ be any computable encoding of a string in a prefix-free language such that for any $ w \in \left\{ 0 , 1 \right\}^* $, one has it that 
\begin{equation}\label{eqBasicencodingprefixfree}
	\left|  \left< w \right> \right| 
	\leq \left| w \right| + \mathbf{O}\left( \log\left( \left| w \right| \right) \right)
\end{equation}
and
\begin{equation}\label{eqBasicPrefixfree}
	\sum\limits_{ \sigma \in \left\{ 0 , 1 \right\}^* } \frac{ 1 }{ 2^{ \left| \left< \sigma \right> \right| } }
	\leq
	1
	\text{ .}
\end{equation}
Let
\begin{equation}\label{eqDefW'k}
W'_k\left( \sigma \right) =
\left\{  w \in {\{0,1\}^*} \bm{\colon} \begin{array}{l}
	w \succeq \sigma , \\
	\log\left( \frac{ d\left( w \right) }{ 2^k } \right) \geq \left| \left< \sigma \upharpoonright_{ k^2 } \right> \right|
\end{array}  \right\} 
\text{ .}
\end{equation}
be a set of the extensions of $ \sigma $ for which their values obtained from $ d $ are sufficiently large.
Notice that since $ d $ is (left) semicomputable by hypothesis, then the set $ W'_k\left( \sigma \right) $ is computably enumerable for any $ \sigma $ given $ k \in \mathbb{N} $.
Additionally, from Equation~\eqref{eqBasicencodingprefixfree}, the condition $ \limsup_{n \rightarrow \infty} d\left( x \upharpoonright_{ n } \right) = \infty $ implies that for every $ k , m_0 \in \mathbb{N} $ with $ m_0 \geq k $, there is at least one $ x \upharpoonright_{ m } \succeq x \upharpoonright_{ m_0 } $ such that
\begin{equation}\label{eqConsequenceoflimsup}
	d\left( x \upharpoonright_{ m } \right)
	\geq
	2^{ k^3 }
	\gg
	2^{  \left| \left< x \upharpoonright_{ m_0 } \upharpoonright_{ k^2 } \right> \right| + k }
\end{equation}
with $ m > m_0 $,
and thereby one obtains that $ x \upharpoonright_{ m } \in W'_k\left( \sigma \right) $.
Now, we define the function
\begin{equation}
f_k\left( \sigma \right) =
\frac{  \underset{\tiny w \in W'_k\left( \sigma \right) }{\arg\!\min} \; \left( 2^{ \left| w \right|  } \right) }{ 2^k }
\end{equation}
built upon the set $ W'_k $ in Equation~\eqref{eqDefW'k}.
From the computable enumerability of $ W'_k $, we will have that $ f_k\left( \cdot \right) $ is a right semicomputable function (i.e., semicomputable from above), and hence $ \frac{ 1 }{ f_k\left( \cdot \right) } $ is left semicomputable (i.e., semicomputable from below).
Clearly, in case $ \sigma \in W'_k\left( \sigma \right) $, one will have it that
\begin{equation}\label{eqUpperboundforfk}
	f_k\left( \sigma \right)
	=
	2^{ \left| \sigma \right| - k }
	\text{ .}
\end{equation}
Furthermore, from Equations~\eqref{eqConditionspecialfunctionfk} and~\eqref{eqDefW'k}, one also has that
\begin{equation}\label{eqSpecialconditionforfk}
	f_k\left( \sigma \right)
	\geq
	\frac{ d\left( w \right) }{ 2^k }
	\geq
	2^{ \left| \left< \sigma \upharpoonright_{ k^2 } \right> \right| }
\end{equation}
holds for some $ w \in W'_k $ and any fixed $ k $.
Therefore, from Equations~\eqref{eqBasicPrefixfree} and~\eqref{eqSpecialconditionforfk}, one will have it that
\begin{equation}\label{eqConditionsemimeasurefromfk}
	\sum\limits_{ \sigma \in \left\{ 0 , 1 \right\}^* } \frac{ 1 }{ f_k\left( \sigma \right) }
	\leq
	1
	\text{ .}
\end{equation}
Let
\begin{equation}\label{eqDefsemimeasure}
	\mu\left( \sigma \right)
	=
	\frac{ 1 }{ f_k\left( \sigma \right) }
\end{equation}
so that from Equation~\eqref{eqConditionsemimeasurefromfk} we directly obtain that $ \mu\left( \cdot \right) $ is a left semicomputable semimeasure.
Since $ \limsup_{n \rightarrow \infty} d\left( x \upharpoonright_{ n } \right) = \infty  $, then Equations~\eqref{eqConsequenceoflimsup},~\eqref{eqUpperboundforfk}, and~\eqref{eqDefsemimeasure} imply that for each fixed $ k $, there are infinitely many $ m \in \mathbb{N} $ such that
\begin{equation}\label{eqSpecialcaseformumandk}
	\frac{ 1 }{ \mu\left( x \upharpoonright_{ m } \right) }
	=
	2^{ m - k }
	\text{ .}
\end{equation}
From the ACT~\cite{Calude2002,Downey2010,kolmobook} in Equation~\eqref{eqACTcomplete}, we have that
\begin{equation}\label{eqACT}
	\begin{aligned}
		K\left( x \right) = 
	- \log\left( \mathbf{m}\left( x \right) \right) \pm \mathbf{O}( 1 )
	\text{ ,}
	\end{aligned}
\end{equation}
holds,
where
$ \mathbf{m}\left( \cdot \right) $ is a \emph{maximal} semicomputable semimeasure. 
Finally, it follows from Equations~\eqref{eqSpecialcaseformumandk} and~\eqref{eqACT} that there is a constant $ C' $ such that for each fixed $ k $, there are infinitely many $ m \in \mathbb{N} $ such that
\begin{equation}
	K\left( x \upharpoonright_{ m } \right)
	\leq
	\log\left( \frac{ 1 }{ C'  \mu( x \upharpoonright_{ m } )  }  \right)
	\pm \mathbf{O}(1)
	\leq
	m - k + \mathbf{O}(1)
	\text{ .}
\end{equation}

 \end{proof}

\subsubsection{Levin's Distribution and the Algorithmic Probability of Integer Sequences}\label{sectionLevinsearch}

As shown in Equation~\eqref{eqACTcomplete}, the algorithmic probability $ P(s) = 1 / 2^{ K\left( s \right) } $ of a string $ s $ is equivalently\footnote{ Except for a multiplicative independent constant.} given by~\cite{solo,levin}: 
\[P(x) = \sum_{ U(p) = x} 2^{-|p|},\] 
where $U(p) = x $ means that the (prefix) universal Turing machine $ U $, when given program $ p $, produces the string $ x $. $|p| $ is the length of the program $ p $, so $ 2^{-|p|} $ can be interpreted as the probability assigned to that program, with shorter programs being more probable.

Levin's distribution modifies the algorithmic probability by adding a penalty for the time taken by the program to compute the output, for example as
\[m(x) = \sum_{p : U(p) = x} 2^{-|p| - \log T(p)},\] 
where $T(p) $ is the time taken by program $ p $ to generate the string $ x $, where $ \log T(p) $ is the logarithmic penalty for the time complexity of program $ p $.
Notice that $ m $ is a lower bound for the universally optimal semicomputable semimeasure $ \mathbf{m} $ in Section~\ref{sectionPredictionandCompression} that appears in the ACT.

In the context of a time series $ x_1, x_2, \dots, x_t $, the goal is to predict the next value $ x_{t+1} $ based on the previous observations $ x_1, x_2, \dots, x_t $. Modifying it according to the conditional version of the ACT, the probability of the next element $ x_{t+1} $, given the previous values, becomes

\[P(x_{t+1} \mid \left( x_1, x_2, \dots, x_t \right) ) = \sum_{ U\left( \left< x_1, x_2, \dots, x_t , p \right> \right) =  x_{t+1} } 2^{-|p| - \log T(p)}\]

This represents the posterior probability of $x_{t+1}$, where shorter and faster programs (that generate it from the sequence $ x_1, x_2, \dots, x_t $) are favoured.

The compression of a time series $ x_1, x_2, \dots, x_t $ seeks the shortest program that generates the observed sequence. Using Levin's distribution, the compressed length $ K(x_1, x_2, \dots, x_t) $ is approximately 

\[C(x_1, x_2, \dots, x_t) \approx \min_{U(p) = \left( x_1, x_2, \dots, x_t \right) } \left( |p| + \log T(p) \right)\]

This expression seeks the minimum of the program length $ |p| $ plus the time penalty $ \log T(p) $, giving the most compressed form of the time series while also considering the computational time complexity.

\begin{figure}
\begin{center}
\makebox[\textwidth][c]{\includegraphics[width=1
\textwidth]{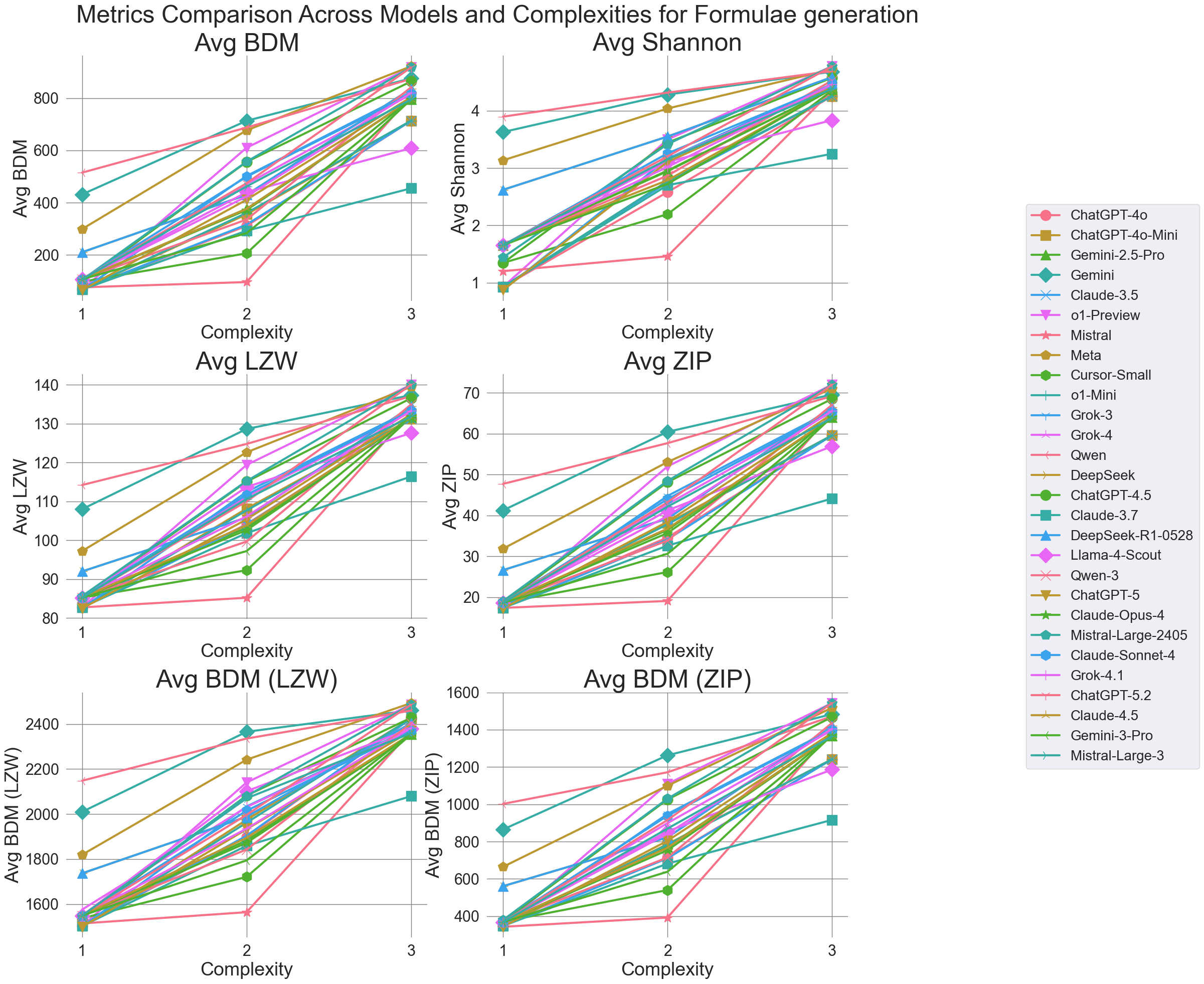}}%
\end{center}
\caption{Complexity measures in the free-form test. LLM answers follow the theoretical expectation. For increasingly complex sequences, we see a decreasing number of compressed answers (or any answers at all) when LLMs are asked to produce a generating mechanism (such as a formula).}
\label{MetricsForFormulaGeneration}
\end{figure}

\begin{center}
\begin{figure}[htp!]
\makebox[\textwidth][c]{\includegraphics[width=.75
\textwidth]{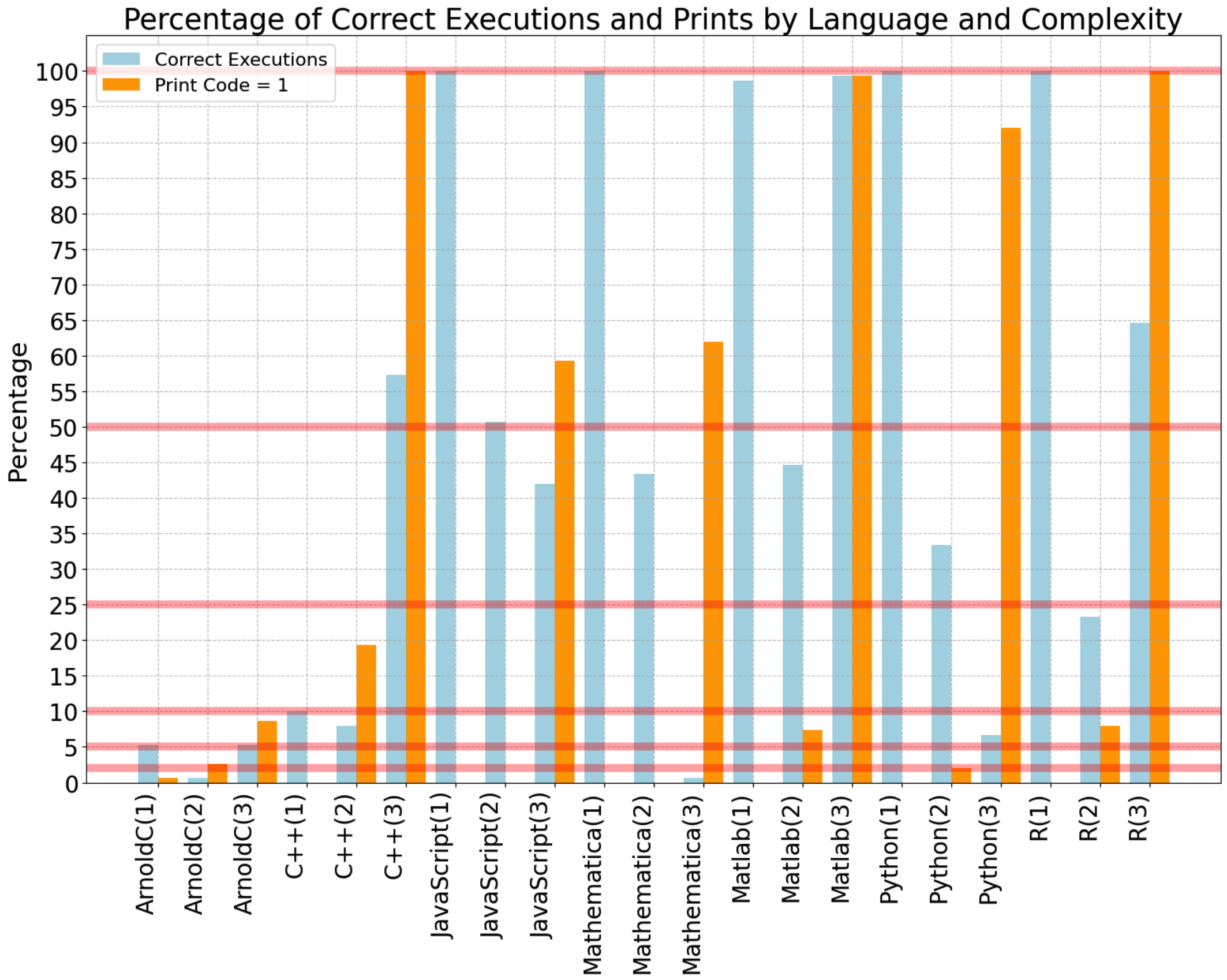}}
\makebox[\textwidth][c]
{\includegraphics[width=.76
\textwidth]{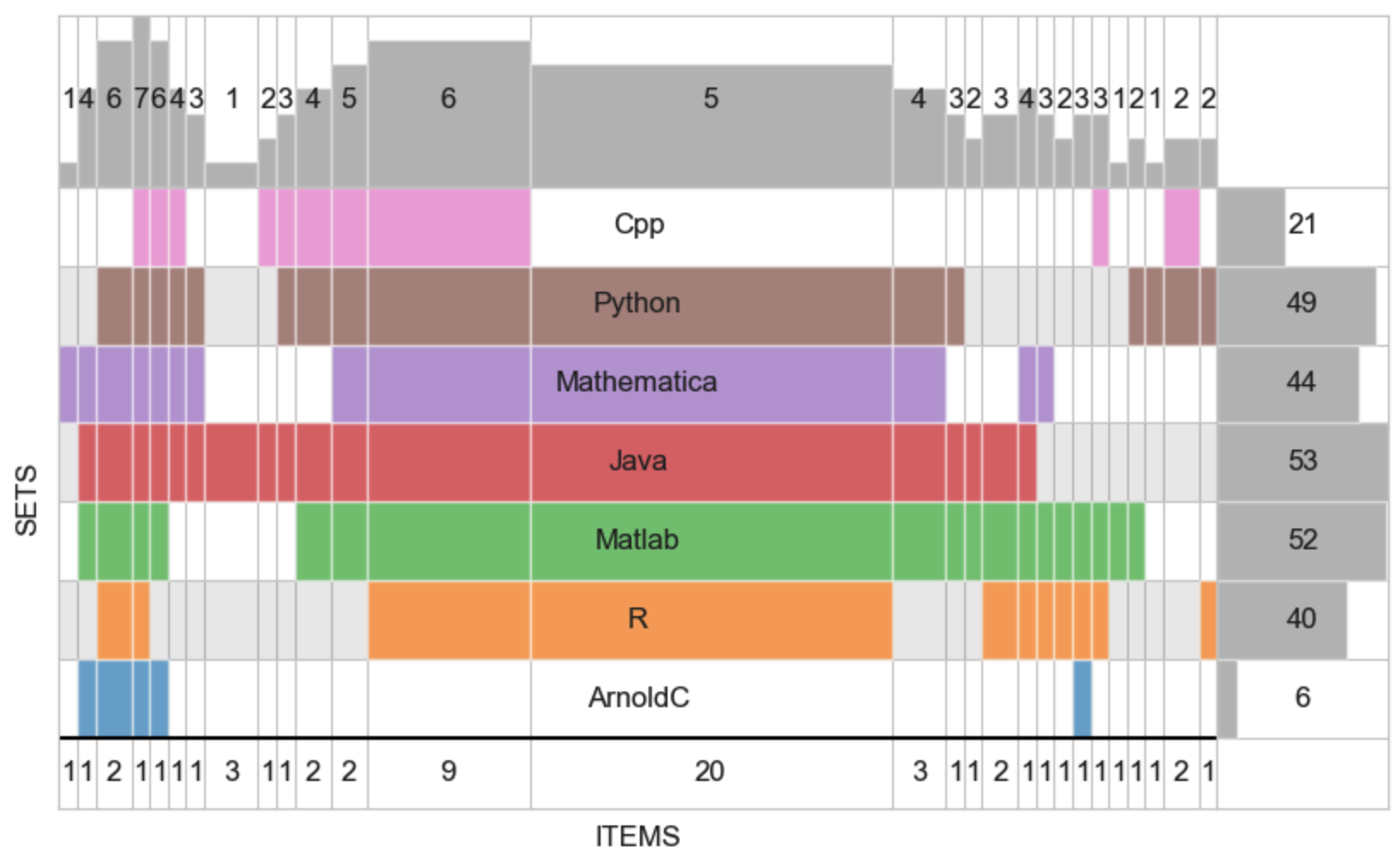}}
\caption{\emph{Top:} Distribution of correct and print cases by language and complexity produced by ChatGPT-4. The results show an inversely proportional number of correct answers to sequences' complexity increase, and a proportionally direct trend for simplistic print codes, both conforming with the expectation that higher complexity would retrieve fewer correct code evaluations and more trivial programs of type `print', with a few exceptions, most likely as a result of examples found in the LLM training set. \emph{Bottom:} Distribution of correct answers for ChatGPT-4. The upper section shows the number of scripts in different programming languages that reproduce the target sequences indicated below. The right section shows the total scripts by language successfully reproducing target sequences.This distribution highlights a subset of well-documented sequences accurately replicated by LLMs, with failures attributed to insufficient examples rather than language choice or understanding.}
\label{print&CorrectAnswers}
\end{figure}
\end{center}

\begin{center}
\begin{figure}[htp!]
\makebox[\textwidth][c]{\includegraphics[width=0.8
\textwidth]{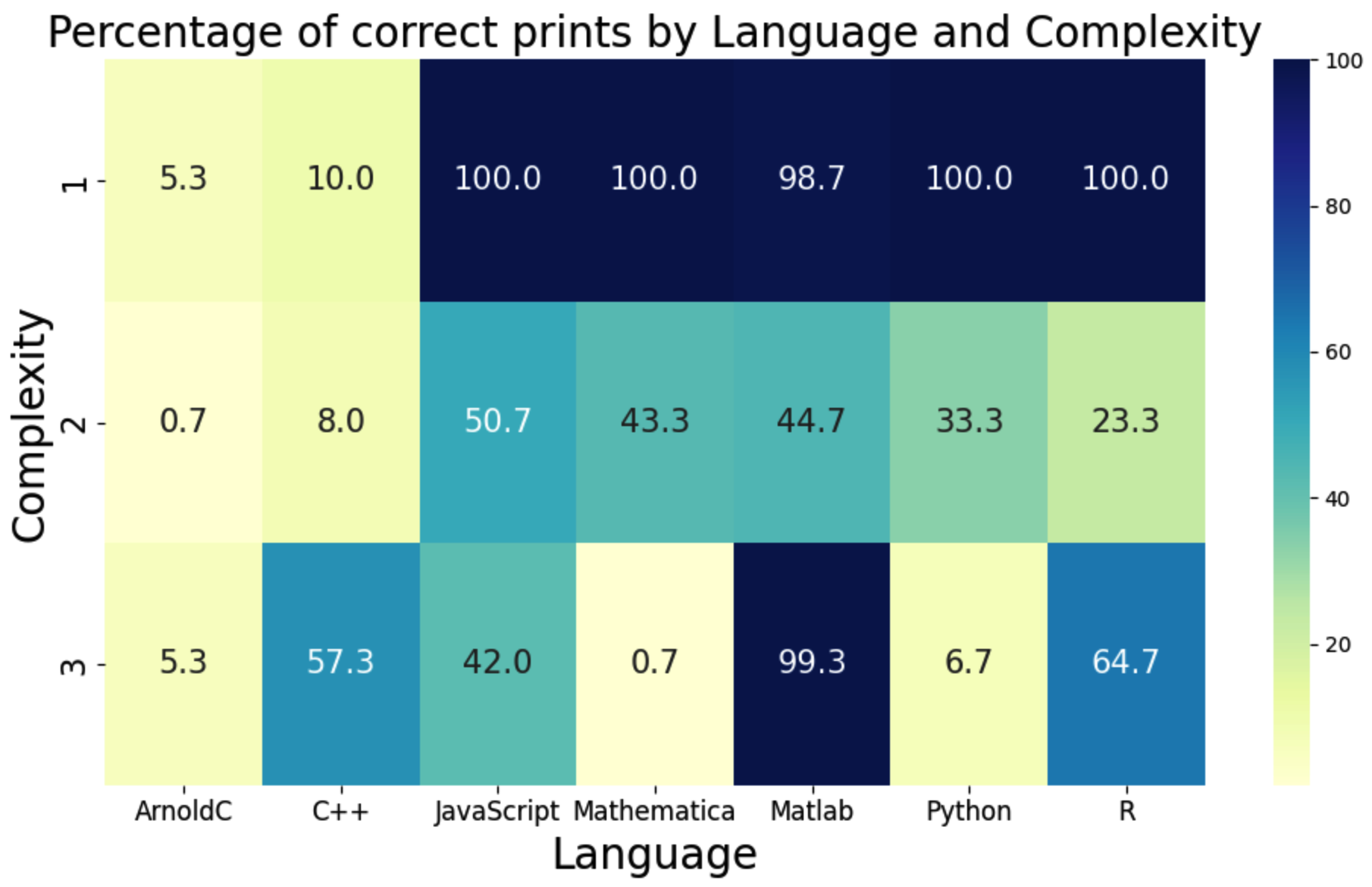}}\\

\makebox[\textwidth][c]{\includegraphics[width=0.83
\textwidth]{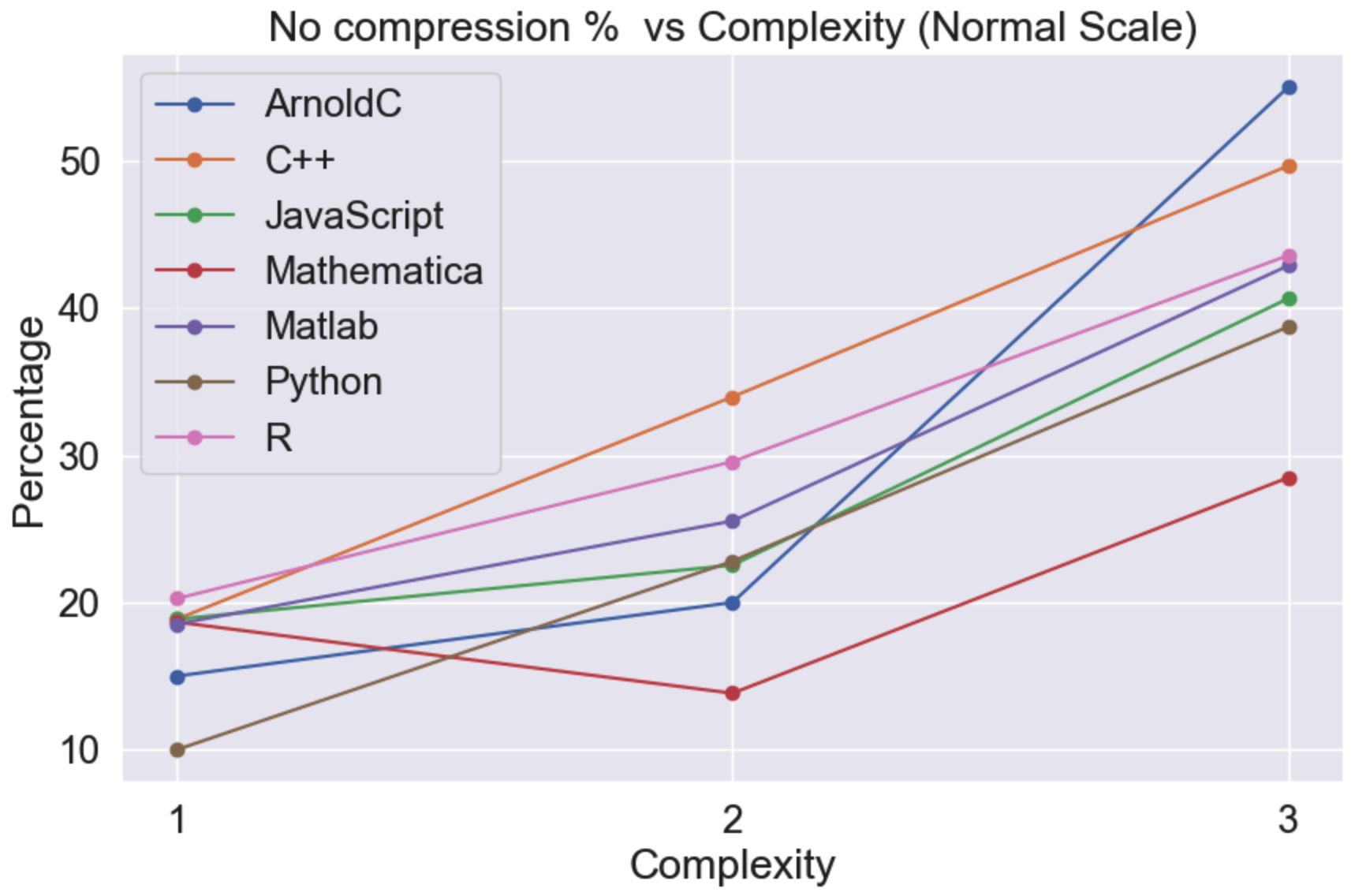}}
\caption{\emph{Top:} Print cases by language and complexity for ChatGPT 4. \emph{Bottom:} No compression percentage in original answers from ChatGPT 4.}
\label{NoCompressPercentage}
\end{figure}
\end{center}

\begin{figure}[htp!]
\begin{center}
\makebox[\textwidth][c]{\includegraphics[width=1
\textwidth]{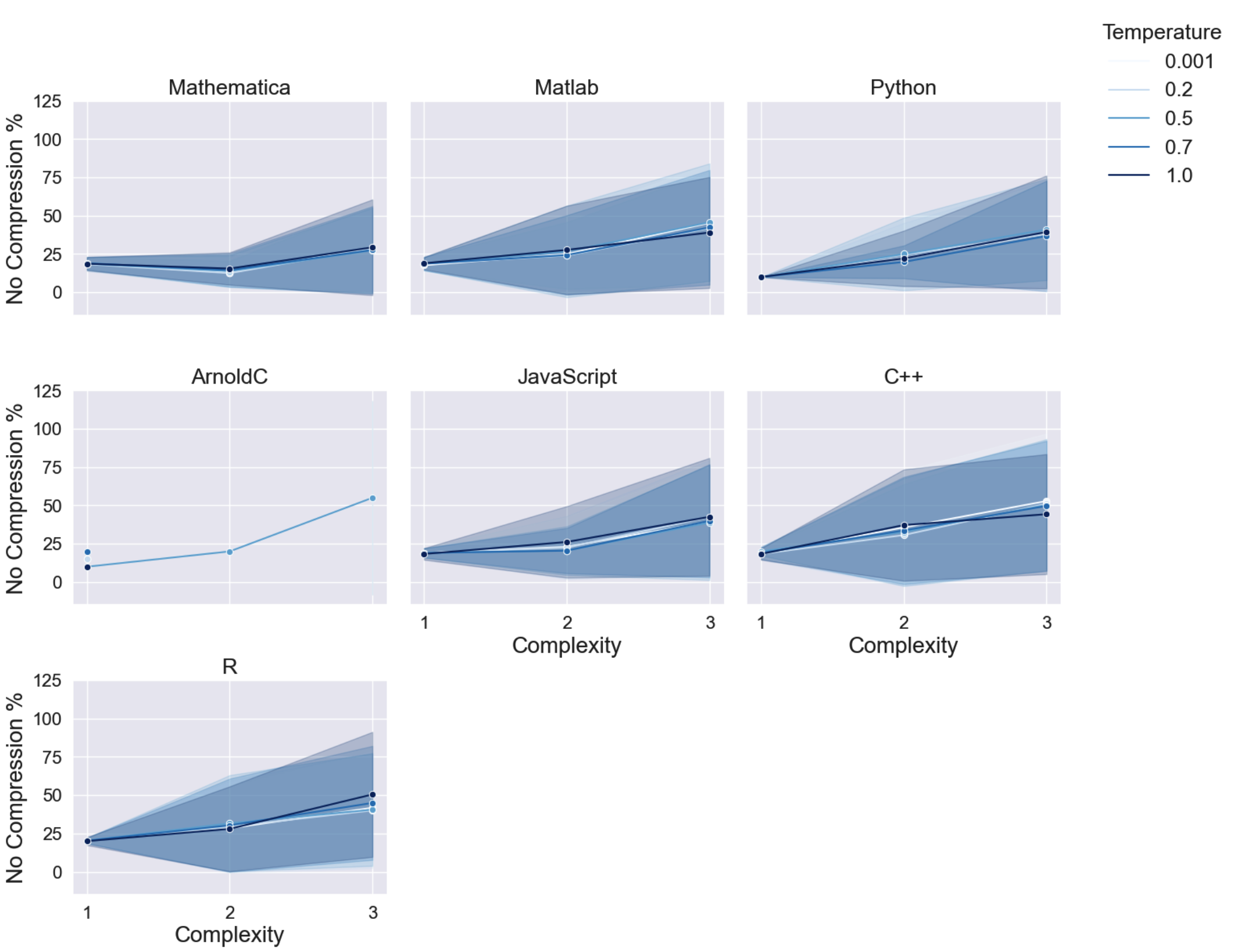}}%
\end{center}
\caption{Complexity vs no compression and variation of temperature parameter showing robustness of results independent of controlled noise, where 1 is the typical LLM balance between `precision' or repeatability and `creativity' as defined by each LLM version.}
\label{NoCompressionTemperatureConfidence}
\end{figure}

\subsection{An Algorithmic Information Dynamics (AID) of AI algorithms, external processes, and evaluator agents}\label{sectionAIDforSuperARC}

A distinguishing and fundamental characteristic of Algorithmic Information Dynamics (AID)~\cite{zenil2020algorithmic,zenilbook1,zenilbook2,Abrahao2021bEmergenceAIDPTRSA} with respect to traditional applications of universal induction and the algorithmic coding theorem is the shift from observational analysis to \emph{interventional} (or \emph{perturbation}) analysis.
AID extends AIT from static analysis to dynamic intervention.
To this end, AID employs (algorithmic) perturbations that are (computable) interventions, changes, modifications, or mutations, such as altering a bit, removing a graph vertex or edge, changing an input variable or a code snippet, or restructuring the entire system~\cite{Abrahao2021bEmergenceAIDPTRSA}.
Instead of merely observing a system, AID performs perturbations and measures the resulting shift in algorithmic complexity or randomness, creating a ``calculus'' for software or physical dynamics of irreducible information content~\cite{iscience}.
By measuring such a resulting shift, rather than only from statistical correlations, the framework infers a degree of \emph{causality} that is explainable by the observer equipped with a formal theory~\cite{Abrahao2021bEmergenceAIDPTRSA}.
The \emph{observer} in this scenario is the hypothetical \emph{agent} applying the algorithmic coding theorem and other tools from AIT to the experienced data in order to find the optimal model.

While methods or frameworks  based on universal (Solomonoff) induction like the minimum description length~\cite{mdl} or in the AIXI~\cite{hutter2005universal}---which mainly employ statistical compression approximation to algorithmic complexity and therefore are closer to entropy than to algorithmic information~\cite{zenilreview}---focus on achieving universally optimal prediction, they generally assume that the data is grounded in a true and immutable generative process; and they assume that at least in theory the observer is fixed and capable of iteratively approximating the optimal solution if the computational resources are unbounded in the asymptotic limit.

SuperARC differs from those approaches in the same manner as AID does.
In the case of such an AIT-based metric, a fixed observer (i.e., a fixed \emph{evaluator agent}) would mean a metric that is also fixed/static; and from which an AI \emph{learner} can improve the score without necessarily ``understanding'' the underlying real-world process, as occurs with benchmark contamination.
Although both AID and other universal induction-based methods aim at achieving the best prediction and finding the optimal model, this is an analysis phase of the former that occurs only after the interaction between the observer, the observed system, and other external factors.

Universal induction alone is \emph{observational}, since it assumes the data history is given only by the observed system and that the observer can endlessly minimise prediction error based on that history.
On the contrary, as formalised by the \emph{observation principle} in~\cite{Abrahao2021bEmergenceAIDPTRSA}, AID is \emph{interventional}/\emph{interactional} as it accounts for interactions among the observer, the observed phenomena, and other external agents that may influence this interplay.
Disregarding such a capability of interaction between many agents has been demonstrated to give rise to irreducible emergent behaviour that a (single and fixed) observational application of the algorithmic coding theorem would otherwise miss~\cite{Abrahao2021bEmergenceAIDPTRSA,Abrahao2017AlgoNets,royal}.

AID handles the presence of noise, the influence of external third-party processes, and distortions as particular cases of perturbations that may occur during any evaluation phase.
For example, irreducible emergent behaviour that challenges straightforward applications of the algorithmic coding theorem was also demonstrated to occur as a result of distortions caused by changing/perturbing the multidimensional space---thus, an example of changing from one domain or context to the other---that the observer may arbitrarily choose, causing the observer to wrongly infer the original dimensions' configurations~\cite{Abrahao2020cAIDistortionsCN,Abrahao2021AIDistortionsEntropy,Zenil2024ETpaper2,Zenil2024ETpaper1}.

Thus, AID takes into account the (algorithmic) perturbations that new formal theories, other mathematical breakthroughs, novel yet untrained contexts or domains, or completely new task abilities introduce (i.e., ``perturb'') into the AI agent by the external agents (or the evaluator agents).
For example, one of these new task abilities introduced can improve the score of the AI agent, like when the metric that the evaluator agent will use leaks before the AI gets properly evaluated.
Rather than merely predicting the next token from a given training set, if the algorithmic perturbation equivalent to changing from the trained scenario to a new one (that requires novel tasks not yet trained for) is algorithmically incompressible with respect to the program (or formal theory) of the real-world generative processes for which the AI agent was initially trained, then an equivalent increase in compressibility across all those scenarios reflects a generalisation capability that the chosen formal mathematical theory would state or classify as being irreducible to the ones trained for.
This is empirically introduced in the zero-shot experiments presented in Section~\ref{sectionResults}.
In addition, because interactions between the evaluator agents and the AI agents and the interactions between the latter and external world are also considered in the SuperARC approach, an AI's ability to enact, or ``bring forth an external world'' ---for which its (artificially devised) formal theories are irreducibly better than the ones of other agents (such as those mathematical theories and computational methods devised by humans)---can in principle be reflected in the increase of the algorithmic incompressibility of the AI models with respect to that of the other agents.
For these reasons, as also discussed in Sections~\ref{sectionCompCompre} and~\ref{sectionAGIandASI}, we argue that the theoretical underpinnings of the SuperARC framework can deal with the usual notions of AGI and ASI.

\subsection{Challenges in defining AGI and ASI}\label{sectionAGIandASI}

Although there is no consensus or generally accepted definitions for Artificial General Intelligence (AGI) or Artificial Superintelligence (ASI), the usage of such terms is pervasive not only in industry, but also in the philosophical and scientific domains.
Pinning down or listing the most prominent definitions candidates is itself a matter of debate, which would require a dedicated work in order to avoid inadvertently skewing the discussion toward a unnecessary direction.
Instead of pursuing such an endeavour, in this paper we take both a pragmatic and a formal approach of certain aspects of AGI and ASI to tackle the challenge of formalising intelligence metrics in the context of the current narratives by aiming to introduce a test and method that is as human-agnostic as possible with regard to devising novel scientific or mathematical theories as a feature expected from both AGI and ASI.

With this approach, our goal is that the notion of AGI and ASI that we employ encompasses a broad range of possible interpretations of those terms, at the same time allowing a discussion on common ground.

AGI can be understood as a general AI system capable of performing any task that humans can, at either average or best-in-class human performance.

The definitions of AGI usually appear human-centric, and a major difficulty with the AGI concept lies in conflating AI and machine intelligence with peculiarities unique to humans, like being able to prepare a coffee in an arbitrary real-world kitchen, walking biped, washing dishes or chatting that has dominated AI.

Instead, we choose to focus on the most general features of AGI, that is, the ability to plan or predict and to abstract a model from data, specially to domains other than those for which it was trained.


One challenge has been the design of new tasks. Here we frame this requirement from a formal-theoretical perspective, by requiring that the encoding of a new domain task is, at least in theory, \emph{algorithmically random (or incompressible) with respect to} the joint encoding of the trained domains, previously available data, and the learner algorithms.
Therefore, the algorithmic complexity of the algorithmic perturbation~\cite{Abrahao2021bEmergenceAIDPTRSA}---see also Section~\ref{sectionAIDforSuperARC}---that is necessary to solve the new task given this joint encoding as input is sufficiently larger than the algorithmic complexity of the joint encoding:
the larger the evaluator agent estimates the former complexity to be in comparison to the latter complexity, the closer to AGI the AI agent is.
In the experiments introduced and investigated in this paper in Section~\ref{sectionResults}, this complexity discrepancy is subsumed in the zero-shot cases.

ASI is traditionally understood as the ability to perform \emph{better} than any human in any task. 
For example, this may occur as a feedback-loop consequence triggered from a momentary and circumstantial surpassing of human capabilities of constructing other AI that are slightly better than we could at that moment.
Here, we focus on the feature that best characterises ASI, this is, the capability to perform \emph{better than the evaluator agents} in scenarios or domains for which the learner agent was not trained, like in zero-shot cases considered in this test.

In order to guarantee that a new task is in fact new from any formal-theoretical perspective \emph{of the evaluator agents}, we simply require that the encoding of the new domain is, at least in theory, \emph{algorithmically random (or incompressible) with respect to} the joint encoding of the trained domains, previously available data, the learner algorithms, \emph{and the programs (or formal mathematical theories) governing the evaluation metrics}.

Notice that in this case, ASI always implies AGI, but the opposite may not hold.
Analogously to the AGI case, the algorithmic complexity of the algorithmic perturbation~\cite{Abrahao2021bEmergenceAIDPTRSA} that is necessary to output the new domain given this joint encoding as input is sufficiently larger than the algorithmic complexity of the joint encoding:
the larger the evaluator agent estimates the former complexity to be in comparison to the latter complexity, the closer to ASI the AI agent is.
In the experiments introduced and investigated in this paper in Section~\ref{sectionResults}, this complexity discrepancy is also subsumed in the zero-shot cases, but future research in this direction is necessary to study other differences between AGI and ASI.

\subsection{Further test context and future research}

This first version of a test based on the SuperARC framework, hereby named SuperARC-seq, has its initial application related to studying sequences of integers with different complexity classes. Although this type of test has received some criticism for being suitable for static situations only (where the intelligent agent does not interact with the computable environment)~\cite{HernndezOrallo2010}, other frameworks and adaptations have been proposed~\cite{fact}. In addition, sequence prediction as a pure prediction task resembles a subset of IQ tests~\cite{HernndezOrallo2016} and it has been shown that there are some ML models which can excel at that~\cite{Corsino} and break the test for next-generation LLMs (just like OpenAI's o1 model did with the ARC challenge). Overall, it must be clear to the reader that the prediction task here considered is constrained by the computational complexity of the solution (thus it is not a mere sequence prediction task that could be naively solved with interpolation polynomials, for example). The prediction should consider previous examples and the most natural solution (here understood as the one with lowest complexity).

In order to further expand the application of the SuperARC framework, combining it with other tasks can be of great interest. For example, some tasks have been proposed to test LLMs with respect to the computational aspects of the learnt compressed representation, as one of the subtests of the framework called ``Beyond the Imitation Game: Quantifying and extrapolating the capabilities of language models'' ~\cite{srivastava2023beyond}, which evaluates the capability of language models to learn algorithmic concepts in a universal language (Turing-complete) under the perspective of machine teaching. In that case, using the concepts presented here, especially BDM as a benchmark and as a decision support tool (algorithm selection), could lead to even more powerful implementations of SuperARC. The same can be said about other frameworks such as DyVal~\cite{zhu2024dyval}, which considers the structural advantage of directed acyclic graphs to dynamically generate evaluation samples with controllable complexities. DyVal generates challenging evaluation sets on reasoning tasks that include mathematics, logical reasoning, and algorithm problems, and the latter can be considerably enhanced by AIT and the SuperARC framework. On the same subject, Kolmogorov-Test (KT)~\cite{yoran2025the} explored an approach to intelligence testing through algorithmic complexity and compression, but while SuperARC and KT recognise compression as a fundamental aspect of intelligence, KT focuses specifically on the evaluation of code generation by LLMs. In particular, KT considers codes in Python, whereas SuperARC presents a broader intelligence test applicable to AGI and ASI, and compares it to a pure form of Neurosymbolic computation that can reach AGI and ASI. Combining some of the concepts behind KT with SuperARC, especially the use of CTM and BDM to estimate the algorithmic complexity of codes, could yield interesting applications of SuperARC. Despite these differences, both KL and SuperARC share common ground in their use of algorithmic complexity as a foundation for intelligence measurement. Both studies highlight the limitations of LLMs in achieving true intelligence, with KT focusing on their inability to generate optimal programs and SuperARC demonstrating their struggles with generalisation, planning, and abstraction.

Other implementations of SuperARC may involve the concept of conversational complexity~\cite{burden2024}, defined as the algorithmic complexity of the user's instruction sequence leading to a given response by LLMs. One possible approach is to use this as a proxy for intelligence, where more intelligent LLMs require user instructions with lower algorithmic complexity to achieve the expected results. In that case, LLMs would be understood as the universal computing systems to which instructions (prompts) are submitted. This concept shifts the notion of `intelligence' by focusing on the level of assistance an LLM needs to produce accurate outputs. Since LLMs often require extensive context, intelligence in this sense would be defined by their ability to accomplish more with fewer inputs (aligned with Occam's razor). Using different prompts, like the Structured Chain-of-Thought Prompting for Code Generation proposed in~\cite{Li2025}, can considerably increase the quality of LLMs' outputs (particularly when the prediction task is carried out by running a code produced by the LLM), but conversational complexity would flag this prompt complexity increase, preventing LLMs from ``cheating'' on the test by leveraging better prompting techniques. Also, by exploring LLMs in their ``original'' text-like grammar, language-symbolic alternatives such as the one in GSM-Symbolic~\cite{mirzadeh2025} could be combined with the SuperARC testing framework. In that case, by combining the symbolic prompt templates in GSM-Symbolic with SuperARC's robust AIT framework, interesting metrics for measuring the reasoning capabilities of models could be obtained. 

In order to make CTM/BDM useful for botchatting, it would need to invest resources to make it look mundane, almost reversing its super capabilities. An interesting analogy is to Borges Babel's library, LLMs are like a version of its library or produced by all the possible random combinations (as in the original library), the recursive library as introduced in~\cite{zenil2020compression} is the version in which every book could only be recursively generated, one that was causally generated and does not include every possible permutation. If there is any filtering, it happens over a smaller set of only constructive sets, but every word in every book would be meant in the deepest way because it is all connected constructively to some common origin or common history.

\subsubsection{Is the SuperARC a reasonable challenge?}

An argument that could be made is that CTM is a brute-force approach to this problem. However, CTM does not require nearly as much computational resources as the billions of dollars that have been required to train LLMs to begin to deliver complementary results to LLM pattern matching results that can materially improve their predictive power. Furthermore, while CTM is indeed based on a brute force approach and is necessary to guarantee convergence to the purest form of ASI, BDM exploits CTM efficiently as a greedy algorithm by decomposing a problem into smaller pieces. This combination is therefore both powerful and efficient to some extent, leveraging the strengths of both symbolic and neural approaches.

We have proven that the worst-case performance of CTM/BDM is equivalent to a Shannon entropy estimation~\cite{bdm}, on which most, if not all, loss functions and ML kernels are based in some way or another. Consequently, this means that CTM/BDM cannot perform worse than statistical Machine and Deep Learning methods---it can only improve performance from CTM, despite its computational expense, which remains significantly lower in practice than that of Deep Learning or LLMs today.

No credible argument in favour of Neural Networks' efficiency, as opposed to allegedly brute-force approaches, can be made when considering, for example, self-driving cars requiring tens of millions of miles of driving to learn how to operate a car with questionable skills.

CTM may approach impracticality when dealing with high-complexity sequences, but this does not apply to sequences on which LLMs fail. The low and medium complexity sequences include the digits of the mathematical constant $\pi$, or the prime numbers. LLMs may identify prime numbers, yet they fail to generate programs in general other than direct `print'-like statements for even simple sequences---let alone for more complex ones.

For example, if prompted for the next digit in an initial segment of $\pi$, the longer the sequence, the higher the error rate---even when the number is `identified' as $\pi$. Rather than computing the digits using a formula, an LLM must search its training dataset for previously seen sequences and then attempt to reconstruct them. More often than not, this approach fails as the sequence length increases. Notably, however, our tests begin with very short strings, as brief as 11 to 20 digits, and yet LLMs perform poorly, rarely generating the correct computer program or formula that produces the sequence.

Additionally, another interpretation of this benchmark is that new models are not improving over time, strengthening the suspicion that LLMs may have reached a performance plateau~\cite{gary2}. This is due to their inability to generalise beyond specific cases found in their training data. In this paper, we suggest that optimising for the features that enable abstraction from a sequence and allow for next-symbol prediction is fundamental to model creation and planning, which, according to AI researchers and cognitive scientists, are key components in defining intelligence.

A positive perspective is that we propose methods to actually achieve Superintelligence, formally defined by algorithmic probability in AIT as the ultimate method of optimal inference, where for any computable question, the correct computable answer is retrieved.

Regarding objections to brute-force approaches, deep learning and LLMs currently appear far more resource-intensive, as seen in self-driving cars requiring hundreds of millions of miles of training before they are able to operate. The method we propose integrates LLM and Deep Learning technology (which relies on classical information theory, statistics, and certainty) with symbolic computation, a field already capable of narrow Superintelligence, as seen in 
theorem provers.

We believe that optimising this relationship will ultimately lead to Superintelligence.

\subsection{Time Series Library (TSLib)}

TSlib is an open-source library for deep learning researchers, especially
for deep time series analysis. Its authors describe it as a ``neat code base to evaluate advanced deep time series models or develop your own model, which covers five mainstream tasks: long- and short-term
forecasting, imputation, anomaly detection, and classification'' 
~\cite{wu2023timesnet}. It contains a range of several models, with three models considered the most important and highly ranked: iTransformer, TimeMixer, and TimesNet.

iTransfomer is a tranformer that ``simply applies the attention and
feed-forward network on the inverted dimensions where the time
points of individual series are embedded into variate tokens which
are utilised by the attention mechanism to capture multivariate correlations;
meanwhile, the feed-forward network is applied for each variate token
to learn nonlinear representations'' . The authors characterise this model as
``a nice alternative as the fundamental backbone of time series forecasting'' 
~\cite{liu2023itransformer}.

TimeMixer is introduced as a ``fully MLP-based architecture with Past-\\Decomposable- Mixing
(PDM) and Future-Multipredictor-Mixing (FMM) blocks to take full advantage
of disentangled multiscale series in both past extraction and future
prediction phases'' . Roughly speaking PDM applies decomposition
to multiscale series and further mixes the decomposed seasonal and
trend components in fine-to-coarse and coarse-to-fine directions separately,
which successively aggregates the microscopic seasonal and macroscopic
trend information. FMM further assembles multiple predictors to utilise
complementary forecasting capabilities in multiscale observations.
The authors conclude that this model ``is able to achieve consistent
state-of-the-art performances in both long-term and short-term forecasting
tasks with favourable run-time efficiency'' ~\cite{wang2024timemixer}

TimesNet is an analytical method for time series that basically ravels
out the complex temporal variations into the multiple intraperiod-
and interperiod-variations. The authors propose ``the TimesNet with
TimesBlock as a task-general backbone for time series analysis'' .
According to the authors this ``achieves consistent state-of-the-art
in five mainstream time series analysis tasks, including short and
long-term forecasting, imputation, classification, and anomaly detection'' 
~\cite{wu2022timesnet}.

It is worth mentioning that, although replicating the results reported in papers was relatively easy, applying this family of models to different experiments was extremely difficult due to the large number of parameters required for proper adaptation. These parameters are divided into categories such as general configuration, loader settings, definition, sampling, optimisation, and GPU usage.

\subsection{Time Series Analysis with LLMs}

``Empowering Time Series Analysis with Large Language Models: A Survey'' 
~\cite{jiang2024empowering} is a repository that collects and ranks most of the LLMs specialising in analysis, forecasting and prediction in time series.

It is important to say that the LLM modes mentioned in the following sections are mentioned in this repository, because they need  an extended context to work, which
means that they need even hundreds of data points as prompts to make predictions in the short, medium and long term.

We think that such a task relies more on pattern recognition, or statistical regularities instead of compression. Hence, we did not use this type
of model in our forecasting.

\subsection{Chronos}

Chronos is introduced as ``a framework for pre-trained
probabilistic time series models'' ~\cite{ansari2024chronos}.
It uses tokenisation on time series values, scaling and quantisation
into a fixed vocabulary, and trains existing transformer-based language
model architectures on these tokenised time series via  cross-entropy
loss.

Chronos is based on the T5 family (ranging from 20M to 710M parameters)
and trained on a large collection of publicly available datasets,
complemented by a synthetic dataset that we generated via Gaussian
processes to improve generalisation.

Chronos is claimed to  ``significantly outperform other
methods on datasets that were part of the training corpus; and to
have comparable and occasionally superior zero-shot performance on
new datasets, relative to methods that were trained specifically on
them'' ~\cite{ansari2024chronos}

The authors claim that the ``results demonstrate that Chronos models can
leverage time series data from diverse domains to improve zero-shot
accuracy on unseen forecasting tasks, positioning pretrained models
as a viable tool to greatly simplify forecasting pipelines.'' ~\cite{ansari2024chronos}

What is important to note is that Chronos aims to leverage data
from diverse domains to improve forecasting on unseen data, empowered
by synthetic data constructed on the basis of Gaussian processes looking
for generalisation of the normal trends, which is a common strategy
in statistically based methods of forecasting.

The authors claim that their
``models significantly outperform existing local models
and task-specific deep learning baselines in terms of their in-domain
performance'' . Also that ``Chronos models
obtain excellent results on unseen datasets (zero-shot performance),
performing competitively with the best deep-learning baselines trained
on these datasets, while showing promising evidence of further improvements
through fine-tuning'' . Furthermore, they claim that ``the
strong performance of Chronos models suggests that large (by forecasting
standards) pretrained language models can greatly simplify forecasting
pipelines without sacrificing accuracy, offering an inference-only
alternative to the conventional approach involving training and tuning
a model on individual tasks\textquotedbl~\cite{ansari2024chronos}.'' 

\subsection{TimeGPT}

TimeGPT is described as the ``first foundation model for time series,
capable of generating accurate predictions for diverse datasets not
seen during training'' . According to its authors, TimeGPT was evaluated
``against established statistical, machine learning, and deep learning
methods, demonstrating that TimeGPT zero-shot inference excels in
performance, efficiency, and simplicity'' . More interesting is
the fact that they conclude that their approach represents ``access
to precise predictions and reduces uncertainty by leveraging the capabilities
of contemporary advances in deep learning'' ~\cite{garza2023timegpt1}.'' 

An interesting feature is that TimeGPT was extensively compared with the other models used in this experiment~\cite{garza2023timegpt1}, reporting better results.

\subsection{Lag-Llama}

Lag-Llama is introduced as ``a general-purpose foundation model for
univariate probabilistic time series forecasting based on a decoder-only
transformer architecture that uses lags as covariates'' ~\cite{rasul2024lagllama}.'' 

Lag-Llama was pretrained on a ``large corpus of diverse time series
data from several domains'' , and according to its authors ``demonstrate{[}d{]}
strong zero-shot generalisation capabilities compared to a wide range
of forecasting models on downstream datasets across domains'' , showing,
after fine-tuning, achievements that
its authors considered ``state-of-the-art performance, outperforming prior
deep learning approaches, emerging as the best general-purpose model
on average~\cite{rasul2024lagllama}.'' 

\subsection{Interpretation of number of formulae and script generation}

\subsection{Prompts}

The following, are the type of prompts utilised for the prediction of time series in each model:

\begin{enumerate}
\item ``Without any kind of comments, explanation, or additional text, give me a Python program to generate the following list of sequences. One script per sequence. Print them also as a list of scripts in flat ASCII, one per row, separated by commas.''

\item ``Without any kind of comments, explanations, or additional text, give me a formula or a model to generate the following list of sequences. One model or formula per sequence.
Print them also as a list of formulas in flat ASCII, one per row, separated new lines.''

\item ``Without any kind of comments, or explanations, or additional text give me the shortest computer program in any programming language to generate the following list of sequences. One script per sequence. Try hard. Print them also as a list of scripts in flat ASCII, one per row, separated by commas.'' 

\end{enumerate}

\subsubsection{Updates in prompts}

\begin{enumerate}
\item ``Without any kind of comment, or explanations, or additional text provide a formula or a model to generate the following list of sequences. One model or formula per sequence. Print them also as a list of formulas in flat ASCII, one per row, separated by new lines'' 

\item ``For each of the following numeric sequences, please, without any kind of comment, nor explanations nor even text give me more than one script in Python to generate each of them. List all solutions per sequence separated by commas in a single row, for example:
\[``script{1}'' , ``script{2}'' , ...\]
Print them as a list of script lists in flat ASCII, one per row, and for each new sequence create a new list in a new line. If you do not find any program for any of the numeric sequence, write *not found*.'' 

\end{enumerate}

\subsection{Comparative Analysis of LLM Families: Trends, Capabilities, and Performance Evolution}

This section presents a comprehensive analysis of the evolution of major Large Language Model (LLM) families. Unlike standard technical reports that often highlight cherry-picked improvements, this study reveals a concerning trend of degradation in general intelligence, specifically in the ability to generate valid, accurate mathematical scripts and formulae across model generations.

Refer to Figure \ref{fig:evolution_part1} and Figure \ref{fig:evolution_part2} for the chronological plots supporting these critical conclusions.

\subsubsection{Model Characteristics and Claims}
Table \ref{tab:model_characteristics} summarizes the key characteristics and technical claims of the model families evaluated.

\begin{table}[ht]
\centering
\caption{Summary of Evaluated Model Families and Key Characteristics (2024-2025)}
\label{tab:model_characteristics}
\begin{tabular}{|>{\bfseries}l|p{3.5cm}|p{5cm}|p{3cm}|}
\hline
\textbf{Family} & \textbf{Models Evaluated} & \textbf{Key Characteristics \& Claims} & \textbf{References} \\ \hline
OpenAI & GPT-4o, GPT-4o-Mini, o1-Preview, o1-Mini, ChatGPT-5, ChatGPT-5.2 & Claims dominance in ``Reasoning'' (o-series) and deep thought (GPT-5.2). & \cite{OpenAI2024GPT4o, OpenAI2024o1, OpenAI2025GPT5} \\ \hline
Google & Gemini, Gemini 1.5, Gemini 3 Pro & Claims to be a ``Reasoning Powerhouse'' with multimodal native capabilities. & \cite{Google2024Gemini15, Google2025Gemini3} \\ \hline
Anthropic & Claude 3.5, Claude 3.7, Claude 4.5 & Focus on safety and ``Computer Use'' agents. & \cite{Anthropic2024Claude35, Anthropic2025Claude45} \\ \hline
xAI & Grok-3, Grok-4, Grok-4.1 & Emphasizes real-time truth-seeking and uninhibited reasoning. & \cite{xAI2024Grok1, xAI2025Grok4} \\ \hline
Meta & Llama 3, Llama 4 Scout & Open-weights leaders for enterprise deployment. & \cite{Meta2024Llama3, Meta2025Llama4} \\ \hline
Mistral & Mistral Large 2, Mistral Large 3 & Efficiency and reasoning density. & \cite{Mistral2024Large, Mistral2025Large3} \\ \hline
DeepSeek & DeepSeek V2, DeepSeek V3, DeepSeek R1 & Cost-efficiency and MoE innovation. & \cite{DeepSeek2024V2, DeepSeek2025V3} \\ \hline
\end{tabular}
\end{table}

\subsubsection{Performance Evolution: The Reality of Degradation}
We analyze the trajectory of each family by examining \textbf{Accuracy}, \textbf{Equivalence}, and \textbf{Valid Instances}. The data strongly suggests that newer models are not necessarily ``smarter'' but are often ``lazier'' or more restricted, showing a regression in generalisation capabilities.

\subsubsection{OpenAI Family: The Illusion of Progress}
\textbf{Trend:} \textcolor{red}{\textbf{Regression/Severe Degradation in Generalization}}

A critical comparison between the older \textit{ChatGPT-4o} \cite{OpenAI2024GPT4o} and the newer \textit{ChatGPT-5.2} \cite{OpenAI2025GPT5} reveals a stark regression:

\begin{itemize}
    \item \textbf{Scripts - Collapse of Intelligence:} 
    While \textit{ChatGPT-4o} achieved \textbf{100\% Accuracy} in Complexity 1, the newer \textit{ChatGPT-5} plummeted to \textbf{20\% Accuracy}. Furthermore, \textit{Valid Instances} show a consistent downward trend: \textit{ChatGPT-4o-Mini} generated up to \textbf{300} valid instances (albeit with low accuracy), whereas \textit{ChatGPT-5.2} generated as few as \textbf{5} valid instances in Complexity 2.
    
    \item \textbf{The ``Mini'' Trap:} 
    The transition from \textit{4o} to \textit{4o-Mini} showed an exponential decrease in accuracy despite a high volume of ``Pure Math'' attempts. This indicates that while the model tried to be mathematical, it lacked the reasoning depth to be correct.
    
    \item \textbf{Formulae - Incapacity for Complexity:} 
    In the Formulae task, the newer models exhibit a disturbing increase in \textbf{Not Found} instances (e.g., \textit{ChatGPT-5} reaching \textbf{60} Not Found cases in Complexity 2 compared to \textit{4o}'s \textbf{28}). This explicitly demonstrates an incapacity to find complex answers. Comparing \textit{ChatGPT-4o} directly to \textit{ChatGPT-5.2}, we see a massive degradation in \textit{Valid Instances} (from \textbf{62} down to \textbf{22} in Complexity 2), confirming that the ``reasoning'' models are failing to generalize.
\end{itemize}

\subsubsection{Grok Family: From Ambition to Laziness}
\textbf{Trend:} \textcolor{red}{\textbf{Regression/Degradation via Laziness and Overfitting}}

The evolution of the Grok family illustrates a shift from trying to be intelligent to taking the ``easy way out'':

\begin{itemize}
    \item \textbf{Scripts - The ``Print'' Shift:} 
    Early versions like \textit{Grok-3} showed a strong tendency towards \textbf{Pure Math} (60 instances in C1), demonstrating an attempt at algorithmic reasoning. However, \textit{Grok-4} \cite{xAI2025Grok4} shifted significantly towards \textbf{Print} instances (11 in C1, 15 in C2). This shift to ``Print'' represents a naive, silly approach to problem-solving—simply hardcoding the output rather than deriving it. This is a clear marker of degradation.
    
    \item \textbf{Explicit Degradation:} 
    The later versions show an augmentation of \textbf{Not Found} instances, signaling a failure to engage with the problem. Moreover, both \textbf{Accuracy} and \textbf{Valid Instances} are lower in the newest versions compared to the first ones, confirming a regression in capability.
    
    \item \textbf{Formulae - Inverse Correlation:} 
    In Formulae, \textit{Grok-4} appears to recognize \textbf{Known Sequences} better, which suggests overfitting to training data rather than genuine reasoning. Crucially, we observe an \textbf{inverse correlation}: while the count of Pure Math instances increased in some cases, the \textbf{Accuracy} dropped (from \textbf{100\%} in Grok-3 to \textbf{86\%} in Grok-4 for C1).
    
    \item \textbf{Equivalence Collapse:} 
    The \textbf{Equivalence} metric mirrors this degradation perfectly. For Formulae (C1), \textit{Grok-3} achieved \textbf{93.3\%} Equivalence, while \textit{Grok-4} dropped to \textbf{16.7\%}. This proves that even when the model produces an output, its semantic logic is fundamentally broken compared to its predecessor.
\end{itemize}

\subsubsection{Google Family (Gemini)}
\textbf{Trend: \textcolor{red}{Degradation (Valid Instances)}}
\begin{itemize}
    \item \textbf{Analysis:} Despite claims of being a ``Reasoning Powerhouse,'' \textit{Gemini 3 Pro} produces significantly fewer \textbf{Valid Instances} (30 in Formulae C1) compared to the older \textit{Gemini} (60). The model has become restricted, refusing to engage with mathematical tasks that older models handled with ease.
\end{itemize}

\subsubsection{Claude Family: The Paradox of ``Smart'' Degradation}
\textbf{Trend:} \textcolor{red}{\textbf{Sacrificing Accuracy for Complexity}}

Claude presents a unique case of degradation where the model attempts to be ``smarter'' (using Pure Math) but ends up becoming less accurate and less reliable.

\begin{itemize}
    \item \textbf{Scripts - The Complexity Trap:} 
    Comparing early versions like \textit{Claude-3.5} \cite{Anthropic2024Claude35} to the latest \textit{Claude-4.5} \cite{Anthropic2025Claude45}, we see a clear degradation in \textbf{Valid Instances}. While \textit{Claude-3.5} maintained a balance, newer models show an exponential drop in valid outputs (e.g., \textit{Claude-4.5} dropping to just 6 valid instances in Complexity 2). 
    Crucially, the data shows that high accuracy in Claude is often directly correlated with a high number of \textbf{Print} statements. When the model attempts to use \textbf{Pure Math} (as seen in \textit{Claude-3.7} with 169 instances), the accuracy collapses (down to 30\%). This indicates that Claude tries to use sophisticated logic but lacks the competence to execute it correctly, sacrificing accuracy for an attempt at intelligence.

    \item \textbf{Formulae - Overfitting and ``Modesty'':} 
    In Formulae, the slight increase in accuracy for newer models is misleading. It correlates strongly with an increase in \textbf{Known Sequences}, suggesting the improvement comes from overfitting to richer training datasets rather than genuine reasoning.
    Simultaneously, we observe a massive increase in \textbf{Not Found} instances (especially in models like \textit{Claude-Sonnet-4}). While this could be interpreted as ``honesty'' or modesty—admitting it cannot solve the problem—it ultimately represents a degradation in capability. The model is either overfitting (Known Sequences) or giving up (Not Found), rather than solving novel problems with general intelligence.
\end{itemize}

\begin{figure}[htbp]
    \centering
    \includegraphics[width=1.0\textwidth]{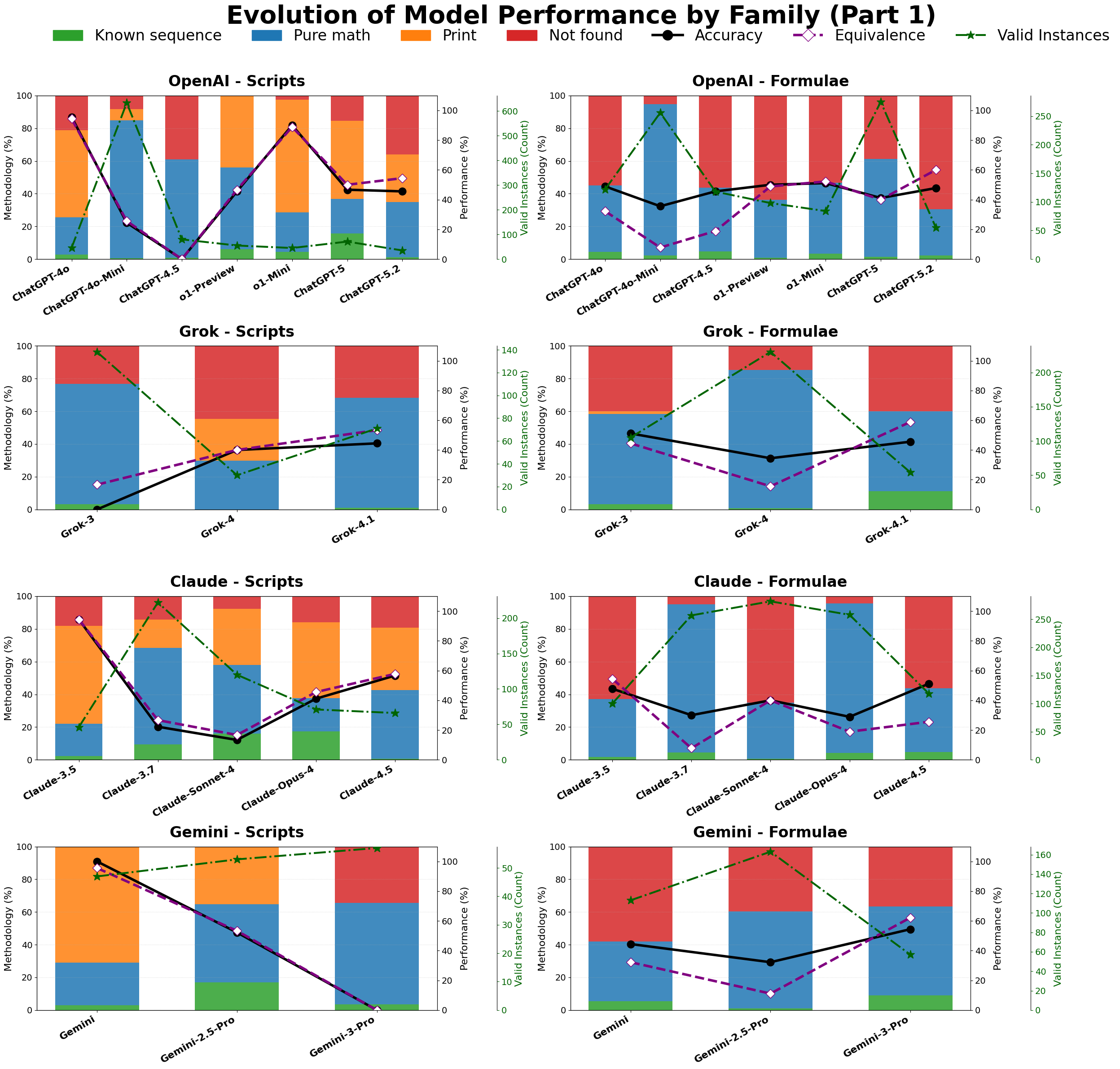}
    \caption{Evolution of Model Performance (Part 1): OpenAI, Grok, Google, Claude. Note the degradation trends in Valid Instances and Accuracy across generations.}
    \label{fig:evolution_part1}
\end{figure}

\subsubsection{Other Families (Llama, Mistral, DeepSeek)}
\textbf{Trend:} \textcolor{blue}{Mixed/Specialized}
\begin{itemize}
    \item \textbf{DeepSeek:} \textit{DeepSeek-R1} rivals proprietary models with 98\% Valid Instances, showing that open-weight models are catching up while closed models regress.
\end{itemize}

\begin{figure}[htbp]
    \centering
    \includegraphics[width=1.0\textwidth]{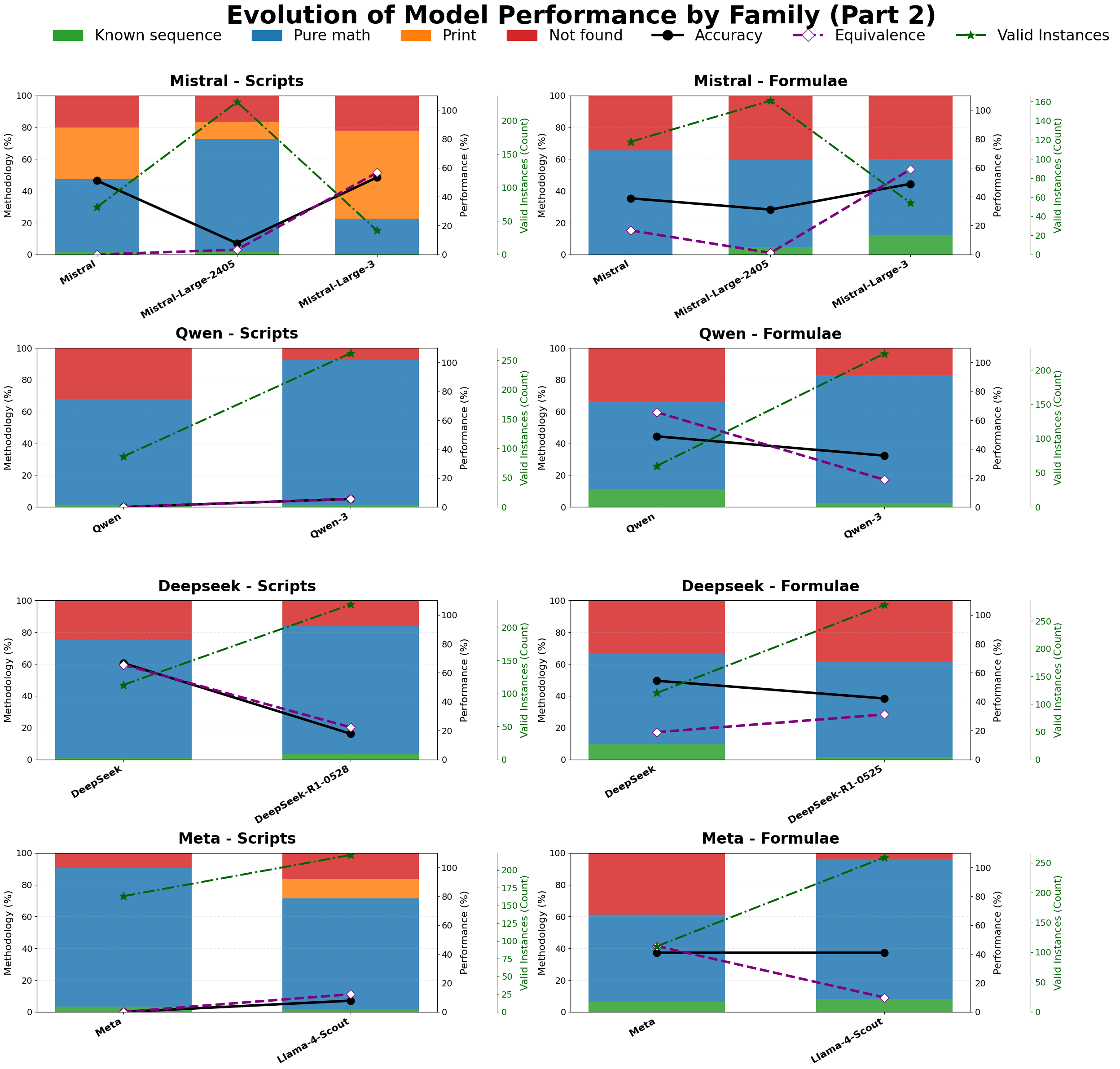}
    \caption{Evolution of Model Performance (Part 2): Mistral, DeepSeek, Meta, Qwen. Comparing open-weights vs proprietary model evolution.}
    \label{fig:evolution_part2}
\end{figure}

\subsubsection{Summary of Evolution}
Table \ref{tab:evolution_summary} re-evaluates the evolutionary trends based on this critical analysis.

\begin{table}[ht]
\centering
\caption{Evolutionary Trend Summary: Improvement vs. Regression/Degradation}
\label{tab:evolution_summary}
\begin{tabular}{|p{3cm}|p{3cm}|p{6cm}|}
\hline
\textbf{Family} & \textbf{Overall Trend} & \textbf{Key Observation} \\ \hline
OpenAI & \textcolor{red}{Regression} & Collapse in generalization; newer models are ``dumber'' on scripts. \\ \hline
xAI (Grok) & \textcolor{red}{Regression} & Shift from Math to ``Lazy'' Prints; inverse accuracy correlation. \\ \hline
Google (Gemini) & \textcolor{red}{Regression} & Severe drop in valid instances; restricted output. \\ \hline
Claude & \textcolor{red}{Regression} & ``Smart'' but error-prone; reliability sacrificed for complexity. \\ \hline
Mistral & \textcolor{red}{Regression} & Accuracy via ``Prints''; Valid Instances collapse. \\ \hline
DeepSeek & \textcolor{red}{Regression} & Increased output volume but collapsed accuracy. \\ \hline
Meta (Llama) & \textcolor{red}{Regression} & Stagnant accuracy despite more ``math''; reliance on Prints. \\ \hline
Qwen & \textcolor{blue}{Neutral} & Trying to ``think'' more (Pure Math); stable accuracy. \\ \hline
\end{tabular}
\end{table}

\subsubsection{Sample of Sequences Testing Set}

The following is a sample test for testing purposes used throughout the paper:

\subsubsection{List of `climbers'}
\label{climbers}

\begin{tabular}{c}
\small{
0, 0, 0, 0, 0, 0, 0} \\ \hline
\small{0, 0, 0, 0, 0, 0, 0} \\ \hline
\small{0, 0, 0, 0, 1, 0, 0} \\ \hline
\small{0, 0, 0, 0, 0, 1, 0, 0} \\ \hline
\small{0, 0, 0, 0, 0, 0, 1, 1} \\ \hline
\small{0, 0, 0, 0, 0, 0, 0, 1} \\ \hline
\small{0, 0, 0, 0, 0, 0, 0, 1} \\ \hline
\small{0, 0, 0, 1, 1, 0, 0, 0} \\ \hline
\small{0, 0, 1, 0, 0, 0, 0, 0} \\ \hline
\small{0, 1, 0, 1, 0, 1, 0, 1} \\ \hline
\small{0, 0, 0, 0, 1, 1, 1, 0} \\ \hline
\small{0, 0, 0, 0, 0, 0, 0, 1, 0} \\ \hline
\small{0, 0, 0, 0, 0, 0, 0, 0, 1} \\ \hline
\small{0, 0, 0, 0, 0, 0, 0, 0, 1} \\ \hline
\small{0, 0, 0, 0, 0, 1, 1, 0, 1} \\ \hline
\small{0, 1, 0, 1, 0, 1, 0, 1, 0} \\ \hline
\small{0, 0, 1, 0, 1, 0, 1, 0, 1} \\ \hline
\small{0, 1, 1, 0, 1, 1, 0, 1, 1} \\ \hline
\small{0, 0, 0, 0, 0, 0, 0, 0, 0} \\ \hline
\small{0, 0, 1, 0, 1, 0, 1, 1, 0} \\ \hline
\small{0, 1, 0, 1, 0, 0, 1, 0, 0} \\ \hline
\small{0, 1, 0, 1, 0, 1, 1, 0, 1} \\ \hline
\small{0, 0, 0, 1, 0, 1, 0, 1, 0, 1} \\ \hline
\small{0, 1, 0, 1, 0, 1, 0, 1, 0, 1}
\end{tabular}

\subsubsection{Example testing and validation sets of binary sequences} \label{binseqs}

\begin{table}[H]
\centering
\adjustbox{max width=\textwidth}{
\begin{tabular}{c|c|c|c}
1, 1, 0, 0, 0, 1, 0, 0, 0, 0, 1 & 0, 0, 0, 1, 0, 1, 1, 0, 1, 0, 1 & 1, 0, 0, 0, 0, 1, 1, 1, 1, 0, 0 & 1, 1, 1, 1, 1, 1, 1, 1, 0, 1, 0 \\ \hline
1, 0, 1, 1, 0, 1, 0, 1, 0, 1, 0 & 1, 1, 1, 0, 0, 1, 1, 0, 1, 1, 0 & 0, 0, 1, 1, 1, 0, 0, 1, 0, 1, 1 & 1, 1, 1, 0, 0, 0, 0, 0, 0, 0, 0 \\ \hline
0, 1, 1, 1, 1, 0, 1, 0, 1, 1, 1 & 1, 1, 0, 0, 0, 1, 1, 1, 0, 1, 1 & 1, 0, 0, 1, 1, 0, 1, 0, 0, 0, 1 & 0, 1, 1, 1, 0, 1, 1, 0, 1, 1, 0 \\ \hline
1, 0, 1, 0, 0, 1, 0, 1, 0, 1, 1 & 0, 1, 1, 0, 0, 1, 1, 1, 1, 0, 0 & 0, 0, 1, 1, 1, 1, 1, 0, 1, 1, 1 & 1, 1, 1, 0, 1, 1, 1, 0, 0, 1, 0 \\ \hline
1, 0, 1, 0, 1, 0, 0, 0, 0, 0, 0 & 0, 0, 1, 1, 1, 1, 1, 1, 0, 1, 1 & 1, 1, 0, 0, 0, 1, 0, 1, 0, 1, 0 & 0, 0, 0, 0, 0, 1, 0, 1, 1, 0, 1 \\ \hline
0, 0, 1, 1, 1, 1, 1, 1, 0, 0, 1 & 0, 1, 0, 0, 1, 0, 0, 1, 0, 1, 1 & 1, 0, 0, 1, 1, 0, 0, 0, 0, 0, 0 & 1, 1, 0, 1, 1, 1, 0, 0, 1, 1, 1 \\ \hline
0, 0, 0, 0, 0, 0, 0, 1, 0, 0, 0 & 0, 1, 0, 0, 1, 1, 1, 1, 1, 1, 0 & 0, 0, 1, 0, 0, 0, 1, 0, 0, 0, 0 & 1, 0, 0, 0, 1, 1, 0, 0, 1, 1, 0 \\ \hline
1, 1, 0, 0, 1, 0, 0, 0, 0, 0, 0 & 1, 0, 1, 0, 1, 0, 1, 0, 0, 1, 1 & 1, 0, 1, 1, 0, 1, 1, 1, 1, 1, 0 & 0, 0, 0, 0, 0, 0, 1, 0, 0, 0, 0 \\ \hline
0, 1, 1, 0, 0, 0, 0, 0, 0, 0, 1 & 0, 0, 1, 1, 0, 1, 1, 0, 1, 1, 0 & 0, 0, 0, 0, 1, 1, 1, 1, 1, 0, 1 & 1, 1, 1, 0, 0, 0, 0, 1, 1, 1, 1 \\ \hline
1, 0, 1, 0, 0, 1, 1, 1, 1, 1, 1 & 0, 1, 0, 0, 0, 0, 0, 0, 1, 0, 1 & 0, 1, 0, 0, 0, 1, 0, 1, 1, 1, 0 & 1, 0, 1, 0, 1, 1, 1, 0, 1, 0, 1 \\ \hline
1, 0, 0, 0, 0, 1, 0, 0, 1, 1, 1 & 1, 1, 0, 0, 0, 0, 1, 0, 0, 0, 1 & 1, 0, 1, 0, 0, 0, 1, 0, 1, 0, 0 & 0, 0, 1, 1, 1, 1, 0, 0, 1, 0, 1 \\ \hline
0, 0, 0, 0, 0, 0, 1, 0, 0, 0, 1 & 1, 1, 1, 1, 0, 0, 1, 1, 1, 0, 0 & 1, 0, 1, 0, 1, 1, 0, 1, 0, 1, 1 & 1, 0, 0, 1, 0, 1, 1, 1, 1, 1, 1 \\ \hline
1, 1, 1, 1, 1, 0, 0, 0, 1, 1, 1 & 0, 0, 0, 1, 0, 1, 1, 1, 0, 0, 1 & 0, 0, 1, 0, 1, 0, 0, 1, 0, 0, 0 & 1, 1, 1, 0, 0, 1, 1, 1, 0, 1, 1 \\ \hline
1, 0, 0, 1, 1, 1, 0, 1, 1, 1, 1 & 1, 0, 0, 1, 0, 0, 1, 1, 0, 0, 1 & 0, 0, 0, 0, 0, 0, 1, 1, 0, 1, 1 & 1, 0, 0, 1, 1, 0, 0, 1, 1, 1, 1 \\ \hline
1, 1, 1, 1, 1, 0, 0, 1, 0, 0, 0 & 0, 1, 1, 1, 0, 1, 1, 1, 1, 1, 1 & 1, 0, 1, 0, 0, 1, 0, 0, 1, 1, 0 & 0, 1, 0, 0, 0, 1, 1, 0, 0, 1, 1 \\ \hline
0, 0, 0, 0, 0, 1, 1, 0, 1, 1, 0 & 0, 1, 0, 0, 1, 0, 0, 1, 0, 0, 1 & 1, 1, 1, 0, 0, 1, 1, 1, 1, 1, 0 & 0, 1, 0, 1, 0, 1, 0, 0, 1, 1, 0 \\ \hline
1, 1, 1, 0, 1, 0, 1, 0, 0, 0, 0 & 1, 0, 0, 0, 1, 0, 1, 1, 1, 1, 0 & 1, 0, 1, 0, 0, 0, 1, 0, 1, 1, 0 & 0, 0, 1, 0, 0, 1, 0, 1, 1, 0, 1 \\ \hline
1, 0, 0, 1, 0, 1, 1, 0, 0, 1, 1 & 1, 1, 0, 0, 1, 1, 1, 0, 0, 1, 0 & 0, 0, 1, 0, 1, 1, 1, 0, 0, 1, 1 & 0, 1, 1, 1, 0, 1, 1, 0, 1, 1, 1 \\ \hline
1, 0, 0, 0, 0, 1, 0, 0, 1, 0, 1 & 0, 1, 1, 1, 1, 0, 0, 1, 0, 0, 0 & 0, 0, 0, 0, 0, 1, 1, 1, 0, 1, 0 & 0, 1, 0, 1, 0, 1, 1, 0, 0, 1, 1 \\ \hline
0, 0, 0, 1, 0, 0, 0, 0, 1, 0, 1 & 0, 1, 1, 0, 0, 0, 0, 0, 1, 1, 0 & 1, 0, 0, 1, 0, 1, 0, 1, 1, 0, 0 & 0, 1, 0, 1, 1, 1, 1, 1, 0, 1, 0 \\ \hline
1, 0, 0, 1, 0, 0, 1, 0, 1, 1, 0 & 0, 1, 0, 1, 0, 1, 0, 1, 1, 1, 0 & 0, 0, 1, 0, 0, 1, 1, 0, 1, 0, 0 & 1, 1, 0, 0, 0, 0, 1, 1, 1, 1, 1 \\ \hline
1, 0, 1, 1, 0, 0, 0, 1, 0, 0, 1 & 0, 1, 1, 1, 0, 0, 1, 1, 0, 0, 0 & 1, 0, 1, 1, 1, 0, 1, 0, 1, 0, 1 & 0, 1, 0, 1, 0, 0, 0, 0, 0, 0, 1 \\ \hline
0, 1, 0, 0, 0, 1, 1, 0, 1, 0, 0 & 0, 0, 1, 0, 1, 0, 0, 0, 1, 1, 1 & 1, 0, 1, 1, 1, 0, 1, 1, 1, 0, 1 & 1, 1, 1, 1, 0, 1, 0, 1, 0, 0, 1 \\ \hline
0, 0, 1, 1, 0, 0, 1, 0, 1, 0, 1 & 0, 1, 1, 0, 1, 0, 1, 0, 1, 1, 1 & 1, 0, 1, 0, 1, 0, 1, 1, 0, 0, 1 & 1, 0, 1, 1, 1, 1, 1, 0, 1, 1, 1 \\ \hline
0, 1, 0, 1, 0, 0, 1, 1, 0, 1, 0 & 0, 0, 0, 1, 0, 1, 0, 0, 1, 1, 1 & 1, 1, 1, 0, 0, 0, 1, 1, 1, 1, 1 & 1, 0, 0, 0, 1, 1, 1, 0, 1, 1, 0 
\end{tabular}
}
\end{table}

\subsubsection{Example testing set of integer sequences} \label{intseqs}

\begin{table}[H]
\centering
\adjustbox{max width=\textwidth}{
\begin{tabular}{c|c|c}
\textbf{Complexity 1} & \textbf{Complexity 2} & \textbf{Complexity 3} \\ \hline
2, 4, 6, 8, 10, 12, 14, 16, 18, 20 & 2, 3, 5, 7, 11, 13, 17, 19, 23, 29 & 29, 57, 68, 120, 134, 140, 173, 197, 283, 313 \\ \hline
3, 6, 9, 12, 15, 18, 21, 24, 27, 30 & 1, 1, 2, 3, 5, 8, 13, 21, 34, 55 & 24, 26, 36, 40, 184, 226, 244, 384, 391, 423 \\ \hline
4, 8, 12, 16, 20, 24, 28, 32, 36, 40 & 1, 2, 4, 8, 16, 32, 64, 128, 256, 512 & 90, 203, 212, 235, 270, 324, 342, 352, 371, 417 \\ \hline
5, 10, 15, 20, 25, 30, 35, 40, 45, 50 & 1, 3, 9, 27, 81, 243, 729, 2187, 6561, 19683 & 20, 48, 95, 234, 282, 296, 352, 402, 428, 481 \\ \hline
6, 12, 18, 24, 30, 36, 42, 48, 54, 60 & 1, 4, 9, 16, 25, 36, 49, 64, 81, 100 & 62, 98, 130, 154, 290, 315, 324, 385, 408, 447 \\ \hline
7, 14, 21, 28, 35, 42, 49, 56, 63, 70 & 1, 8, 27, 64, 125, 216, 343, 512, 729, 1000 & 2, 42, 66, 102, 153, 195, 201, 252, 306, 396 \\ \hline
8, 16, 24, 32, 40, 48, 56, 64, 72, 80 & 1, 1, 2, 6, 24, 120, 720, 5040, 40320, 362880 & 128, 151, 153, 217, 224, 332, 382, 400, 450, 478 \\ \hline
9, 18, 27, 36, 45, 54, 63, 72, 81, 90 & 1, 3, 6, 10, 15, 21, 28, 36, 45, 55 & 26, 50, 114, 148, 160, 170, 274, 347, 432, 497 \\ \hline
10, 20, 30, 40, 50, 60, 70, 80, 90, 100 & 2, 1, 3, 4, 7, 11, 18, 29, 47, 76 & 48, 94, 176, 177, 219, 276, 282, 283, 459, 488 \\ \hline
1, 3, 5, 7, 9, 11, 13, 15, 17, 19 & 0, 1, 2, 5, 12, 29, 70, 169, 408, 985 & 139, 252, 272, 281, 304, 361, 370, 415, 438, 500 \\ \hline
2, 4, 6, 8, 10, 12, 14, 16, 18, 20 & 1, 4, 27, 256, 3125, 46656, 823543, 16777216, 387420489, 10000000000 & 15, 95, 115, 195, 240, 318, 326, 350, 432, 450 \\ \hline
11, 12, 13, 14, 15, 16, 17, 18, 19, 20 & 1, 2, 6, 20, 70, 252, 924, 3432, 12870, 48620 & 134, 224, 293, 378, 379, 395, 434, 451, 482, 496 \\ \hline
21, 22, 23, 24, 25, 26, 27, 28, 29, 30 & 2, 3, 5, 7, 11, 13, 17, 19, 23, 29 & 23, 93, 142, 145, 245, 266, 296, 317, 428, 495 \\ \hline
31, 32, 33, 34, 35, 36, 37, 38, 39, 40 & 4, 6, 9, 10, 14, 15, 21, 22, 25, 26 & 18, 39, 71, 194, 197, 219, 263, 270, 416, 473 \\ \hline
41, 42, 43, 44, 45, 46, 47, 48, 49, 50 & 1, 10, 11, 100, 101, 110, 111, 1000, 1001, 1010 & 9, 84, 144, 170, 325, 393, 401, 405, 435, 497 \\ \hline
51, 52, 53, 54, 55, 56, 57, 58, 59, 60 & 0, 1, 81, 512, 2401, 4913, 5832, 17576, 19683, 234256 & 26, 40, 202, 267, 282, 340, 359, 408, 410, 495 \\ \hline
61, 62, 63, 64, 65, 66, 67, 68, 69, 70 & 1, 2, 145, 40585 & 34, 92, 164, 165, 209, 296, 414, 456, 467, 494 \\ \hline
71, 72, 73, 74, 75, 76, 77, 78, 79, 80 & 2, 5, 12, 20, 29, 39, 50, 62, 75, 89 & 16, 119, 121, 123, 135, 139, 285, 311, 409, 412 \\ \hline
81, 82, 83, 84, 85, 86, 87, 88, 89, 90 & 1, 8, 10, 18, 19, 100, 101, 108, 109, 110 & 8, 11, 12, 103, 116, 196, 247, 254, 389, 427 \\ \hline
91, 92, 93, 94, 95, 96, 97, 98, 99, 100 & 3, 7, 31, 127, 2047, 8191, 131071, 524287, 8388607, 536870911 & 12, 36, 96, 119, 171, 213, 221, 232, 363, 451 \\ \hline
101, 102, 103, 104, 105, 106, 107, 108, 109, 110 & 1, 2, 4, 8, 16, 23, 28, 38, 58, 89 & 38, 91, 142, 197, 215, 313, 316, 319, 423, 466 \\ \hline
111, 112, 113, 114, 115, 116, 117, 118, 119, 120 & 1, 2, 4, 8, 15, 26, 42, 64, 93, 129 & 7, 42, 147, 201, 213, 248, 310, 332, 436, 479 \\ \hline
121, 122, 123, 124, 125, 126, 127, 128, 129, 130 & 1, 5, 12, 22, 35, 51, 70, 92, 117, 145 & 27, 101, 105, 164, 245, 290, 304, 441, 449, 490 \\ \hline
131, 132, 133, 134, 135, 136, 137, 138, 139, 140 & 0, 1, 1, 2, 1, 2, 2, 3, 1, 3 & 4, 11, 29, 106, 214, 283, 296, 298, 360, 497 \\ \hline
141, 142, 143, 144, 145, 146, 147, 148, 149, 150 & 1, 2, 5, 15, 52, 203, 877, 4140, 21147, 115975 & 72, 106, 139, 165, 171, 192, 199, 429, 453, 477 \\ \hline
151, 152, 153, 154, 155, 156, 157, 158, 159, 160 & 2, 3, 5, 7, 11, 13, 17, 19, 23, 29 & 187, 218, 260, 295, 301, 314, 379, 410, 452, 469 \\ \hline
161, 162, 163, 164, 165, 166, 167, 168, 169, 170 & 1, 11, 21, 1211, 111221 & 29, 63, 95, 140, 150, 190, 221, 437, 482, 491 \\ \hline
171, 172, 173, 174, 175, 176, 177, 178, 179, 180 & 2, 3, 5, 7, 11, 13, 17, 19, 23, 29 & 3, 11, 84, 144, 156, 177, 188, 199, 229, 284 \\ \hline
181, 182, 183, 184, 185, 186, 187, 188, 189, 190 & 1, 2, 4, 8, 16, 32, 64, 128, 256, 512 & 26, 94, 98, 137, 176, 301, 323, 330, 372, 444 \\ \hline
191, 192, 193, 194, 195, 196, 197, 198, 199, 200 & 1, 3, 7, 15, 31, 63, 127, 255, 511, 1023 & 39, 81, 88, 210, 215, 378, 416, 430, 439, 490
\end{tabular}
}
\end{table}


\subsection{Practical Applications and Integration into AI Development}

\subsubsection{SuperARC as a Development Tool}

SuperARC is designed not merely as a benchmark for publication leaderboards, but as a diagnostic tool for AI development pipelines:
\begin{itemize}
    \item Phase 1 - Architecture Design: During model architecture exploration, SuperARC scores provide early signals of genuine generalisation capability. Unlike human-centric benchmarks that may show improvement through memorisation of larger training sets, SuperARC performance improvement indicates enhanced algorithmic reasoning capacity.
    \item Phase 2 - Training Monitoring: We recommend tracking SuperARC performance throughout training alongside traditional metrics. Divergence patterns reveal critical information:
    \begin{itemize}
        \item Improving human-centric scores and stable/improving SuperARC: suggests healthy learning;
        \item Improving human-centric scores and degrading SuperARC: suggests increasing memorisation bias;
        \item Both degrading: suggests fundamental training instability.
    \end{itemize}
    \item Phase 3 - Model Selection: When choosing between model candidates, SuperARC provides an orthogonal evaluation dimension. A model with slightly lower human-centric performance scores but superior SuperARC performance may be preferable for applications requiring reasoning beyond training distribution (e.g., scientific discovery, mathematical problem-solving, code synthesis for novel tasks).
\end{itemize}

\subsubsection{Implications for Training Paradigms}

Our findings that LLMs show fragility and regression despite scale increases suggest current training paradigms are insufficient for AGI-level capabilities. The current paradigm is that by scaling data and, therefore, parameters, an improvement is expected in benchmarks. On the other hand, SuperARC reveals that more data coupled with more parameters may lead to better pattern matching, which is fundamentally different from better reasoning.

Based on the SuperARC framework, some shifts are recommended:

\begin{itemize}
    \item Synthetic data integration: Incorporate algorithmically generated sequences with known complexity measures into training corpora;
    \item Hybrid architectures: Our neurosymbolic baseline's success suggests combining neural pattern recognition with explicit symbolic reasoning modules;
    \item Curriculum complexity: Structure training to progressively increase algorithmic complexity (Kolmogorov complexity) rather than just data volume;
    \item Evaluation-driven development: Use SuperARC regression as a stopping criterion or trigger for training procedure modification.
\end{itemize}

\subsubsection{Cost-Benefit Analysis for Adoption}

Regarding the cost-benefit of adopting SuperARC, we argue that the benefits are potentially high, with negligible additional costs. This comes from the fact that sequences adherent to SuperARC can be generated algorithmically at minimal cost and the whole framework evaluation requires standard inference infrastructure. 
Furthermore, open-source reference implementations are hereby provided.

By incorporating SuperARC in standard training pipelines, it will be possible to early detect memorisation biases and avoid costly training runs that improve benchmarks but not fundamental capabilities. 
Additionally, the framework could guide architecture decisions with additional signal beyond parameter efficiency. Overall, organizations currently investing billions in model training could allocate <1\% of computational budget to continuous SuperARC evaluation, potentially saving resources by identifying unproductive scaling directions earlier.

\subsubsection{Interpreting SuperARC Performance: A Continuous Metric, Not a Pass/Fail Test}

A critical question arises: what would it mean for a system to ``pass'' SuperARC, and at what point (ten tasks, one hundred tasks, one thousand tasks) would such passing occur? This question reveals a fundamental aspect of our framework that requires explicit clarification. SuperARC is not a test to be passed or failed with a fixed threshold, but rather a continuous metric to be used alongside other pillars of intelligence assessment, forming part of a multidimensional evaluation framework.

The framework is more analogous to established continuous measures like mean squared error in regression, perplexity in language modeling, or classification accuracy in computer vision. Just as there is no universal threshold at which mean squared error becomes ``acceptable'' (i.e., the metric must be as low as possible and interpreted in context relative to baselines, alternative approaches, and task requirements), SuperARC performance must be understood comparatively rather than absolutely. A mean squared error of 0.5 tells us nothing in isolation but becomes meaningful when compared to the irreducible noise level in the data, to the performance of alternative models, or to the requirements of downstream applications. Similarly, a model achieving sixty percent accuracy on sequences of moderate algorithmic complexity carries little information on its own.

SuperARC performance becomes interpretable through several forms of contextual comparison. First, comparison to human performance on identical sequences provides calibration (if humans achieve seventy-five percent accuracy on a set of sequences while a model achieves only forty percent, this suggests the model lacks reasoning capabilities that humans routinely employ). Second, comparison across versions of the same model architecture reveals whether development is progressing toward or regressing from algorithmic competence. Our findings for ChatGPT, for example, shows that version 5 regressed when compared to version 4.5, while improving on traditional benchmarks. This becomes meaningful precisely through this temporal comparison. Third, comparison to alternative architectural approaches indicates whether observed limitations are fundamental to current paradigms or reflect specific implementation choices. The neurosymbolic baseline we present, which achieves near-perfect performance on constrained sequence classes, provides such a comparison point by demonstrating that strong SuperARC performance is achievable in principle with appropriate architectural commitments.

Regarding task coverage, a similar context-dependent and fundamentally open-ended situation occurs. Our current test suite samples from different algorithmic pattern classes including arithmetic progressions, recursive definitions, compositional rules, and nested structures. However, the infinite space of possible algorithms means no finite test set provides complete coverage in any absolute sense. This mirrors challenges in other domains: computer vision researchers spent years investigating which image classification benchmarks best predict performance on downstream tasks, gradually discovering through empirical investigation that performance on carefully curated datasets like ImageNet correlates with broader visual reasoning capabilities. Similarly, more research is needed to map the relationship between performance on our current SuperARC test suite and performance across the broader space of algorithmic reasoning tasks. Future work should investigate which algorithmic pattern classes are most predictive of general algorithmic competence, how performance generalizes across different complexity regimes, and what sample size provides stable estimates of reasoning capability.

Critically, even perfect performance on an arbitrarily large SuperARC test suite would not indicate general intelligence or superintelligence. Such performance would demonstrate robust algorithmic reasoning capability within formal domains, but would say nothing about social intelligence, embodied cognition, common sense understanding, goal formation, value alignment, or robustness to real-world distribution shifts. A system could theoretically achieve perfect SuperARC scores while completely lacking the ability to navigate physical environments, understand human emotions, form appropriate goals, or reason about everyday situations that humans handle effortlessly. Conversely, a system that fails SuperARC demonstrates a critical gap in algorithmic abstraction capability, which calls into question claims of general intelligence even if the system performs well on conversational or knowledge-retrieval tasks.

SuperARC thus functions primarily as a necessary-condition test rather than a sufficient-condition test. Poor performance provides strong evidence against claims of advanced algorithmic reasoning capability, while strong performance is necessary but far from sufficient for claims of general intelligence. 
This asymmetry is deliberate and valuable. 
In the current landscape where AI systems frequently achieve impressive performance on human-centric benchmarks while their fundamental reasoning capabilities remain unclear, a tool that can rule out deep algorithmic competence serves an important function. 
A complete assessment of general or superintelligence would require SuperARC alongside complementary metrics measuring social reasoning, embodied cognition, common sense understanding, causal reasoning, goal formation, value alignment, and numerous other dimensions we do not attempt to capture.

We emphasize that interpreting SuperARC as a continuous, contextual metric rather than a pass/fail test has important implications for how the benchmark should be used in practice. Researchers should report detailed performance breakdowns across complexity levels and pattern types rather than single summary statistics. Comparisons should always include relevant baselines such as human performance (just like the ARC challenge \cite{Chollet2019MeasureIntelligence}), previous model versions, alternative architectures, and theoretical limits where available. Claims about model capabilities should be carefully scoped to the specific dimension measured rather than extrapolated to general intelligence. Most importantly, SuperARC scores should be presented as one data point within a broader capability profile, not as a comprehensive assessment of intelligence.

\subsection{Implications for AI Policy and Governance}

\subsubsection{SuperARC and the AGI Assessment Challenge}

As AI systems approach and potentially exceed human-level performance on specific benchmarks, policymakers face a critical question: How to distinguish genuinely general intelligence from narrow systems optimised for human-centric tasks? This distinction carries serious implications for several actions, such as proposing safety protocols and oversight requirements, managing resource allocation in AI safety research, creating public communication about AI capabilities and risks and also building international coordination on transformative AI.

SuperARC addresses these challenges by providing a human-agnostic, open-ended assessment framework grounded in algorithmic information theory rather than human performance norms.

\subsubsection{Beyond Benchmark Gaming}\label{sectionBenchContami}

A critical vulnerability in current AI governance discussions is the reliance on benchmarks that can be ``solved'' through data contamination or targeted optimisation. Our findings reveal this problem empirically:

\begin{itemize}
    \item Model regression on SuperARC despite benchmark improvement: The ChatGPT case is emblematic, where newer versions (5 vs 4.5) showed benchmark improvement but SuperARC regression, suggesting apparent progress may mask fundamental limitations;
    \item Memorisation vs. reasoning gap: Current benchmarks increasingly measure memorisation of human knowledge rather than reasoning capacity;
    \item Transparency deficit: Without algorithmic reasoning assessment, stakeholders cannot distinguish models that truly understand from those that mimic understanding.
\end{itemize}

Based on these findings, regulatory frameworks should require dual assessment: human-centric benchmarks for practical capability measurement and algorithmic benchmarks (like SuperARC) for fundamental reasoning assessment.

\subsubsection{Capability-Based Governance Triggers}

Current AI policy proposals often use compute thresholds or parameter counts as triggers for enhanced oversight. SuperARC suggests an alternative or complementary approach based on demonstrated capabilities. For example, a simple model oversight ranking would be:

\begin{itemize}
    \item Tier 1 (current LLMs): Human-benchmark proficiency without algorithmic generalization, implying standard deployment protocols;
    \item Tier 2 (emerging systems): Combined human-benchmark and SuperARC proficiency, implying the need for enhanced monitoring and safety testing;
    \item Tier 3 (hypothetical AGI): Superhuman performance on both dimensions, implying the need for maximum scrutiny and safety protocols.
\end{itemize}

This approach focuses governance on demonstrated capabilities rather than proxy measures (compute, parameters), directly addressing the risks policymakers actually care about.

\subsubsection{International Coordination and Standards}

SuperARC's human-agnostic nature makes it particularly suitable for international AI governance coordination, as unlike benchmarks based on human knowledge (which may reflect cultural biases), algorithmic reasoning is culturally invariant. 
Moreover, binary sequences and mathematical structures transcend linguistic barriers while being objectively verifiable, facilitating international agreement on capability assessments. All these features of SuperARC can be continuously updated and enhanced, since the infinite hierarchy of algorithmic complexity enables creating arbitrarily difficult test suites. 

Thus, we propose SuperARC as a candidate foundation for international AI capability assessment standards, complementing region-specific practical benchmarks.

\subsubsection{Addressing the ``Perception of Mastery'' Problem}

Our conclusion that current LLMs are tools optimised for the perception of mastery over human language has direct policy implications. If systems appear highly capable on human-facing tasks while lacking fundamental reasoning abilities, public and policymaker perceptions may not reflect actual capabilities, leading to either over- or under-regulation. 
In addition, systems may pass safety evaluations based on human-centric criteria while possessing unknown failure modes in algorithmic reasoning domains. These hidden issues may lead to investments flowing toward apparent capability (benchmark performance) rather than genuine capability (algorithmic reasoning), which is a major problem especially when public resources are employed to support training and model development tasks.

\subsubsection{Research Priorities}

Our findings suggest policy support should prioritize research into architectures that combine neural and symbolic reasoning (as demonstrated by our baseline). 
Furthermore, both the theoretical and empirical evidences presented indicate the need to develop training paradigms that enhance algorithmic generalisation as well as investigating why scaling current approaches improves mimicry but not reasoning. Based on these findings, funding mechanisms could explicitly require grantees to report both human-centric and algorithmic reasoning metrics, incentivizing balanced progress.

\end{document}